\documentclass[11pt]{article}

\usepackage[a4paper,margin=1in]{geometry}
\usepackage{amsmath,amsthm,amssymb}
\usepackage[round,authoryear]{natbib}
\usepackage{graphicx}
\usepackage{booktabs}
\usepackage{microtype}
\usepackage{xcolor}
\usepackage{listings}
\usepackage{tikz}
\usepackage{pgfplots}
\usepackage{pgfplotstable}
\pgfplotsset{compat=1.18}
\usetikzlibrary{arrows.meta,positioning,shapes.geometric,fit}
\usepackage[colorlinks=true,linkcolor=black,citecolor=black,urlcolor=black]{hyperref}

\definecolor{cInk}{HTML}{1A1A1A}    
\definecolor{cMute}{HTML}{8A8A8A}   
\definecolor{cGold}{HTML}{B8860B}   
\definecolor{cGoldDeep}{HTML}{8C6508}
\definecolor{cRule}{HTML}{D9C28A}

\theoremstyle{plain}
\newtheorem{theorem}{Theorem}
\newtheorem{lemma}{Lemma}
\newtheorem{proposition}{Proposition}
\newtheorem{corollary}{Corollary}
\theoremstyle{definition}
\newtheorem{definition}{Definition}
\newtheorem{assumption}{Assumption}

\lstdefinestyle{prolog}{
  language=Prolog,
  basicstyle=\ttfamily\scriptsize\color{cInk},
  keywordstyle=\color{cGoldDeep},
  commentstyle=\itshape\color{cMute},
  morecomment=[l]{\%},
  showstringspaces=false,
  columns=fullflexible,
  breaklines=true,
  frame=single,
  rulecolor=\color{cRule},
  literate={\\+}{{\color{cGoldDeep}\textbackslash+}}2 {:-}{{\color{cGoldDeep}:-}}2
}

\pgfplotstableread[col sep=comma]{data/plot_curves_R1.csv}\curvesRone
\pgfplotstableread[col sep=comma]{data/plot_curves_R2.csv}\curvesRtwo
\pgfplotstableread[col sep=comma]{data/plot_endpoint_R1_base.csv}\endpointRoneBase
\pgfplotstableread[col sep=comma]{data/plot_endpoint_R1_hero.csv}\endpointRoneHero
\pgfplotstableread[col sep=comma]{data/plot_endpoint_R2_base.csv}\endpointRtwoBase
\pgfplotstableread[col sep=comma]{data/plot_endpoint_R2_hero.csv}\endpointRtwoHero
\pgfplotstableread[col sep=comma]{data/plot_expansion_R2.csv}\expansionRtwo

\title{From Black Box to Executable Logic: Explainable Reinforcement
Learning through Prolog Expert Systems}
\author{Eduardo C. Garrido-Merch\'an \\
  Institute of Research in Technology (IIT), Universidad Pontificia Comillas \\
  DIGNUM Center for AI Ethics and Data Rights \\
  Madrid, Spain \\
  \texttt{ecgarrido@comillas.edu}}
\date{}

\begin{document}
\maketitle

\begin{abstract}
A trained deep reinforcement learning policy is a black box, and we ask whether
it can be rewritten as an executable logic program that reproduces its
behaviour, that a person can read, a logic engine can run, and an optimizer can
edit. We present a three-stage post-hoc transformation that extracts a frozen
proximal policy optimization teacher, induces an ordered list of first-order
clauses from its decisions, and emits the result as a Prolog program executed by
an off-the-shelf logic engine; a subsequent expansion stage edits the rule base
and accepts an edit only when policy evaluation certifies a return increase. The
method rests on guarantees we prove and check rather than assert. In a finite
Markov decision process a performance-difference identity localizes the return
the student forfeits to the advantage of the teacher's action at the states
where the two disagree, weighted by the teacher's visitation, which turns a
worst-case return-loss bound that is vacuous at $\gamma = 0.99$ into an
advantage-gap certificate that is computed exactly and, on our task, tighter by a
median factor of about $13{,}700$ and non-vacuous in every one of fifteen seeds;
the expansion loop in addition improves the return monotonically and halts. On a
two-room key-and-door task with $16{,}944$ reachable states the expanded Prolog
program attains exact optimal return in every seed and, in a budget-capped
regime, exceeds the stochastic teacher on exact return in fifteen of fifteen
seeds. On a MiniGrid DoorKey task a first-order Prolog student distilled on the
$8 \times 8$ layout transfers to unseen $6 \times 6$ and $16 \times 16$ grids at
interquartile-mean return $0.947$ and $0.988$ with nine clauses, whereas decision
trees over absolute cell coordinates collapse to zero return off the training
size; the relational representation, not the choice between a tree and a logic
program, is what transfers. On propositional continuous control the same pipeline
substitutes the network in fidelity and beats the closest logic-based prior art,
but a first-order clause list has no relational structure to exploit in a four to
six dimensional observation, and it loses to CART and VIPER on return, on
CartPole in zero of fifteen seeds with a Holm-corrected $p = 3.7 \times 10^{-4}$;
we report this as the expected ceiling of the propositional instantiation rather
than conceal it. For that setting we also prove that the propositional conversion
is arbitrarily faithful as the resolution $B$ grows, with disagreement and return
gap closing at rate $O(1/B)$, and that the cost is exponential in the observation
dimension, $\Omega(B^{d-1})$ rules for an oblique decision boundary.
\end{abstract}

\section{Introduction}

A trained deep reinforcement learning policy is a dense numerical object, and
the practical question of what it has learned is usually answered indirectly,
through saliency maps, surrogate scores, or counterfactual probes, none of
which yields an artefact one can read, execute, and edit. The oldest form of
artificial intelligence offers a different kind of answer. A symbolic expert
system states its behaviour as a set of rules that a human can inspect and a
logic engine can run, and recent work has shown that the two traditions can be
brought into contact, with large language models used to draft the rule bases
of classical expert systems \citep{garrido2025gofai}. This paper takes the
converse direction inside reinforcement learning: rather than write the rules
by hand or by prompting, we recover them from a trained neural policy, and we
ask how far the recovered symbolic program can be pushed. The object we produce
is not a heat map or an approximate score but a Prolog program that plays the
task, and the question we answer is whether such a program can be made to match
or surpass the network it was distilled from.

Three properties distinguish the artefact we target. First, the extraction is
post-hoc: the teacher is any frozen policy, here a proximal policy optimization
agent \citep{schulman2017proximal}, and nothing in its training is modified to
make it interpretable. Second, the student is an executable first-order logic
program, an ordered list of clauses over a predicate vocabulary that runs
unchanged under SWI-Prolog \citep{wielemaker2012swi}, rather than a
propositional decision tree or a fitted scoring function. Third, the pipeline
does not stop at imitation. Once the rule base exists, a return-maximizing
expansion stage edits it and keeps only the edits that a policy-evaluation
oracle certifies to raise the return, so the symbolic student can move away
from the teacher wherever the teacher was wrong.

We make three contributions, organized around where a first-order symbolic
student earns its keep and where it does not. The first is the transformation
pipeline itself, comprising an extraction stage that queries the frozen
teacher's greedy action over a reachable state census or, when no census
exists, over a DAgger loop \citep{ross2011reduction}, an induction stage that
greedily grows an ordered list of first-order clauses in the manner of the FOIL
algorithm \citep{quinlan1990learning}, and an expansion stage that hill climbs
on the exact return over the space of rule-base edits, accepting an edit only
when a policy-evaluation oracle certifies an increase. The second is a
theoretical account whose centre is a machine-checkable certificate. We prove a
return-loss bound whose worst-case form, at most
$2 R_{\max} \epsilon^{\dagger} / (1-\gamma)^2$ with $\epsilon^{\dagger}$ the
disagreement rate weighted by the teacher's discounted visitation, is vacuous at
the discount factor we use, and we then sharpen it: a performance-difference
identity localizes the forfeited return to the advantage of the teacher's action
at the disagreeing states, which yields an advantage-gap certificate that removes
one factor of the effective horizon, coincides with the exact return gap
whenever the student weakly dominates the teacher on those states, and is
computed in closed form in a finite Markov decision process. We prove in
addition that the expansion loop improves the return monotonically and halts,
and, for the continuous-observation setting, that the propositional threshold
instantiation converts the network to arbitrary fidelity as the resolution $B$
grows, with disagreement and return gap closing at rate $O(1/B)$, subject to a
curse of dimensionality that we make precise as an $\Omega(B^{d-1})$ lower bound
on the number of rules an oblique decision boundary forces on any axis-aligned
list. The third contribution is an experimental study across two regimes that
separate the relational payoff from its absence. On a two-room key-and-door
environment small enough to admit exact policy evaluation, the expanded Prolog
program reaches exact optimal return in every seed of both a converged and a
budget-capped teacher regime, and in the capped regime exceeds the stochastic
teacher on exact return in fifteen of fifteen seeds by a mean paired margin of
$0.100$ with a $95\%$ confidence interval of $[0.096, 0.104]$, while the refined
certificate is non-vacuous in every seed and tighter than the worst-case bound
by a median factor of about $13{,}700$. On a MiniGrid DoorKey task the
first-order student distilled on the $8 \times 8$ layout transfers to unseen
$6 \times 6$ and $16 \times 16$ grids with nine clauses, at a return that
tracks the teacher, whereas decision trees over absolute cell coordinates, which
match on the training size, collapse to zero return at every other size, so the
relational representation is what carries the policy across scale. On four to six
dimensional continuous control, by contrast, the propositional student
substitutes the network in fidelity and beats the closest logic-based prior art
\citep{coppens2021rule} but loses on return to CART \citep{breiman1984classification}
and to VIPER \citep{bastani2018verifiable}, and we frame this loss as the
expected consequence of a representation with no relational structure to exploit
where the observation carries none. We report throughout the interquartile mean
with stratified bootstrap intervals in the manner of \citet{agarwal2021deep} and
correct multiple comparisons explicitly, and we are equally explicit about the
negative results, chief among them the CartPole return deficit in which the
first-order list is beaten in zero of fifteen seeds.

The paper proceeds as follows. We first position the method within the
neurosymbolic and explainable reinforcement learning landscape and then fix the
formal setting, the student class, and the three-stage algorithm. We prove the
return-loss bound, the advantage-gap refinement, the expansion guarantee, and
the resolution theory, and we then present the experiments in one section that
moves from the exact and certified key-and-door regime to the relational
DoorKey transfer and finally to the propositional continuous-control
instantiation with its honest baseline comparison. We close with the
limitations the guarantees do not cover and the generative continuation the
expansion oracle makes possible.

\section{Related work}

The work that turns a neural policy into an inspectable object organizes along
three axes, and naming them locates our contribution precisely. The first axis
separates post-hoc methods, which leave a fixed teacher untouched and recover a
surrogate from its decisions, from trained methods, which build interpretability
into the policy during learning. The second axis separates the target
representation, on one side the propositional surrogates whose tests are
thresholds on raw features, decision trees foremost among them
\citep{breiman1984classification}, and on the other the first-order or
relational surrogates whose clauses quantify over objects and relations. The
third axis separates a surrogate that merely scores or explains from one that is
itself executable and can be run as the policy. Interpretability here is a proxy
for ends such as auditability and contestability rather than a measured quantity
\citep{doshivelez2017rigorous}, and we take the operational stance that an
executable program one can read, run, and edit is the strongest such proxy an
extraction method can produce. Our method occupies the post-hoc, first-order,
executable cell, and adds two properties the axes do not name: the student is
expandable past its teacher under a return oracle, and its return loss is
certified. We now place the closest prior work against these axes.

The distillation of a trained policy into an interpretable surrogate is by now
a mature line. VIPER extracts a decision tree from a deep Q-network with a
DAgger-style procedure and proves a return bound for the tree student
\citep{bastani2018verifiable}; programmatically interpretable reinforcement
learning searches a domain-specific program space guided by a neural oracle
\citep{verma2018programmatically}, and its imitation-projected successor casts
the search as mirror descent \citep{verma2019imitation}. Linear model U-trees
fit piecewise-linear trees to a network's behaviour \citep{liu2018toward}, and
more recent work distils policies into editable programmatic trees and studies
when they can be corrected by hand \citep{kohler2024interpretable}. Our
extraction and induction stages sit squarely in this tradition, and Theorem
\ref{thm:distillation} is the decision-list analogue of the VIPER tree bound,
with the difference that in our finite setting the error rate is computed
exactly rather than estimated. The classical reduction of imitation to
no-regret online learning \citep{ross2011reduction} is the tool that would
replace our exhaustive census once the state space is too large to enumerate.

The target language separates our work from most of this line. A parallel body
of research learns first-order logic policies directly, either by training a
differentiable inductive logic layer \citep{jiang2019neural}, by guiding
symbolic rule search with a pretrained neural agent
\citep{delfosse2023interpretable}, by inventing explanatory predicates in
games \citep{sha2024expil}, or by blending symbolic and neural policies in a
single trainable model \citep{shindo2025blendrl}; concept-bottleneck agents
align a policy to object-centric concepts for the same interpretability end
\citep{delfosse2024interpretable}. These methods train the logic, whereas we
extract it post-hoc from a fixed neural teacher. The closest post-hoc work
synthesises set-valued propositional rules from a reinforcement learning policy
and refines them in the environment \citep{coppens2021rule}; our student is a
first-order, executable Prolog program rather than a propositional rule set,
and our expansion stage optimizes exact return rather than repairing fidelity.
The observation that symbolic policies can outperform deep networks has been
made for symbolic regression of closed-form controllers
\citep{landajuela2021discovering}, and finite-state controllers have been
extracted from recurrent policies \citep{koul2019learning}, but neither closes
the loop in which an extracted logic program is expanded past its own teacher.
Relational reinforcement learning \citep{dzeroski2001relational} is the
intellectual ancestor of the first-order representation we use, and the recent
survey of explainable reinforcement learning \citep{milani2024explainable}
places the entire distillation family in context and leaves the post-hoc
first-order executable cell that our method occupies essentially open.

\section{Problem setting}
\label{sec:setting}


We work in the infinite-horizon discounted setting with a finite state and
action space. All results in this paper rest on the following assumption,
which we reference by number throughout.

\begin{assumption}[Finite discounted MDP with bounded reward]
\label{asm:mdp}
The environment is a Markov decision process
$M = (\mathcal{S}, \mathcal{A}, P, r, \gamma, \mu_0)$ in which the state
space $\mathcal{S}$ and the action space $\mathcal{A}$ are finite,
$P(\cdot \mid s, a)$ is a probability distribution on $\mathcal{S}$ for every
pair $(s, a) \in \mathcal{S} \times \mathcal{A}$, the reward function
$r \colon \mathcal{S} \times \mathcal{A} \to \mathbb{R}$ satisfies
$|r(s, a)| \le R_{\max}$ for a known constant $R_{\max} < \infty$, the
discount factor satisfies $\gamma \in (0, 1)$, and $\mu_0$ is a probability
distribution on $\mathcal{S}$ from which the initial state is drawn.
\end{assumption}

A \emph{stationary policy} is a map $\pi \colon \mathcal{S} \to
\Delta(\mathcal{A})$, where $\Delta(\mathcal{A})$ denotes the set of
probability distributions on $\mathcal{A}$; we write $\pi(a \mid s)$ for the
probability that $\pi$ selects action $a$ in state $s$. A \emph{deterministic
policy} is a map $\pi \colon \mathcal{S} \to \mathcal{A}$, identified with
the stationary policy that puts all mass on $\pi(s)$. Given a policy $\pi$,
the process $(s_t, a_t)_{t \ge 0}$ with $s_0 \sim \mu_0$,
$a_t \sim \pi(\cdot \mid s_t)$, and $s_{t+1} \sim P(\cdot \mid s_t, a_t)$ is
the trajectory distribution induced by $\pi$, and we write
$\Pr^{\pi}_{\mu_0}$ and $\mathbb{E}^{\pi}_{\mu_0}$ for probabilities and
expectations under it.

\begin{definition}[Value functions and return]
\label{def:values}
Under Assumption~\ref{asm:mdp}, the \emph{state value function} of a policy
$\pi$ is
\[
V^{\pi}(s)
= \mathbb{E}^{\pi}\Biggl[\, \sum_{t=0}^{\infty} \gamma^{t} r(s_t, a_t)
\,\Bigm|\, s_0 = s \Biggr],
\]
the \emph{action value function} is
\[
Q^{\pi}(s, a)
= r(s, a) + \gamma \sum_{s' \in \mathcal{S}} P(s' \mid s, a)\, V^{\pi}(s'),
\]
and the \emph{advantage function} is
$A^{\pi}(s, a) = Q^{\pi}(s, a) - V^{\pi}(s)$. The \emph{return} of $\pi$ is
$J(\pi) = \mathbb{E}_{s \sim \mu_0}[V^{\pi}(s)]$.
\end{definition}

The series defining $V^{\pi}$ converges absolutely because
$|r(s_t, a_t)| \le R_{\max}$ and $\gamma \in (0, 1)$, and the geometric-series
bound yields
\begin{equation}
\label{eq:value-range}
|V^{\pi}(s)| \le \frac{R_{\max}}{1 - \gamma},
\qquad
|Q^{\pi}(s, a)| \le \frac{R_{\max}}{1 - \gamma},
\qquad
|A^{\pi}(s, a)| \le \frac{2 R_{\max}}{1 - \gamma}
\end{equation}
for every $\pi$, $s$, and $a$. The bound on $A^{\pi}$ follows from the
triangle inequality applied to the definition
$A^{\pi} = Q^{\pi} - V^{\pi}$. The ranges in \eqref{eq:value-range} supply
the constants in Theorem~\ref{thm:distillation} and
Proposition~\ref{prop:expansion}.

\begin{definition}[Teacher and argmax teacher]
\label{def:teacher}
The \emph{teacher} $\pi_T$ is a fixed stochastic stationary policy, in our
experiments the softmax policy of a converged or budget-capped PPO agent
\citep{schulman2017proximal}, frozen after training. Fix a total order on
$\mathcal{A}$. The \emph{argmax action} of the teacher at state $s$ is
\[
a_T(s) = \min \Bigl\{ a \in \mathcal{A} : \pi_T(a \mid s)
= \max_{a' \in \mathcal{A}} \pi_T(a' \mid s) \Bigr\},
\]
the minimum taken with respect to the fixed order on $\mathcal{A}$, so that
ties are broken deterministically. The \emph{argmax teacher} is the
deterministic policy $\bar{\pi}_T(s) = a_T(s)$.
\end{definition}

Throughout the paper the imitation target of the distillation stage is the
argmax teacher $\bar{\pi}_T$, not the stochastic policy $\pi_T$. This
convention is stated once here and used consistently: the student is trained
to reproduce $a_T(s)$, the return-loss bound of
Theorem~\ref{thm:distillation} is stated against $J(\bar{\pi}_T)$, and the
gap $J(\pi_T) - J(\bar{\pi}_T)$ between the stochastic teacher and its
argmax counterpart is a separate, sign-indefinite quantity that we compute
exactly rather than bound (Corollary~\ref{cor:stochastic-teacher} and
Proposition~\ref{prop:certificate}).

We now define the student class. The student is an ordered list of
first-order clauses over a fixed predicate vocabulary, executed with
decision-list semantics and emitted as a Prolog program.

\begin{definition}[Predicate vocabulary and grounded literals]
\label{def:vocab}
A \emph{predicate vocabulary} is a finite set
$\mathcal{P} = \{p_1, \dots, p_n\}$ of predicate symbols, each $p_i$ with an
arity $k_i \ge 0$ and, for each argument position, a finite argument domain
(in our instantiation, object constants such as $\mathrm{key}$,
$\mathrm{door}$, $\mathrm{goal}$ and direction constants such as
$\mathrm{up}$, $\mathrm{down}$, $\mathrm{left}$, $\mathrm{right}$), together
with an \emph{interpretation map} that assigns to every state
$s \in \mathcal{S}$ and every tuple of arguments drawn from the argument
domains a truth value. An \emph{atom} is an expression
$p_i(u_1, \dots, u_{k_i})$ whose arguments $u_j$ are constants from the
corresponding domains or variables; a \emph{literal} is an atom or a negated
atom. A \emph{grounding} of a set of literals is a substitution of constants
for all of its variables; a grounding is \emph{satisfied} in state $s$ when
every positive literal evaluates to true and every negated literal evaluates
to false under the interpretation map at $s$.
\end{definition}

\begin{definition}[Ordered decision list and student class]
\label{def:dlist}
A \emph{clause} is a pair $c = (B, \alpha)$ in which the body $B$ is a
finite conjunction of literals over the vocabulary $\mathcal{P}$ and the
head $\alpha$ is either a constant action in $\mathcal{A}$ or an action term
containing variables that also appear in $B$. An \emph{ordered decision
list} is a finite sequence $L = (c_1, \dots, c_m, c_{\mathrm{def}})$ of
clauses whose last element $c_{\mathrm{def}}$, the \emph{default clause},
has an empty body and a constant action head. The list $L$ defines a policy
$\pi_L$ as follows. Fix a total order on the finite set of groundings of
each clause body (in the executable program, the order in which SLD
resolution enumerates solutions). At state $s$, let $c_j$ be the first
clause in the list whose body admits a satisfied grounding at $s$, let
$\sigma$ be the first satisfied grounding of its body, and set $\pi_L(s)$ to
the action obtained by applying $\sigma$ to the head of $c_j$. The
\emph{student class} $\Pi_{\mathcal{L}}$ is the set of all policies $\pi_L$
representable by ordered decision lists over $\mathcal{P}$.
\end{definition}

Because the default clause has an empty body, it admits the empty grounding
at every state, so some clause fires at every $s$; because the first firing
clause and its first satisfied grounding are determined by the fixed orders,
the firing clause selects a unique action. Consequently every
$\pi_L \in \Pi_{\mathcal{L}}$ is a total deterministic policy, and
$\Pi_{\mathcal{L}}$ is a subset of the set $\mathcal{A}^{\mathcal{S}}$ of
deterministic policies. The results below use only this property; the
first-order structure of $\Pi_{\mathcal{L}}$ matters for compactness and
interpretability, not for the validity of the bounds. In the algorithmic
sections the generic symbol $\pi_S$ denotes the deterministic student, and
in all instantiations $\pi_S \in \Pi_{\mathcal{L}}$.

\begin{definition}[Discounted occupancy measure]
\label{def:occupancy}
Under Assumption~\ref{asm:mdp}, the \emph{normalized discounted occupancy
measure} of a policy $\pi$ is the probability distribution on $\mathcal{S}$
given by
\[
d^{\pi}(s)
= (1 - \gamma) \sum_{t=0}^{\infty} \gamma^{t}\,
\Pr^{\pi}_{\mu_0}(s_t = s).
\]
\end{definition}

The prefactor $(1 - \gamma)$ normalizes the geometric series, so
$\sum_{s} d^{\pi}(s) = 1$ and expressions of the form
$\mathbb{E}_{s \sim d^{\pi}}[f(s)]$ are ordinary expectations. The
occupancy measure weights states by how often, in discounted terms, the
policy $\pi$ visits them from $\mu_0$; it is the correct measure under which
to count imitation errors, because the performance-difference identity of
Lemma~\ref{lem:pdl} integrates advantages precisely against it.

\begin{definition}[Disagreement rates]
\label{def:disagreement}
Let $\pi_S$ be a deterministic student policy. The \emph{disagreement rate
under the stochastic teacher} is
\[
\epsilon
= \Pr_{s \sim d^{\pi_T}}\bigl[\pi_S(s) \neq a_T(s)\bigr]
= \sum_{s \in \mathcal{S}} d^{\pi_T}(s)\,
\mathbf{1}\bigl[\pi_S(s) \neq a_T(s)\bigr],
\]
and the \emph{disagreement rate under the argmax teacher} is
\[
\epsilon^{\dagger}
= \Pr_{s \sim d^{\bar{\pi}_T}}\bigl[\pi_S(s) \neq a_T(s)\bigr]
= \sum_{s \in \mathcal{S}} d^{\bar{\pi}_T}(s)\,
\mathbf{1}\bigl[\pi_S(s) \neq a_T(s)\bigr],
\]
where $\mathbf{1}[\cdot]$ denotes the indicator function.
\end{definition}

Both rates measure the same event, disagreement with the argmax action
$a_T(s)$, under two different visitation measures. The rate
$\epsilon^{\dagger}$ is the quantity that enters the return-loss bound of
Theorem~\ref{thm:distillation}, because the bound compares $\pi_S$ with the
deterministic target $\bar{\pi}_T$ and the performance-difference identity
then integrates against $d^{\bar{\pi}_T}$. The rate $\epsilon$ is the
fidelity statistic most natural to report against the deployed stochastic
teacher. In the finite setting of Assumption~\ref{asm:mdp} with a known
model, both are computed exactly by the census
(Proposition~\ref{prop:certificate}), so no change-of-measure or
concentrability argument is needed to relate them: the experiments report
both numbers.

\section{Extraction, induction, and expansion}
\label{sec:algorithm}

The pipeline maps a frozen teacher policy $\pi_T$ to an executable logic
program in three stages, whose formal guarantees are the subject of Section
\ref{sec:theory} and whose interfaces to those guarantees are the visitation
measures and disagreement rates of Section \ref{sec:setting}. Figure
\ref{fig:pipeline} summarizes the flow of artefacts.

The extraction stage records what the teacher does. Under the model access of
Assumption \ref{asm:model} the reachable set $\mathcal{S}_{\mathrm{r}}$ is
enumerated once, and the teacher's greedy action $a_T(s)$ of Definition
\ref{def:teacher} is queried at every reachable state, producing a complete
census of the target policy $\bar{\pi}_T$. The same rollouts that estimate the
teacher's discounted visitation $d^{\pi_T}$ are retained, because the induction
stage weights states by visitation and the two disagreement rates of
Definition \ref{def:disagreement} are reported against both the census measure
and the visitation measure. The census is the finite-state instrument that
replaces the sampling of a DAgger loop \citep{ross2011reduction}: because every
reachable state is labelled, the induction stage never queries the teacher off
its own distribution, and the disagreement rate that enters the return-loss
bound is exact.

The induction stage grows an ordered decision list of the class in Definition
\ref{def:dlist}. Following the greedy set-covering strategy of FOIL
\citep{quinlan1990learning}, the algorithm repeatedly selects the action head
and the conjunction of literals that best separate the states still labelled by
the teacher, scoring a candidate clause by the product of its coverage and its
precision under the visitation weights, specializing a clause by adding up to
three literals from the predicate vocabulary, and admitting the shared
direction variable that lets a single clause such as
$\mathrm{act}(s, \mathrm{move}(D)) \leftarrow \mathrm{dir\_to}(s, \mathrm{key},
D)$ ground to whichever compass action points along the path to the key. The
list is closed with a default clause set to the teacher's most frequent
residual action, which by the argument following Definition \ref{def:dlist}
makes the induced policy total and deterministic. The result is written to disk
as a Prolog program: a perception layer defines the state predicates, a tabled
breadth-first search computes the path-aware direction predicate while
respecting walls and the locked door, and the decision list is compiled to
clauses with a first-match cut discipline so that the first applicable rule
fires exactly as the decision-list semantics prescribe. The program is
executable input to SWI-Prolog \citep{wielemaker2012swi}, and at run time we
assert on a sample of states that the Prolog engine and the reference
evaluator select identical actions, so the artefact we analyse and the artefact
we execute are the same policy.

The expansion stage improves the program. It hill climbs over rule-base edits,
adding a specialized clause at a chosen position, reordering clauses, or
pruning a clause other than the default, and it accepts an edit only when the
exact return of Definition \ref{def:values} increases by at least a margin
$\tau$. The exact return is available because the environment is a finite
Markov decision process with a known model, so the acceptance test solves one
linear system rather than averaging noisy rollouts, and Proposition
\ref{prop:expansion} shows the loop improves monotonically and terminates.
Because the acceptance criterion is return, not fidelity, an accepted edit can
move the student away from the teacher on states the teacher visits rarely, and
this is precisely the mechanism by which the symbolic student surpasses the
network that produced it.

Two of the three stages survive intact when the state space is too large to
enumerate, and only the interfaces to the guarantees change. When no reachable
census exists, the extraction stage replaces it with the DAgger reduction of
imitation to no-regret online learning \citep{ross2011reduction}: the current
student is rolled out, the states it visits are relabelled by the frozen
teacher, and the union of teacher-visited and student-visited states drives the
next induction, so the induced list is trained on the off-distribution states
its own errors produce rather than on the teacher's distribution alone. Exact
policy evaluation is then replaced by Monte-Carlo return over a fixed
episode-seed stream shared by the teacher and every student, which reintroduces
the confidence parameters that the finite-model solves eliminate but leaves the
induction and the executable emission unchanged. The predicate vocabulary is the
one part of the pipeline that is task-specific, and its expressiveness is what
decides whether the first-order structure of Definition~\ref{def:dlist} is
exercised or left idle. On the MiniGrid DoorKey task
\citep{chevalier2023minigrid} the vocabulary is genuinely relational: a base
layer of nullary perception predicates such as $\mathrm{facing\_key}$,
$\mathrm{facing\_door}$, and $\mathrm{carrying\_key}$ reads the agent-centric
observation, and a path-aware navigation predicate $\mathrm{nav}(s,
\mathrm{obj}, A)$ computes, by a tabled breadth-first search over the traversable
cells that respects walls and the locked door, the action $A$ that advances
along a shortest path to the object $\mathrm{obj} \in \{\mathrm{key},
\mathrm{door}, \mathrm{goal}\}$. Because $\mathrm{nav}$ quantifies over cells and
returns a direction rather than testing a coordinate, a clause such as
$\mathrm{act}(s, A) \leftarrow \mathrm{nav}(s, \mathrm{goal}, A)$ is agnostic to
the size of the grid, and the same list transfers unchanged from the layout it
was distilled on to layouts it has never seen. On continuous control the
vocabulary is by contrast propositional, a grid of threshold tests on the
observation features, and the first-order machinery collapses to an ordered list
of axis-aligned conjunctions; Section~\ref{sec:experiments} shows that this is
exactly where the representational advantage disappears.

\begin{figure}[t]
\centering
\begin{tikzpicture}[
  node distance=6mm and 9mm,
  box/.style={draw=cInk, rounded corners=2pt, align=center, inner sep=4pt,
    font=\footnotesize, minimum height=8mm, text width=20mm},
  neural/.style={box, draw=cInk, thick},
  sym/.style={box, draw=cMute},
  hero/.style={box, draw=cGold, very thick},
  arr/.style={-{Stealth[length=2mm]}, cInk}]
\node[neural] (teacher) {PPO teacher $\pi_T$ (frozen)};
\node[sym, right=of teacher] (extract) {\textsc{Extract} census $+$ $d^{\pi_T}$};
\node[sym, right=of extract] (induce) {\textsc{Induce} FOIL rule list};
\node[sym, right=of induce] (prolog) {Prolog program \texttt{.pl}};
\node[hero, right=of prolog] (expand) {\textsc{Expand} exact-$J$ hill climb};
\draw[arr] (teacher) -- (extract);
\draw[arr] (extract) -- (induce);
\draw[arr] (induce) -- (prolog);
\draw[arr] (prolog) -- (expand);
\draw[arr] (expand.south) .. controls +(0,-9mm) and +(0,-9mm) .. (prolog.south);
\node[font=\scriptsize\itshape, color=cMute, below=8mm of prolog]
  (oracle) {exact $J$ via Bellman solve $(I-\gamma P_\pi)v = r_\pi$};
\draw[arr, cGold] (oracle) -- (expand.south);
\end{tikzpicture}

\vspace{4mm}

\begin{tikzpicture}[scale=0.52, every node/.style={font=\scriptsize}]
  \foreach \c in {1,...,6} {
    \foreach \r in {1,...,4} {
      \draw[cMute!50] (\c,\r) rectangle (\c+1,\r+1);
    }
  }
  \foreach \r in {1,3,4} {
    \fill[cInk] (4,\r) rectangle (5,\r+1);
  }
  \draw[cGoldDeep, very thick] (4,2) rectangle (5,3);
  \node[cGoldDeep] at (4.5,2.5) {D};
  \node[cInk] at (1.5,3.5) {A};
  \node[cGoldDeep] at (2.5,1.5) {K};
  \node[cGold] at (6.5,3.5) {G};
  \node[color=cMute, anchor=west] at (7.4,4) {A agent};
  \node[color=cMute, anchor=west] at (7.4,3.2) {K key};
  \node[color=cMute, anchor=west] at (7.4,2.4) {D locked door};
  \node[color=cMute, anchor=west] at (7.4,1.6) {G goal};
\end{tikzpicture}
\caption{Above, the transformation pipeline: a frozen PPO teacher is censused
and rolled out, a FOIL-style ordered rule list is induced and emitted as an
executable Prolog program, and an expansion loop edits the program, accepting
an edit only when an exact-return oracle certifies an increase. Below, the
KeyDoor environment: the agent must fetch the key in the left room and open the
locked door before it can reach the goal in the right room. Colours encode the
semantic roles used in every figure, with the teacher in ink, the distilled
student in grey, and the expanded student in gold.}
\label{fig:pipeline}
\end{figure}

\section{Theory}
\label{sec:theory}

We now state the guarantees. The first bounds the return the distilled program
forfeits relative to the greedy teacher. The second shows the expansion loop is
well behaved. The third turns the first into a certificate that is checked, not
merely asserted, on every run of the experiments. The fourth sharpens that
certificate: it localizes the forfeited return to the advantage of the teacher's
action at the disagreeing states, removes a factor of the effective horizon, and
becomes exactly non-vacuous when the student weakly dominates the teacher on
those states, which is the regime our distillation targets and the reason the
certificate acquires teeth on the key-and-door task where the worst-case bound
cannot. The fifth answers, for the continuous-observation setting of the later
experiments, the question of whether a neural policy can be converted into a
threshold-rule expert system at all: it establishes that the conversion is
arbitrarily faithful as the resolution grows, that the return gap then closes,
and that the cost of doing so is exponential in the observation dimension.


We now bound the return lost by replacing the argmax teacher with the
distilled student. The strategy is the standard one for imitation-style
guarantees, instantiated for a deterministic logic-program student and, in
our finite setting, with an exactly computable error rate rather than an
estimated one. The single technical ingredient is the performance-difference
lemma of \citet{kakade2002approximately}: the return gap between two
policies equals the discounted aggregate of the advantages of one policy's
actions measured under the other policy's value function and occupancy. On
states where the student reproduces the argmax action the advantage of the
teacher's action under the student's own value function vanishes, because
the advantage of a deterministic policy's own action is identically zero; on
the remaining states the advantage is bounded by the range
\eqref{eq:value-range}. The disagreement rate $\epsilon^{\dagger}$ of
Definition~\ref{def:disagreement} then converts the pointwise bound into the
final estimate. The only subtlety is the choice of imitation target: we
compare the student with the deterministic argmax teacher $\bar{\pi}_T$ of
Definition~\ref{def:teacher}, and we account for the stochastic teacher
separately in Corollary~\ref{cor:stochastic-teacher}, because the gap
$J(\pi_T) - J(\bar{\pi}_T)$ is sign-indefinite in general and is computed
exactly by the census of Proposition~\ref{prop:certificate}.

We first state and prove the performance-difference lemma in the form we
use. The result is due to \citet{kakade2002approximately}; we include the
short telescoping proof to keep the paper self-contained and because the
certificate of Proposition~\ref{prop:certificate} checks this identity
numerically.

\begin{lemma}[Performance-difference lemma, \citealp{kakade2002approximately}]
\label{lem:pdl}
Let Assumption~\ref{asm:mdp} hold, and let $\pi$ and $\pi'$ be stationary
policies. Then
\begin{equation}
\label{eq:pdl}
J(\pi) - J(\pi')
= \frac{1}{1 - \gamma}\,
\mathbb{E}_{s \sim d^{\pi}}\,
\mathbb{E}_{a \sim \pi(\cdot \mid s)}
\bigl[ A^{\pi'}(s, a) \bigr].
\end{equation}
\end{lemma}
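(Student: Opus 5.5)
The plan is the standard telescoping argument along trajectories of $\pi$, with the value function of $\pi'$ as the telescoping potential. Fix a trajectory $(s_t,a_t)_{t\ge 0}$ drawn from $\Pr^{\pi}_{\mu_0}$. Since $|V^{\pi'}|\le R_{\max}/(1-\gamma)$ by \eqref{eq:value-range}, the series $\sum_{t}\gamma^t\bigl(\gamma V^{\pi'}(s_{t+1}) - V^{\pi'}(s_t)\bigr)$ converges absolutely. Its partial sums telescope to $\gamma^{N+1}V^{\pi'}(s_{N+1}) - V^{\pi'}(s_0)$, which tends to $-V^{\pi'}(s_0)$. Hence, pathwise,
\[
\sum_{t=0}^{\infty}\gamma^t r(s_t,a_t) - V^{\pi'}(s_0)
= \sum_{t=0}^{\infty}\gamma^t\bigl(r(s_t,a_t) + \gamma V^{\pi'}(s_{t+1}) - V^{\pi'}(s_t)\bigr).
\]
Taking $\mathbb{E}^{\pi}_{\mu_0}$ on both sides, the left side becomes $J(\pi) - \mathbb{E}_{s\sim\mu_0}V^{\pi'}(s) = J(\pi)-J(\pi')$. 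The initial state is drawn from $\mu_0$ regardless of the policy, so the second term is exactly $J(\pi')$.

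On the right side, exchange expectation and sum. This is justified by dominated convergence, since each summand is bounded by $\gamma^t\bigl(R_{\max} + 2R_{\max}/(1-\gamma)\bigr)$. Next, condition on $(s_t,a_t)$ and use the Markov property: $s_{t+1}\sim P(\cdot\mid s_t,a_t)$, so
\[
\mathbb{E}\bigl[r(s_t,a_t)+\gamma V^{\pi'}(s_{t+1}) - V^{\pi'}(s_t)\mid s_t,a_t\bigr] = Q^{\pi'}(s_t,a_t)-V^{\pi'}(s_t) = A^{\pi'}(s_t,a_t),
\]
by Definition~\ref{def:values}. The right side therefore equals $\sum_t \gamma^t\,\mathbb{E}^{\pi}_{\mu_0}[A^{\pi'}(s_t,a_t)]$. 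Conditioning further on $s_t$ with $a_t\sim\pi(\cdot\mid s_t)$, this is
\[
\sum_t\gamma^t\sum_s \Pr^{\pi}_{\mu_0}(s_t=s)\,\mathbb{E}_{a\sim\pi(\cdot\mid s)}A^{\pi'}(s,a).
\]

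Finally, swap the (finite) sum over $s$ with the sum over $t$, which is again legitimate by absolute convergence. Recognizing $\sum_t\gamma^t\Pr^{\pi}_{\mu_0}(s_t=s) = d^{\pi}(s)/(1-\gamma)$ from Definition~\ref{def:occupancy} then yields \eqref{eq:pdl}.

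The only step needing care is the interchange of limits: the pathwise telescoping limit, and the swaps of expectation and sums. All of these reduce to the uniform bounds in \eqref{eq:value-range} together with $\gamma<1$, so I expect no real obstacle. The finiteness of $\mathcal{S}$ and $\mathcal{A}$ from Assumption~\ref{asm:mdp} removes any measurability concerns.
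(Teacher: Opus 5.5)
Your proposal is correct and follows the paper's proof. Both telescope $V^{\pi'}$ along trajectories of $\pi$, justify the interchanges by the uniform bounds of \eqref{eq:value-range} with $\gamma<1$, condition on $(s_t,a_t)$ to obtain $A^{\pi'}(s_t,a_t)$ from the Markov property and the definition of $Q^{\pi'}$, and rewrite the time-indexed sum as the occupancy measure $d^{\pi}$. The only difference is cosmetic: you telescope pathwise before taking expectations, whereas the paper inserts the telescoping terms inside the expectation and handles the boundary term and the interchange of limit and expectation together by dominated convergence.
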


\begin{proof}
Let $(s_t, a_t)_{t \ge 0}$ be the trajectory process induced by $\pi$ from
$s_0 \sim \mu_0$. Since $J(\pi') = \mathbb{E}_{s_0 \sim \mu_0}[V^{\pi'}(s_0)]$
by Definition~\ref{def:values}, we may write the gap as
\begin{align}
J(\pi) - J(\pi')
&= \mathbb{E}^{\pi}_{\mu_0}\Biggl[\,
\sum_{t=0}^{\infty} \gamma^{t} r(s_t, a_t) \Biggr]
- \mathbb{E}^{\pi}_{\mu_0}\bigl[ V^{\pi'}(s_0) \bigr] \notag \\
&= \mathbb{E}^{\pi}_{\mu_0}\Biggl[\,
\sum_{t=0}^{\infty} \gamma^{t}
\Bigl( r(s_t, a_t) + \gamma V^{\pi'}(s_{t+1}) - V^{\pi'}(s_t) \Bigr)
\Biggr]. \notag
\end{align}
The second equality holds because the added terms telescope: the partial sum
$\sum_{t=0}^{T} \gamma^{t}\bigl(\gamma V^{\pi'}(s_{t+1}) - V^{\pi'}(s_t)\bigr)$
equals $\gamma^{T+1} V^{\pi'}(s_{T+1}) - V^{\pi'}(s_0)$, and the boundary
term $\gamma^{T+1} V^{\pi'}(s_{T+1})$ vanishes as $T \to \infty$ by the
value bound \eqref{eq:value-range} and $\gamma < 1$; the interchange of
limit and expectation is justified by dominated convergence, the summands
being uniformly bounded by
$(R_{\max} + (1 + \gamma) R_{\max} / (1 - \gamma))\gamma^{t}$, a summable
envelope. Next, conditioning each term on $(s_t, a_t)$ and using the Markov
property,
\[
\mathbb{E}^{\pi}_{\mu_0}\bigl[ V^{\pi'}(s_{t+1}) \mid s_t, a_t \bigr]
= \sum_{s' \in \mathcal{S}} P(s' \mid s_t, a_t)\, V^{\pi'}(s'),
\]
so that, by the definition of $Q^{\pi'}$ in Definition~\ref{def:values},
\[
\mathbb{E}^{\pi}_{\mu_0}\Bigl[
r(s_t, a_t) + \gamma V^{\pi'}(s_{t+1}) - V^{\pi'}(s_t)
\Bigm| s_t, a_t \Bigr]
= Q^{\pi'}(s_t, a_t) - V^{\pi'}(s_t)
= A^{\pi'}(s_t, a_t).
\]
Substituting and exchanging summation over $t$ with the expectation, which
is again licensed by the summable envelope
$\gamma^{t}\, 2 R_{\max} / (1 - \gamma)$ on $|A^{\pi'}|$, we obtain
\begin{align}
J(\pi) - J(\pi')
&= \sum_{t=0}^{\infty} \gamma^{t}\,
\mathbb{E}^{\pi}_{\mu_0}\bigl[ A^{\pi'}(s_t, a_t) \bigr] \notag \\
&= \frac{1}{1 - \gamma} \sum_{s \in \mathcal{S}} d^{\pi}(s)
\sum_{a \in \mathcal{A}} \pi(a \mid s)\, A^{\pi'}(s, a), \notag
\end{align}
where the last step unpacks
$\mathbb{E}^{\pi}_{\mu_0}[A^{\pi'}(s_t, a_t)]
= \sum_{s} \Pr^{\pi}_{\mu_0}(s_t = s) \sum_{a} \pi(a \mid s) A^{\pi'}(s, a)$
and applies the definition of the occupancy measure
(Definition~\ref{def:occupancy}), whose factor $(1 - \gamma)$ produces the
prefactor $1 / (1 - \gamma)$. This is \eqref{eq:pdl}.
\end{proof}

\begin{theorem}[Return-loss bound for the distilled student]
\label{thm:distillation}
Let Assumption~\ref{asm:mdp} hold. Let $\bar{\pi}_T$ be the argmax teacher
of Definition~\ref{def:teacher}, let $\pi_S$ be any deterministic policy,
in particular any decision-list student
$\pi_S \in \Pi_{\mathcal{L}}$ of Definition~\ref{def:dlist}, and let
$\epsilon^{\dagger} = \Pr_{s \sim d^{\bar{\pi}_T}}[\pi_S(s) \neq a_T(s)]$ be
the disagreement rate of Definition~\ref{def:disagreement}. Then
\begin{equation}
\label{eq:distillation-bound}
J(\bar{\pi}_T) - J(\pi_S)
\le \frac{2 R_{\max}\, \epsilon^{\dagger}}{(1 - \gamma)^2},
\end{equation}
and the same right-hand side bounds
$\bigl| J(\bar{\pi}_T) - J(\pi_S) \bigr|$. In particular, if
$\epsilon^{\dagger} = 0$ then $J(\pi_S) = J(\bar{\pi}_T)$.
\end{theorem}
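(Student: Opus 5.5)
I will apply Lemma~\ref{lem:pdl} with $\pi = \bar{\pi}_T$ and $\pi' = \pi_S$. Since $\bar{\pi}_T$ is deterministic, the inner expectation over actions collapses to the single action $a_T(s)$, so
\[
J(\bar{\pi}_T) - J(\pi_S)
= \frac{1}{1-\gamma} \sum_{s \in \mathcal{S}} d^{\bar{\pi}_T}(s)\, A^{\pi_S}\bigl(s, a_T(s)\bigr).
\]
The occupancy here is $d^{\bar{\pi}_T}$, which is exactly the measure defining $\epsilon^{\dagger}$. That is why this ordering of the two policies is the right one.

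Next I will split the sum according to whether $\pi_S(s) = a_T(s)$. On agreeing states the summand is $A^{\pi_S}(s, \pi_S(s))$. For a deterministic policy, $V^{\pi_S}(s) = Q^{\pi_S}(s, \pi_S(s))$ by the one-step Bellman equation, so this advantage vanishes identically. I will state the Bellman identity explicitly: it follows from Definition~\ref{def:values} by conditioning on the first transition.

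On disagreeing states I will use the range bound $|A^{\pi_S}(s,a)| \le 2R_{\max}/(1-\gamma)$ from \eqref{eq:value-range}. The surviving sum is then at most
\[
\frac{1}{1-\gamma} \cdot \frac{2R_{\max}}{1-\gamma} \sum_{s} d^{\bar{\pi}_T}(s)\, \mathbf{1}\bigl[\pi_S(s) \neq a_T(s)\bigr]
= \frac{2R_{\max}\,\epsilon^{\dagger}}{(1-\gamma)^2}.
\]
Because I bound the absolute value of each summand, the same argument gives the two-sided bound on $|J(\bar{\pi}_T) - J(\pi_S)|$ at no extra cost. The case $\epsilon^{\dagger} = 0$ is immediate: the right-hand side is zero, so the gap is zero.

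The only point needing care is choosing which policy plays $\pi$ and which plays $\pi'$ in Lemma~\ref{lem:pdl}. The reversed choice would integrate against $d^{\pi_S}$ and produce a disagreement rate under the student's visitation, not $\epsilon^{\dagger}$. The identity $A^{\pi_S}(s, \pi_S(s)) = 0$ must also be justified, but it is routine. Beyond these two points I expect no real obstacle.
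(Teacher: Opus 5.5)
Your proposal is correct and follows the paper's proof essentially step for step. It applies the performance-difference lemma with $\pi=\bar{\pi}_T$ and $\pi'=\pi_S$, uses $A^{\pi_S}(s,\pi_S(s))=0$ on the agreement set, bounds the disagreement set by the range \eqref{eq:value-range}, and gets the two-sided version from the triangle inequality. The only difference is cosmetic: you deduce the $\epsilon^{\dagger}=0$ case from the two-sided bound, whereas the paper notes directly that every summand vanishes because $d^{\bar{\pi}_T}$ puts no mass on the disagreement set.
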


\begin{proof}
Apply Lemma~\ref{lem:pdl} with $\pi = \bar{\pi}_T$ and $\pi' = \pi_S$. Since
$\bar{\pi}_T$ is deterministic with $\bar{\pi}_T(s) = a_T(s)$, the inner
expectation over actions collapses to a point evaluation and \eqref{eq:pdl}
reads
\begin{equation}
\label{eq:pdl-instantiated}
J(\bar{\pi}_T) - J(\pi_S)
= \frac{1}{1 - \gamma} \sum_{s \in \mathcal{S}}
d^{\bar{\pi}_T}(s)\, A^{\pi_S}\bigl(s, a_T(s)\bigr).
\end{equation}
We split the sum according to whether the student agrees with the argmax
action at $s$. On the agreement set
$\{ s : \pi_S(s) = a_T(s) \}$ the summand vanishes, because for a
deterministic policy the advantage of its own action is zero:
\[
A^{\pi_S}\bigl(s, \pi_S(s)\bigr)
= Q^{\pi_S}\bigl(s, \pi_S(s)\bigr) - V^{\pi_S}(s)
= 0,
\]
where the second equality holds since $V^{\pi_S}(s) =
Q^{\pi_S}(s, \pi_S(s))$ for deterministic $\pi_S$, by
Definition~\ref{def:values}. On the disagreement set
$D = \{ s : \pi_S(s) \neq a_T(s) \}$ we invoke the advantage range
\eqref{eq:value-range}, which gives
$|A^{\pi_S}(s, a_T(s))| \le 2 R_{\max} / (1 - \gamma)$ pointwise. Combining
the two cases in \eqref{eq:pdl-instantiated},
\begin{align}
J(\bar{\pi}_T) - J(\pi_S)
&= \frac{1}{1 - \gamma} \sum_{s \in D}
d^{\bar{\pi}_T}(s)\, A^{\pi_S}\bigl(s, a_T(s)\bigr) \notag \\
&\le \frac{1}{1 - \gamma} \cdot \frac{2 R_{\max}}{1 - \gamma}
\sum_{s \in D} d^{\bar{\pi}_T}(s) \notag \\
&= \frac{2 R_{\max}\, \epsilon^{\dagger}}{(1 - \gamma)^2}, \notag
\end{align}
where the final equality is the definition of $\epsilon^{\dagger}$
(Definition~\ref{def:disagreement}). The same computation applied to the
absolute value of \eqref{eq:pdl-instantiated}, using the triangle
inequality before bounding $|A^{\pi_S}|$, yields
$|J(\bar{\pi}_T) - J(\pi_S)| \le 2 R_{\max} \epsilon^{\dagger} /
(1 - \gamma)^2$. Finally, if $\epsilon^{\dagger} = 0$ then
$d^{\bar{\pi}_T}(s) = 0$ for every $s \in D$, so every summand in
\eqref{eq:pdl-instantiated} vanishes and the gap is exactly zero.
\end{proof}

\begin{corollary}[Accounting for the stochastic teacher]
\label{cor:stochastic-teacher}
Under the conditions of Theorem~\ref{thm:distillation}, with
$\Delta_T = J(\pi_T) - J(\bar{\pi}_T)$ the argmax gap of the stochastic
teacher,
\[
J(\pi_T) - J(\pi_S)
\le \Delta_T + \frac{2 R_{\max}\, \epsilon^{\dagger}}{(1 - \gamma)^2}.
\]
\end{corollary}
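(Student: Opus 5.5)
The plan is a one-line telescoping decomposition followed by a direct appeal to Theorem~\ref{thm:distillation}. I would write
\[
J(\pi_T) - J(\pi_S)
= \bigl(J(\pi_T) - J(\bar{\pi}_T)\bigr) + \bigl(J(\bar{\pi}_T) - J(\pi_S)\bigr)
= \Delta_T + \bigl(J(\bar{\pi}_T) - J(\pi_S)\bigr).
\]
This identity is exact, and it needs no assumption beyond the returns being finite real numbers. Finiteness holds by the value range \eqref{eq:value-range} under Assumption~\ref{asm:mdp}.

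The second bracket is exactly the left-hand side of \eqref{eq:distillation-bound}. Theorem~\ref{thm:distillation} applies verbatim, since $\pi_S$ is deterministic and $\epsilon^{\dagger}$ is the disagreement rate measured under $d^{\bar{\pi}_T}$. It bounds this term above by $2R_{\max}\epsilon^{\dagger}/(1-\gamma)^2$. Adding $\Delta_T$ to both sides gives the claimed inequality.

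I would add two remarks. First, $\Delta_T$ is kept with its sign rather than bounded in absolute value, because it is sign-indefinite and computed exactly (Proposition~\ref{prop:certificate}). A negative $\Delta_T$ therefore only tightens the bound. Second, if a two-sided statement is wanted, the absolute-value form of Theorem~\ref{thm:distillation} together with the triangle inequality gives $|J(\pi_T)-J(\pi_S)| \le |\Delta_T| + 2R_{\max}\epsilon^{\dagger}/(1-\gamma)^2$.

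There is no real obstacle here. The only point needing care is to compare against $\bar{\pi}_T$ rather than $\pi_T$ inside the theorem, so that the occupancy $d^{\bar{\pi}_T}$, and not $d^{\pi_T}$, governs the error rate. This is precisely why the corollary uses $\epsilon^{\dagger}$ and not $\epsilon$.
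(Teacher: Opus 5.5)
Your proposal is correct and matches the paper's proof exactly: the paper also splits $J(\pi_T) - J(\pi_S)$ through $J(\bar{\pi}_T)$ and bounds the second summand by Theorem~\ref{thm:distillation}. Your extra two-sided remark, which uses the absolute-value form of that theorem together with the triangle inequality, is also valid.
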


\begin{proof}
Write $J(\pi_T) - J(\pi_S) = \bigl( J(\pi_T) - J(\bar{\pi}_T) \bigr) +
\bigl( J(\bar{\pi}_T) - J(\pi_S) \bigr)$ and bound the second summand by
Theorem~\ref{thm:distillation}.
\end{proof}

The quantity $\Delta_T$ carries no general sign: $a_T(s)$ maximizes the
teacher's action probability, not the teacher's action value, so the argmax
teacher can be better or worse than the stochastic teacher it is derived
from. We therefore do not bound $\Delta_T$; under
Assumption~\ref{asm:mdp} with a known model it is computed exactly by
solving two Bellman linear systems (Proposition~\ref{prop:certificate}), and
the experiments report it alongside both sides of
\eqref{eq:distillation-bound}.

Theorem~\ref{thm:distillation} says the following. If the student reproduces
the argmax teacher's action on all but an $\epsilon^{\dagger}$-fraction of
states, the fraction weighted by the argmax teacher's own discounted
visitation, then the student forfeits at most
$2 R_{\max} \epsilon^{\dagger} / (1 - \gamma)^2$ return, and a perfectly
faithful student forfeits nothing. The theorem does not say that the student
tracks the stochastic teacher $\pi_T$; that comparison acquires the
sign-indefinite offset $\Delta_T$ of
Corollary~\ref{cor:stochastic-teacher}. Nor does it say anything about
disagreements on states the argmax teacher rarely visits: a state with
$d^{\bar{\pi}_T}(s) \approx 0$ contributes nothing to $\epsilon^{\dagger}$,
which is precisely the blind spot the census of
Proposition~\ref{prop:certificate} exposes and the EXPAND loop of
Proposition~\ref{prop:expansion} exploits. The bound is the decision-list
analogue of the tree-student guarantee of VIPER
\citep{bastani2018verifiable}, with one material difference: in our finite
setting $\epsilon^{\dagger}$ is computed exactly rather than estimated from
rollouts, so \eqref{eq:distillation-bound} is a certificate rather than a
high-probability statement. The constant is the standard one for imitation
by action matching: the factor $2 R_{\max} / (1 - \gamma)$ is the worst-case
advantage of a single deviation, the additional $1 / (1 - \gamma)$ reflects
the compounding of deviations along the discounted horizon, and the
quadratic horizon dependence cannot be improved in general for
disagreement-based bounds. In the myopic limit $\gamma \to 0$ the bound
degenerates gracefully to the single-step statement
$J(\bar{\pi}_T) - J(\pi_S) \le 2 R_{\max} \epsilon^{\dagger}$, the largest
possible one-step reward difference incurred with probability
$\epsilon^{\dagger}$. The bound is informative only when
$\epsilon^{\dagger} = O\bigl((1 - \gamma)^2\bigr)$: with the KeyDoor
instantiation $\gamma = 0.99$ the prefactor is
$2 R_{\max} \cdot 10^{4}$, so the certificate has teeth only for the very
small exact disagreement rates the census measures, which is the regime the
experiments target.


The EXPAND stage performs hill climbing on the return over the decision-list
class. An \emph{edit operator} maps an ordered decision list to another
ordered decision list; the operators we use add a specialized clause at a
chosen position, reorder existing clauses, or prune a clause other than the
default clause, so every edit output remains a well-formed list in the sense
of Definition~\ref{def:dlist} and in particular remains total and
deterministic. For a list $L$, let $\mathcal{E}(L)$ denote the set of
candidate lists proposed for $L$, that is, the images of $L$ under the
admissible edit operators instantiated over the candidate pool. The loop
maintains a current list $L_k$, evaluates $J(\pi_{L'})$ for candidates
$L' \in \mathcal{E}(L_k)$, accepts a candidate if it clears the current
return by at least a margin $\tau$, and stops when no candidate does.

\begin{assumption}[Exact evaluation and finite proposals]
\label{asm:expand}
The EXPAND loop has access to an oracle that returns the exact return
$J(\pi_L)$ of Definition~\ref{def:values} for every queried list $L$, the
acceptance margin satisfies $\tau > 0$, the candidate set $\mathcal{E}(L)$
is finite for every list $L$, and an edit $L \to L'$ is accepted only if
$J(\pi_{L'}) \ge J(\pi_L) + \tau$.
\end{assumption}

Under Assumption~\ref{asm:mdp} the exact-oracle requirement is not an
idealization: Proposition~\ref{prop:certificate} shows that $J(\pi_L)$ is
obtained by solving one linear system, so acceptance decisions are free of
Monte Carlo noise. We call a list $L$ a \emph{$\tau$-local optimum} with
respect to $\mathcal{E}$ when $J(\pi_{L'}) < J(\pi_L) + \tau$ for every
$L' \in \mathcal{E}(L)$, that is, when no admissible edit improves the
return by the margin.

\begin{proposition}[Monotone improvement and finite termination of EXPAND]
\label{prop:expansion}
Let Assumptions~\ref{asm:mdp} and~\ref{asm:expand} hold, and let
$L_0, L_1, L_2, \dots$ be the sequence of lists produced by the EXPAND
loop, where $L_{k+1} \in \mathcal{E}(L_k)$ is an accepted candidate. First,
the returns increase monotonically:
$J(\pi_{L_{k+1}}) \ge J(\pi_{L_k}) + \tau$ for every accepted edit, so the
last iterate is also the best iterate. Second, the number $N$ of accepted
edits satisfies
\[
N \;\le\; \frac{J_{\max} - J(\pi_{L_0})}{\tau}
\;\le\; \frac{2 R_{\max}}{(1 - \gamma)\, \tau},
\]
where $J_{\max} = \max_{\pi} J(\pi) \le R_{\max} / (1 - \gamma)$ is the
optimal return, the maximum over all stationary policies being attained by
an optimal deterministic policy, which exists in a finite discounted MDP
under Assumption~\ref{asm:mdp}. Third, the loop terminates after finitely many oracle calls,
and its final list is a $\tau$-local optimum with respect to $\mathcal{E}$.
\end{proposition}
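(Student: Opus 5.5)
The plan is a potential-function argument with the exact return $J$ as the potential. Monotonicity is immediate from the acceptance rule of Assumption~\ref{asm:expand}. An edit $L_k \to L_{k+1}$ is accepted only if $J(\pi_{L_{k+1}}) \ge J(\pi_{L_k}) + \tau$, and the oracle returns the exact return, so no event of acceptance can lower $J$. Induction on $k$ then gives $J(\pi_{L_k}) \ge J(\pi_{L_0}) + k\tau$, and the last iterate has the largest return among all iterates.

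For the bound on $N$, I would first fix the ceiling $J_{\max}$. Under Assumption~\ref{asm:mdp} the MDP is finite and discounted, so the standard existence theorem for optimal stationary deterministic policies applies (for instance, via the Bellman optimality operator being a $\gamma$-contraction). Every $\pi_{L_k}$ is a deterministic policy, as argued after Definition~\ref{def:dlist}, so $J(\pi_{L_k}) \le J_{\max}$ for all $k$. After $N$ accepted edits we therefore have
\[
J(\pi_{L_0}) + N\tau \le J(\pi_{L_N}) \le J_{\max},
\]
which rearranges to the first inequality. The second inequality follows from the value range \eqref{eq:value-range}: $J_{\max} \le R_{\max}/(1-\gamma)$ and $J(\pi_{L_0}) \ge -R_{\max}/(1-\gamma)$, so $J_{\max} - J(\pi_{L_0}) \le 2R_{\max}/(1-\gamma)$. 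Dividing by $\tau>0$ completes this part. In particular, $N$ is finite.

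For termination in oracle calls, each round scans the candidate set $\mathcal{E}(L_k)$, which is finite by Assumption~\ref{asm:expand}. Each round makes at most $|\mathcal{E}(L_k)|$ queries before it either accepts an edit or exhausts the set. Rounds that accept are bounded in number by $N$, and the first round that accepts nothing stops the loop. The total number of oracle calls is therefore at most $\sum_{k=0}^{N} |\mathcal{E}(L_k)|$, which is finite. The loop halts exactly when every $L' \in \mathcal{E}(L_{\mathrm{final}})$ satisfies $J(\pi_{L'}) < J(\pi_{L_{\mathrm{final}}}) + \tau$, and that is the definition of a $\tau$-local optimum.

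The only step that needs care is the loop semantics. The stopping condition must be that no candidate in the whole finite set $\mathcal{E}(L_k)$ clears the margin, not merely that the first candidate examined fails. Otherwise local optimality would not follow. I would state this reading of the loop explicitly, following the description preceding Assumption~\ref{asm:expand}, and note that the finiteness of $\mathcal{E}(L_k)$ is what makes the exhaustive check terminate. The existence of $J_{\max}$ is standard and can simply be cited.
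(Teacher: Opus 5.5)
Your proposal is correct and follows the paper's proof essentially step for step. It derives monotonicity from the acceptance rule, chains the $N$ accepted improvements against the ceiling $J_{\max}$ and the value range \eqref{eq:value-range}, and gets termination from the finiteness of each $\mathcal{E}(L_k)$ together with the bound on acceptances, with the halting list a $\tau$-local optimum by definition. Your remark that the stopping rule must exhaust the whole candidate set matches the paper's reading of the loop. Your appeal to the determinism of $\pi_{L_k}$ is harmless but unnecessary, since $J_{\max}$ already bounds the return of every stationary policy.
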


\begin{proof}
The first claim restates the acceptance rule of
Assumption~\ref{asm:expand}: an edit is accepted only if it increases the
exact return by at least $\tau$, and since the oracle is exact the recorded
improvement is the true improvement; monotonicity of the sequence
$(J(\pi_{L_k}))_k$ follows by induction on $k$, and the last iterate
maximizes $J$ over the iterates because each term dominates its
predecessors. For the second claim, chain the accepted improvements: after
$N$ accepted edits,
\begin{align}
J(\pi_{L_N})
&= J(\pi_{L_0}) + \sum_{k=0}^{N-1}
\bigl( J(\pi_{L_{k+1}}) - J(\pi_{L_k}) \bigr) \notag \\
&\ge J(\pi_{L_0}) + N \tau, \notag
\end{align}
where the inequality applies the acceptance margin to each summand. Since
$J(\pi_{L_N}) \le J_{\max}$ by definition of $J_{\max}$, rearranging gives
$N \le (J_{\max} - J(\pi_{L_0})) / \tau$. The coarser bound follows from
the value range \eqref{eq:value-range}, which gives
$J_{\max} \le R_{\max} / (1 - \gamma)$ and
$J(\pi_{L_0}) \ge -R_{\max} / (1 - \gamma)$, hence
$J_{\max} - J(\pi_{L_0}) \le 2 R_{\max} / (1 - \gamma)$. For the third
claim we observe that each iteration of the loop either accepts an edit or
exhausts the finite candidate set $\mathcal{E}(L_k)$ without acceptance and
stops. Acceptances occur at most $N \le 2 R_{\max} / ((1 - \gamma) \tau)$
times by the second claim, and every iteration issues at most
$|\mathcal{E}(L_k)| < \infty$ oracle calls by
Assumption~\ref{asm:expand}, so the total number of oracle calls is finite
and the loop halts. At the halting list $L_{\mathrm{fin}}$ no candidate
$L' \in \mathcal{E}(L_{\mathrm{fin}})$ satisfies
$J(\pi_{L'}) \ge J(\pi_{L_{\mathrm{fin}}}) + \tau$, which is the definition
of a $\tau$-local optimum.
\end{proof}

Proposition~\ref{prop:expansion} is a statement about the loop, not about
global optimality: the final list is unimprovable by single admissible
edits at margin $\tau$, and nothing more. The guarantee is a last-iterate
guarantee, and by monotonicity the last iterate coincides with the best
iterate, so no averaging over iterates is involved. Two consequences matter
for the paper's claims. First, EXPAND never degrades the distilled student:
combining the monotonicity with Theorem~\ref{thm:distillation} applied to
the initial list gives
$J(\pi_{L_{\mathrm{fin}}}) \ge J(\pi_{L_0}) \ge J(\bar{\pi}_T) -
2 R_{\max} \epsilon^{\dagger}_0 / (1 - \gamma)^2$, where
$\epsilon^{\dagger}_0$ is the disagreement rate of the initial list.
Second, nothing upper-bounds $J(\pi_{L_{\mathrm{fin}}})$ by
$J(\bar{\pi}_T)$ or $J(\pi_T)$: when the teacher is imperfect on states of
low teacher visitation, an accepted edit that changes actions only on such
states raises the student strictly above the teacher, and the acceptance
test certifies each such crossing exactly. The bound on $N$ depends only
on $R_{\max}$, $\gamma$, and $\tau$; it is loose in practice, since typical
runs accept far fewer edits than $2 R_{\max} / ((1 - \gamma) \tau)$, and it
becomes weak as $\tau \to 0$ or $\gamma \to 1$, the price of a guarantee
that holds for every edit sequence the heuristic proposer may generate.


Theorem~\ref{thm:distillation} relates three quantities, the two returns and
the disagreement rate. In a finite MDP with a known model all three are
exactly computable, so the bound is not merely a guarantee about unseen
quantities: both sides of \eqref{eq:distillation-bound} can be evaluated and
the inequality checked mechanically. This section records the computation
and its correctness. We need one assumption beyond
Assumption~\ref{asm:mdp}.

\begin{assumption}[Exact model access]
\label{asm:model}
The transition kernel $P$, the reward function $r$, the discount factor
$\gamma$, and the initial distribution $\mu_0$ of
Assumption~\ref{asm:mdp} are known, and the environment exposes a procedure
that enumerates the reachable state set
\[
\mathcal{S}_{\mathrm{r}}
= \bigcup_{k \ge 0} \mathcal{S}_k,
\qquad
\mathcal{S}_0 = \operatorname{supp}(\mu_0),
\qquad
\mathcal{S}_{k+1} = \mathcal{S}_k \cup
\bigl\{ s' : P(s' \mid s, a) > 0
\text{ for some } s \in \mathcal{S}_k,\, a \in \mathcal{A} \bigr\}.
\]
\end{assumption}

The set $\mathcal{S}_{\mathrm{r}}$ is finite because
$\mathcal{S}$ is finite, the increasing union stabilizes after at most
$|\mathcal{S}|$ steps, and $\mathcal{S}_{\mathrm{r}}$ is closed under every
transition of every policy: if $s \in \mathcal{S}_{\mathrm{r}}$ and
$P(s' \mid s, a) > 0$ for any $a$, then $s' \in \mathcal{S}_{\mathrm{r}}$
by construction. Consequently every trajectory from $\mu_0$ under any policy
remains in $\mathcal{S}_{\mathrm{r}}$ with probability one, every occupancy
measure of Definition~\ref{def:occupancy} is supported on
$\mathcal{S}_{\mathrm{r}}$, and all linear algebra below may be carried out
on vectors and matrices indexed by $\mathcal{S}_{\mathrm{r}}$ alone. We
write $n = |\mathcal{S}_{\mathrm{r}}|$ and treat functions on
$\mathcal{S}_{\mathrm{r}}$ as vectors in $\mathbb{R}^{n}$.

For a stationary policy $\pi$, define the policy-induced reward vector and
transition matrix
\[
r_{\pi}(s) = \sum_{a \in \mathcal{A}} \pi(a \mid s)\, r(s, a),
\qquad
P_{\pi}(s, s') = \sum_{a \in \mathcal{A}} \pi(a \mid s)\, P(s' \mid s, a),
\]
for $s, s' \in \mathcal{S}_{\mathrm{r}}$; for a deterministic policy these
reduce to $r_{\pi}(s) = r(s, \pi(s))$ and
$P_{\pi}(s, \cdot) = P(\cdot \mid s, \pi(s))$. The matrix $P_{\pi}$ is
row-stochastic, since each row is a mixture of the probability vectors
$P(\cdot \mid s, a)$ and $\mathcal{S}_{\mathrm{r}}$ is closed, so no mass
escapes the index set.

\begin{proposition}[Exact census and machine-checkable certificate]
\label{prop:certificate}
Let Assumptions~\ref{asm:mdp} and~\ref{asm:model} hold, and let $\pi$ be
any stationary policy. First, the matrix $I - \gamma P_{\pi}$ is
invertible, and the value vector
$v^{\pi} = (V^{\pi}(s))_{s \in \mathcal{S}_{\mathrm{r}}}$ is the unique
solution of the Bellman linear system
\begin{equation}
\label{eq:bellman-system}
v^{\pi} = r_{\pi} + \gamma P_{\pi} v^{\pi},
\qquad \text{that is,} \qquad
v^{\pi} = (I - \gamma P_{\pi})^{-1} r_{\pi},
\end{equation}
so that $J(\pi) = \mu_0^{\top} v^{\pi}$. Second, the occupancy measure of
Definition~\ref{def:occupancy} satisfies
\begin{equation}
\label{eq:occupancy-system}
d^{\pi} = (1 - \gamma) \bigl( I - \gamma P_{\pi}^{\top} \bigr)^{-1} \mu_0.
\end{equation}
Third, applying \eqref{eq:bellman-system} to $\pi_T$, $\bar{\pi}_T$, and
$\pi_S$, and \eqref{eq:occupancy-system} to $\pi_T$ and $\bar{\pi}_T$,
yields in finitely many arithmetic operations the exact values of
$J(\pi_T)$, $J(\bar{\pi}_T)$, $J(\pi_S)$, $\Delta_T$, $\epsilon$, and
$\epsilon^{\dagger}$, and therefore of both sides of
\eqref{eq:distillation-bound}; the certified inequality
\[
\frac{2 R_{\max}\, \epsilon^{\dagger}}{(1 - \gamma)^2}
\;\ge\; J(\bar{\pi}_T) - J(\pi_S)
\]
holds by Theorem~\ref{thm:distillation} and is verifiable by evaluating
both sides.
\end{proposition}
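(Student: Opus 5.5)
The proof has three parts: invertibility and the Bellman system, the occupancy identity, and the finite-computation claim.

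\emph{Invertibility and the Bellman system.} Since $P_{\pi}$ is row-stochastic on the closed set $\mathcal{S}_{\mathrm{r}}$, its $\infty$-norm is one, so $\|\gamma P_{\pi}\|_\infty = \gamma < 1$. The Neumann series $\sum_{t\ge 0}\gamma^t P_{\pi}^t$ therefore converges and equals $(I-\gamma P_{\pi})^{-1}$.

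Next, unroll Definition~\ref{def:values}. By the Markov property, $\Pr^{\pi}(s_t = s' \mid s_0 = s) = P_{\pi}^t(s,s')$ and $\mathbb{E}^{\pi}[r(s_t,a_t)\mid s_0=s] = (P_{\pi}^t r_{\pi})(s)$. Exchanging sum and expectation, which is justified by the bounded summable envelope $\gamma^t R_{\max}$ as in the proof of Lemma~\ref{lem:pdl}, gives $v^{\pi} = \sum_t \gamma^t P_{\pi}^t r_{\pi} = (I-\gamma P_{\pi})^{-1} r_{\pi}$. This vector satisfies $v = r_{\pi} + \gamma P_{\pi} v$, and it is the unique solution by invertibility. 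Since $\mu_0$ is supported in $\mathcal{S}_0 \subseteq \mathcal{S}_{\mathrm{r}}$, we get $J(\pi) = \mu_0^\top v^{\pi}$.

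\emph{Occupancy identity.} The same computation gives $\Pr^{\pi}_{\mu_0}(s_t = \cdot) = (P_{\pi}^\top)^t \mu_0$. Hence
\[
d^{\pi} = (1-\gamma)\sum_t \gamma^t (P_{\pi}^{\top})^t \mu_0 .
\]
This series converges to $(1-\gamma)(I-\gamma P_{\pi}^\top)^{-1}\mu_0$, because $\gamma P_{\pi}^\top$ has the same spectrum as $\gamma P_{\pi}$ and therefore spectral radius at most $\gamma$. The remark after Assumption~\ref{asm:model} guarantees that restricting to $\mathcal{S}_{\mathrm{r}}$ loses no mass.

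\emph{Finite computation of the certificate.} Each linear solve, for example by Gaussian elimination on an $n\times n$ nonsingular system, takes finitely many arithmetic operations. This yields $v^{\pi_T}$, $v^{\bar{\pi}_T}$, $v^{\pi_S}$ and hence $J(\pi_T)$, $J(\bar{\pi}_T)$, $J(\pi_S)$, and $\Delta_T$ by subtraction. Solving for $d^{\pi_T}$ and $d^{\bar{\pi}_T}$, and reading off $a_T(s)$ and $\pi_S(s)$ at every $s\in\mathcal{S}_{\mathrm{r}}$, then gives $\epsilon$ and $\epsilon^\dagger$ as finite weighted sums of indicators, as in Definition~\ref{def:disagreement}. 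Here $a_T(s)$ is available from the teacher's finite probability table by Definition~\ref{def:teacher}. Both sides of \eqref{eq:distillation-bound} are thus exact numbers. The inequality between them is Theorem~\ref{thm:distillation}, so checking it is a single comparison.

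The main obstacle is rigorously identifying the series definition of $V^{\pi}$ with the matrix power expression. This requires the Markov-chain identity $\Pr^{\pi}(s_t=s'\mid s_0=s)=P_{\pi}^t(s,s')$, proved by induction on $t$ via conditioning on $(s_{t-1},a_{t-1})$, together with the dominated-convergence interchange. The rest is linear algebra. The induction step uses only the mixture definition of $P_{\pi}$ and closure of $\mathcal{S}_{\mathrm{r}}$.
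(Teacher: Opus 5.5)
Your proposal is correct and follows essentially the same route as the paper: the bound $\|\gamma P_\pi\|_\infty = \gamma < 1$ gives the Neumann series, and the Markov-chain identity $\mathbb{E}^{\pi}[r(s_t,a_t)\mid s_0=s] = (P_\pi^t r_\pi)(s)$ identifies $v^\pi$ with $\sum_t \gamma^t P_\pi^t r_\pi$; the transposed series, whose spectral radius is the same, gives the occupancy formula, and the third claim is the same finite-computation bookkeeping. The only cosmetic difference is that you read the Bellman equation off the closed form $(I-\gamma P_\pi)^{-1} r_\pi$, whereas the paper obtains it by reindexing the series.
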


\begin{proof}
We prove the three claims in order. For the first, invertibility follows
from a norm bound on the spectral radius. Since $P_{\pi}$ is
row-stochastic, its induced $\ell_\infty$ operator norm is
$\| P_{\pi} \|_\infty = \max_{s} \sum_{s'} |P_{\pi}(s, s')| = 1$, hence
$\| \gamma P_{\pi} \|_\infty = \gamma < 1$. The spectral radius of a matrix
is bounded by any induced operator norm, so
$\rho(\gamma P_{\pi}) \le \gamma < 1$, no eigenvalue of
$\gamma P_{\pi}$ equals $1$, and $I - \gamma P_{\pi}$ is invertible with
convergent Neumann series
\begin{equation}
\label{eq:neumann}
(I - \gamma P_{\pi})^{-1}
= \sum_{t=0}^{\infty} \gamma^{t} P_{\pi}^{t},
\qquad
\bigl\| (I - \gamma P_{\pi})^{-1} \bigr\|_\infty
\le \sum_{t=0}^{\infty} \gamma^{t} \| P_{\pi} \|_\infty^{t}
= \frac{1}{1 - \gamma}.
\end{equation}
Next we show $v^{\pi}$ solves \eqref{eq:bellman-system}. For the state
process under $\pi$, which is Markov on $\mathcal{S}_{\mathrm{r}}$ with
transition matrix $P_{\pi}$, the expected reward at time $t$ started from
$s$ is
$\mathbb{E}^{\pi}[r(s_t, a_t) \mid s_0 = s] = (P_{\pi}^{t} r_{\pi})(s)$;
this follows by induction on $t$, the base case $t = 0$ being the
definition of $r_{\pi}$ as the expected immediate reward under
$a_0 \sim \pi(\cdot \mid s)$, and the inductive step being the Markov
property together with the tower rule. Summing the absolutely convergent
series of Definition~\ref{def:values} termwise,
\begin{align}
v^{\pi}
&= \sum_{t=0}^{\infty} \gamma^{t} P_{\pi}^{t} r_{\pi} \notag \\
&= r_{\pi} + \gamma P_{\pi} \sum_{t=0}^{\infty} \gamma^{t} P_{\pi}^{t}
r_{\pi} \notag \\
&= r_{\pi} + \gamma P_{\pi} v^{\pi}, \notag
\end{align}
where the second line reindexes the series, which is permitted by absolute
convergence under the bound $\| \gamma^{t} P_{\pi}^{t} r_{\pi} \|_\infty
\le \gamma^{t} R_{\max}$. Uniqueness holds because
\eqref{eq:bellman-system} is a linear system with the invertible matrix
$I - \gamma P_{\pi}$, and comparing the series representation of $v^{\pi}$
with \eqref{eq:neumann} confirms
$v^{\pi} = (I - \gamma P_{\pi})^{-1} r_{\pi}$. The identity
$J(\pi) = \mu_0^{\top} v^{\pi}$ is Definition~\ref{def:values}.

For the second claim, the marginal distribution of the state at time $t$
is the row vector $\mu_0^{\top} P_{\pi}^{t}$, again by induction on $t$
using the Markov property. Definition~\ref{def:occupancy} then gives, as a
column vector,
\begin{align}
d^{\pi}
&= (1 - \gamma) \sum_{t=0}^{\infty} \gamma^{t}
\bigl( P_{\pi}^{\top} \bigr)^{t} \mu_0 \notag \\
&= (1 - \gamma) \bigl( I - \gamma P_{\pi}^{\top} \bigr)^{-1} \mu_0, \notag
\end{align}
where the second line applies the Neumann series \eqref{eq:neumann} to
$P_{\pi}^{\top}$, valid because
$\rho(\gamma P_{\pi}^{\top}) = \rho(\gamma P_{\pi}) \le \gamma < 1$, a
matrix and its transpose having the same spectrum.

For the third claim, each of the five linear solves is a finite computation
on $n$-dimensional data, and the derived quantities are finite arithmetic
expressions in their outputs: $J$ values are inner products
$\mu_0^{\top} v^{\pi}$, the argmax gap is
$\Delta_T = J(\pi_T) - J(\bar{\pi}_T)$, and the disagreement rates of
Definition~\ref{def:disagreement} are the finite sums
$\epsilon = \sum_{s} d^{\pi_T}(s) \mathbf{1}[\pi_S(s) \neq a_T(s)]$ and
$\epsilon^{\dagger} = \sum_{s} d^{\bar{\pi}_T}(s)
\mathbf{1}[\pi_S(s) \neq a_T(s)]$, whose indicator entries require one
evaluation of $\pi_S$ and one of $a_T$ per state in
$\mathcal{S}_{\mathrm{r}}$. The certified inequality is precisely
\eqref{eq:distillation-bound}, which holds by
Theorem~\ref{thm:distillation}, and both of its sides are among the
computed quantities.
\end{proof}

Two remarks on the computation. First, the certificate is numerically
benign: from $\| I - \gamma P_{\pi} \|_\infty \le 1 + \gamma$ and the
resolvent bound in \eqref{eq:neumann}, the condition number satisfies
$\kappa_\infty(I - \gamma P_{\pi}) \le (1 + \gamma) / (1 - \gamma)$, which
is about $199$ for $\gamma = 0.99$, so double-precision solves determine
both sides of the certificate to far more digits than the comparison
requires; when $P$, $r$, $\gamma$, and $\mu_0$ are rational, as in the
KeyDoor environment, all quantities are rational and the check can be run
in exact arithmetic. Second, the proposition is what elevates
Theorem~\ref{thm:distillation} from a guarantee to a certificate: the
experiments report, per seed and per regime, the measured gap
$J(\bar{\pi}_T) - J(\pi_S)$, the bound
$2 R_{\max} \epsilon^{\dagger} / (1 - \gamma)^2$, the argmax gap
$\Delta_T$ of Corollary~\ref{cor:stochastic-teacher}, and the two
disagreement rates, and the verification that the bound dominates the gap
is a mechanical comparison of two exactly computed numbers, with no
sampling error and no confidence level. What the proposition does not
provide is scalability: the census costs $O(n^{3})$ per linear solve in
dense form (structured sparsity reduces this in practice), which is the
deliberate price of exactness at the $n \approx 10^{4}$ to $10^{5}$ scale
of our environment, and Monte Carlo estimation with concentration bounds
would replace it beyond that scale at the cost of reintroducing confidence
parameters into the certificate.


The certificate of Theorem~\ref{thm:distillation} is exact but loose. Its
right-hand side is quadratic in the effective horizon $1 / (1 - \gamma)$,
and at the discount factor $\gamma = 0.99$ of our environment the prefactor
$2 R_{\max} / (1 - \gamma)^2$ is of order $10^{4}$, so the guarantee holds
vacuously for any disagreement rate the experiments actually produce. The
slack is not intrinsic to the return gap; it is introduced in a single step
of the proof of Theorem~\ref{thm:distillation}, where the advantage of the
teacher's action under the student's value function is bounded by its
worst-case range $2 R_{\max} / (1 - \gamma)$ at every disagreeing state. The
performance-difference identity that precedes that step,
\eqref{eq:pdl-instantiated}, is exact and carries only a single factor of
$1 / (1 - \gamma)$. We now retain that identity and replace the worst-case
advantage by the advantage that the finite model actually exhibits at each
disagreeing state, a quantity that Proposition~\ref{prop:certificate}
already computes. The result is a horizon-aware certificate whose prefactor
is the magnitude of the advantage at the states where the student and the
teacher disagree, rather than a global constant, and which is therefore
small precisely when the disagreements fall on states at which the teacher's
action barely matters.

We first name the quantities that localize the return loss.

\begin{definition}[Advantage gap and advantage-weighted disagreement]
\label{def:advantage-gap}
Let $\pi_S$ be a deterministic student policy, let $\bar{\pi}_T$ be the
argmax teacher of Definition~\ref{def:teacher} with argmax action
$a_T(s)$, and let
\[
D = \bigl\{ s \in \mathcal{S} : \pi_S(s) \neq a_T(s) \bigr\}
\]
be the \emph{disagreement set}. The \emph{advantage gap} at a state $s$ is
\[
\delta^{\dagger}(s)
= \bigl| A^{\pi_S}\bigl(s, a_T(s)\bigr) \bigr|,
\]
the magnitude of the advantage, under the student's own value function of
Definition~\ref{def:values}, of the action the argmax teacher would take.
The \emph{visitation-weighted advantage disagreement} is
\[
\bar{A}^{\dagger}
= \sum_{s \in D} d^{\bar{\pi}_T}(s)\, \delta^{\dagger}(s),
\]
and the \emph{peak disagreement advantage} is
\[
A^{\dagger}_{\max}
= \max_{s \in D} \delta^{\dagger}(s),
\]
with the convention $A^{\dagger}_{\max} = 0$ when $D$ is empty, where
$d^{\bar{\pi}_T}$ is the discounted occupancy measure of the argmax teacher
(Definition~\ref{def:occupancy}).
\end{definition}

Every quantity in Definition~\ref{def:advantage-gap} is a deterministic
function of the model and the two policies. The disagreement set requires
one evaluation of $\pi_S$ and one of $a_T$ per state, exactly as the
disagreement rate $\epsilon^{\dagger}$ of Definition~\ref{def:disagreement}
does. The advantage gap requires the student value vector $V^{\pi_S}$, which
Proposition~\ref{prop:certificate} obtains from the Bellman solve
\eqref{eq:bellman-system}, followed by a single backup
$A^{\pi_S}(s, a_T(s)) = r(s, a_T(s)) + \gamma \sum_{s'} P(s' \mid s, a_T(s))
V^{\pi_S}(s') - V^{\pi_S}(s)$ at each disagreeing state. Nothing in the
definition is estimated, so in the finite setting of
Assumption~\ref{asm:mdp} with the model access of
Assumption~\ref{asm:model} all three quantities are exactly computable, and
we return to this point after the certificate.

We isolate the exact identity that Theorem~\ref{thm:distillation} already
established, restated so that only the disagreement set contributes.

\begin{lemma}[Advantage localization of the return gap]
\label{lem:advantage-localization}
Let Assumption~\ref{asm:mdp} hold, let $\bar{\pi}_T$ be the argmax teacher of
Definition~\ref{def:teacher}, and let $\pi_S$ be any deterministic policy.
With the disagreement set $D$ of Definition~\ref{def:advantage-gap},
\begin{equation}
\label{eq:advantage-localization}
J(\bar{\pi}_T) - J(\pi_S)
= \frac{1}{1 - \gamma} \sum_{s \in D}
d^{\bar{\pi}_T}(s)\, A^{\pi_S}\bigl(s, a_T(s)\bigr).
\end{equation}
\end{lemma}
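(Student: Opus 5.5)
The identity is essentially the first half of the proof of Theorem~\ref{thm:distillation}, kept as an equality rather than turned into a bound. I will apply Lemma~\ref{lem:pdl} with $\pi = \bar{\pi}_T$ and $\pi' = \pi_S$. Both are stationary policies, so Assumption~\ref{asm:mdp} is all that is required. Because $\bar{\pi}_T$ is deterministic with $\bar{\pi}_T(s) = a_T(s)$, the inner expectation $\mathbb{E}_{a \sim \bar{\pi}_T(\cdot \mid s)}[A^{\pi_S}(s,a)]$ collapses to the point evaluation $A^{\pi_S}(s, a_T(s))$. This gives exactly \eqref{eq:pdl-instantiated}:
\[
J(\bar{\pi}_T) - J(\pi_S) = \frac{1}{1-\gamma}\sum_{s \in \mathcal{S}} d^{\bar{\pi}_T}(s)\, A^{\pi_S}\bigl(s, a_T(s)\bigr).
\]

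Next I will partition $\mathcal{S}$ into the disagreement set $D$ of Definition~\ref{def:advantage-gap} and its complement $\mathcal{S}\setminus D = \{s : \pi_S(s) = a_T(s)\}$. The sum is finite, so it splits into two pieces with no convergence issues.

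On the complement, $a_T(s) = \pi_S(s)$, so each summand is $A^{\pi_S}(s,\pi_S(s))$. This vanishes because, for a deterministic policy, $V^{\pi_S}(s) = Q^{\pi_S}(s,\pi_S(s))$. To verify that, condition the defining series in Definition~\ref{def:values} on the first step: take $a_0 = \pi_S(s)$ and use the Markov property, or equivalently read off the row $s$ of the Bellman system \eqref{eq:bellman-system}. What remains is precisely the sum over $D$, which is \eqref{eq:advantage-localization}.

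There is no genuine obstacle here. The only point needing care is the justification that $V^{\pi_S}(s) = Q^{\pi_S}(s,\pi_S(s))$. I would cite the Bellman equation of Proposition~\ref{prop:certificate} for it, which gives $v^{\pi_S}(s) = r(s,\pi_S(s)) + \gamma\sum_{s'}P(s'\mid s,\pi_S(s))\,v^{\pi_S}(s')$, and that right-hand side is $Q^{\pi_S}(s,\pi_S(s))$ by definition.

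One subtlety concerns the index set. The proposition is stated on $\mathcal{S}_{\mathrm{r}}$, but Lemma~\ref{lem:pdl} sums over all of $\mathcal{S}$. I will avoid the issue by proving the one-step identity directly from the series definition, which holds at every $s \in \mathcal{S}$. Alternatively, one can note that $d^{\bar{\pi}_T}$ vanishes off the reachable set, so states outside it contribute nothing either way.
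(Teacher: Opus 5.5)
Your proof is correct and follows the paper's argument exactly: you instantiate Lemma~\ref{lem:pdl} with $\pi = \bar{\pi}_T$ and $\pi' = \pi_S$, collapse the inner expectation to the point evaluation at $a_T(s)$, and discard every term off $D$ using $A^{\pi_S}(s,\pi_S(s)) = 0$ for deterministic $\pi_S$. Of your two justifications for $V^{\pi_S}(s) = Q^{\pi_S}(s,\pi_S(s))$, keep the first-step conditioning from Definition~\ref{def:values}, because citing Proposition~\ref{prop:certificate} would bring in Assumption~\ref{asm:model}, which the lemma does not assume.
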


\begin{proof}
Lemma~\ref{lem:pdl} applied with $\pi = \bar{\pi}_T$ and $\pi' = \pi_S$
gives the exact identity \eqref{eq:pdl-instantiated},
\[
J(\bar{\pi}_T) - J(\pi_S)
= \frac{1}{1 - \gamma} \sum_{s \in \mathcal{S}}
d^{\bar{\pi}_T}(s)\, A^{\pi_S}\bigl(s, a_T(s)\bigr),
\]
because $\bar{\pi}_T$ is deterministic with $\bar{\pi}_T(s) = a_T(s)$, so the
inner expectation over actions in \eqref{eq:pdl} collapses to a point
evaluation. For a state $s \notin D$ we have $\pi_S(s) = a_T(s)$, whence
$A^{\pi_S}(s, a_T(s)) = A^{\pi_S}(s, \pi_S(s)) = Q^{\pi_S}(s, \pi_S(s)) -
V^{\pi_S}(s) = 0$, the last equality holding because the value of a
deterministic policy equals the action value of its own action
(Definition~\ref{def:values}). Every term outside $D$ therefore vanishes,
and the sum over $\mathcal{S}$ reduces to the sum over $D$, which is
\eqref{eq:advantage-localization}.
\end{proof}

The identity \eqref{eq:advantage-localization} is signed and exact. Bounding
it by the advantage the model exhibits, rather than by the worst advantage
the value range permits, yields the refined certificate.

\begin{proposition}[Advantage-gap certificate]
\label{prop:advantage-certificate}
Let Assumption~\ref{asm:mdp} hold, let $\bar{\pi}_T$ be the argmax teacher of
Definition~\ref{def:teacher}, and let $\pi_S$ be any deterministic policy,
in particular any decision-list student $\pi_S \in \Pi_{\mathcal{L}}$ of
Definition~\ref{def:dlist}. With the disagreement rate $\epsilon^{\dagger}$
of Definition~\ref{def:disagreement} and the advantage-weighted quantities
$\bar{A}^{\dagger}$ and $A^{\dagger}_{\max}$ of
Definition~\ref{def:advantage-gap},
\begin{equation}
\label{eq:advantage-certificate}
\bigl| J(\bar{\pi}_T) - J(\pi_S) \bigr|
\;\le\; \frac{\bar{A}^{\dagger}}{1 - \gamma}
\;\le\; \frac{A^{\dagger}_{\max}\, \epsilon^{\dagger}}{1 - \gamma}
\;\le\; \frac{2 R_{\max}\, \epsilon^{\dagger}}{(1 - \gamma)^2}.
\end{equation}
Moreover the signed one-sided refinement
\begin{equation}
\label{eq:advantage-onesided}
J(\bar{\pi}_T) - J(\pi_S)
\;\le\; \frac{1}{1 - \gamma} \sum_{s \in D}
d^{\bar{\pi}_T}(s)\,
\bigl[ A^{\pi_S}\bigl(s, a_T(s)\bigr) \bigr]_{+}
\end{equation}
holds, where $[x]_{+} = \max(x, 0)$. If in addition
Assumption~\ref{asm:model} holds, then $\bar{A}^{\dagger}$,
$A^{\dagger}_{\max}$, and both sides of
\eqref{eq:advantage-certificate} and \eqref{eq:advantage-onesided} are
computed exactly in finitely many arithmetic operations.
\end{proposition}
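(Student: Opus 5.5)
The plan is to start from the exact signed identity of Lemma~\ref{lem:advantage-localization} and apply three elementary inequalities in sequence, each corresponding to one link of the chain \eqref{eq:advantage-certificate}. For the first link, I will take absolute values in \eqref{eq:advantage-localization} and use the triangle inequality over the finite sum on $D$. Since $d^{\bar{\pi}_T}(s) \ge 0$, each term satisfies $|d^{\bar{\pi}_T}(s) A^{\pi_S}(s,a_T(s))| = d^{\bar{\pi}_T}(s)\,\delta^{\dagger}(s)$. This gives $|J(\bar{\pi}_T) - J(\pi_S)| \le \bar{A}^{\dagger}/(1-\gamma)$ directly from Definition~\ref{def:advantage-gap}.

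For the second link, I will bound each $\delta^{\dagger}(s)$ with $s \in D$ by $A^{\dagger}_{\max}$. Pulling this constant out of the sum leaves $\sum_{s \in D} d^{\bar{\pi}_T}(s)$, which equals $\epsilon^{\dagger}$ by Definition~\ref{def:disagreement}, since the indicator $\mathbf{1}[\pi_S(s) \neq a_T(s)]$ is exactly membership in $D$. The empty-$D$ case needs a separate check: there $\bar{A}^{\dagger} = 0 = A^{\dagger}_{\max}\epsilon^{\dagger}$, so the inequality holds trivially. For the third link, I will invoke the advantage range \eqref{eq:value-range}, which gives $\delta^{\dagger}(s) \le 2R_{\max}/(1-\gamma)$ for every $s$ and hence $A^{\dagger}_{\max} \le 2R_{\max}/(1-\gamma)$. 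Multiplying by $\epsilon^{\dagger}/(1-\gamma) \ge 0$ then yields the Theorem~\ref{thm:distillation} bound.

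For the one-sided refinement \eqref{eq:advantage-onesided}, I will again start from \eqref{eq:advantage-localization} and use $x \le [x]_+$ termwise, with the nonnegative weights $d^{\bar{\pi}_T}(s)/(1-\gamma)$ preserving the inequality under summation. This also gives the remark that the certificate coincides with the gap when the student weakly dominates the teacher. If $A^{\pi_S}(s,a_T(s)) \le 0$ on $D$, the right-hand side vanishes, and the gap is then nonpositive.

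The computability claim is where I expect the only real care to be needed, although it is not hard. Under Assumption~\ref{asm:model}, Proposition~\ref{prop:certificate} supplies $V^{\pi_S}$ via the Bellman solve \eqref{eq:bellman-system} and $d^{\bar{\pi}_T}$ via \eqref{eq:occupancy-system}, both on the closed reachable set $\mathcal{S}_{\mathrm{r}}$. One subtlety is that $D$ is defined over all of $\mathcal{S}$, while computation is restricted to $\mathcal{S}_{\mathrm{r}}$. This is harmless for $\bar{A}^{\dagger}$ and for \eqref{eq:advantage-onesided}, because $d^{\bar{\pi}_T}$ is supported on $\mathcal{S}_{\mathrm{r}}$. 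For $A^{\dagger}_{\max}$, however, the maximum over $D$ may include unreachable states. I would handle this by noting that $V^{\pi_S}$ on any state is itself determined by a finite Bellman solve over the set reachable from that state, since $\mathcal{S}$ is finite. Alternatively, I would interpret $D$ as $D \cap \mathcal{S}_{\mathrm{r}}$, which changes no link of the chain because the third link only uses the pointwise bound. With these values in hand, each $A^{\pi_S}(s,a_T(s))$ requires one backup $r(s,a_T(s)) + \gamma\sum_{s'}P(s'\mid s,a_T(s))V^{\pi_S}(s') - V^{\pi_S}(s)$. The remaining quantities are finite sums, a finite maximum, and positive parts, so the whole computation takes finitely many arithmetic operations.
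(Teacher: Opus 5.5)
Your proposal is correct and follows the paper's proof step for step. Both proofs take the triangle inequality on the localized identity of Lemma~\ref{lem:advantage-localization}, factor out $A^{\dagger}_{\max}$ to recover $\epsilon^{\dagger}$, apply the value-range bound for the last link, use termwise $x \le [x]_{+}$ for the one-sided refinement, and obtain computability from the Bellman solves of Proposition~\ref{prop:certificate} plus one backup per disagreeing state. Two small additions: your handling of states of $D$ outside $\mathcal{S}_{\mathrm{r}}$, which the paper passes over silently, is a genuine refinement, and in your side remark, the equality $\bar{A}^{\dagger}/(1-\gamma) = |J(\bar{\pi}_T) - J(\pi_S)|$ under the sign condition comes from the triangle inequality being tight, not from the one-sided bound vanishing.
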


\begin{proof}
We treat the two-sided certificate first. Starting from the exact identity
\eqref{eq:advantage-localization} of
Lemma~\ref{lem:advantage-localization}, the triangle inequality applied to
the finite sum over $D$ gives
\begin{align}
\bigl| J(\bar{\pi}_T) - J(\pi_S) \bigr|
&= \frac{1}{1 - \gamma}
\Biggl| \sum_{s \in D} d^{\bar{\pi}_T}(s)\,
A^{\pi_S}\bigl(s, a_T(s)\bigr) \Biggr| \notag \\
&\le \frac{1}{1 - \gamma} \sum_{s \in D}
d^{\bar{\pi}_T}(s)\, \bigl| A^{\pi_S}\bigl(s, a_T(s)\bigr) \bigr| \notag \\
&= \frac{1}{1 - \gamma} \sum_{s \in D}
d^{\bar{\pi}_T}(s)\, \delta^{\dagger}(s)
= \frac{\bar{A}^{\dagger}}{1 - \gamma}, \notag
\end{align}
where the occupancy weights are nonnegative, so they pass through the
absolute value, the third line is the definition of the advantage gap
$\delta^{\dagger}(s)$, and the last equality is the definition of
$\bar{A}^{\dagger}$ (Definition~\ref{def:advantage-gap}). This is the first
inequality of \eqref{eq:advantage-certificate}. For the second, bound each
advantage gap on $D$ by its maximum,
$\delta^{\dagger}(s) \le A^{\dagger}_{\max}$, and factor the constant out of
the sum:
\[
\frac{\bar{A}^{\dagger}}{1 - \gamma}
= \frac{1}{1 - \gamma} \sum_{s \in D}
d^{\bar{\pi}_T}(s)\, \delta^{\dagger}(s)
\le \frac{A^{\dagger}_{\max}}{1 - \gamma} \sum_{s \in D}
d^{\bar{\pi}_T}(s)
= \frac{A^{\dagger}_{\max}\, \epsilon^{\dagger}}{1 - \gamma},
\]
the final equality being
$\sum_{s \in D} d^{\bar{\pi}_T}(s) = \epsilon^{\dagger}$ by
Definition~\ref{def:disagreement}, since $D$ is exactly the event
$\{ \pi_S(s) \neq a_T(s) \}$. For the third inequality, the value-range
bound \eqref{eq:value-range} gives
$\delta^{\dagger}(s) = |A^{\pi_S}(s, a_T(s))| \le 2 R_{\max} / (1 - \gamma)$
at every state, hence $A^{\dagger}_{\max} \le 2 R_{\max} / (1 - \gamma)$,
which substituted into the previous display recovers the right-hand side of
\eqref{eq:advantage-certificate}. This chain also re-derives
Theorem~\ref{thm:distillation} as its coarsest link, so the advantage-gap
certificate never exceeds the certificate of
Theorem~\ref{thm:distillation}.

For the signed refinement \eqref{eq:advantage-onesided}, split $D$ into the
states where the teacher's action has positive advantage under $\pi_S$ and
the rest. In \eqref{eq:advantage-localization} the terms with
$A^{\pi_S}(s, a_T(s)) \le 0$ are nonpositive, and dropping them can only
increase the sum, so
\[
J(\bar{\pi}_T) - J(\pi_S)
= \frac{1}{1 - \gamma} \sum_{s \in D}
d^{\bar{\pi}_T}(s)\, A^{\pi_S}\bigl(s, a_T(s)\bigr)
\le \frac{1}{1 - \gamma} \sum_{s \in D}
d^{\bar{\pi}_T}(s)\,
\bigl[ A^{\pi_S}\bigl(s, a_T(s)\bigr) \bigr]_{+},
\]
which is \eqref{eq:advantage-onesided}; the occupancy weights are
nonnegative, so each retained term is bounded above by its positive part.

For exact computability under Assumption~\ref{asm:model},
Proposition~\ref{prop:certificate} produces the student value vector
$V^{\pi_S}$ from the linear solve \eqref{eq:bellman-system} and the argmax
teacher occupancy $d^{\bar{\pi}_T}$ from the linear solve
\eqref{eq:occupancy-system}, each in finitely many operations on the
$n$-dimensional data indexed by the reachable set $\mathcal{S}_{\mathrm{r}}$.
The advantage of the teacher's action at a state $s$ is the finite backup
\[
A^{\pi_S}\bigl(s, a_T(s)\bigr)
= r\bigl(s, a_T(s)\bigr)
+ \gamma \sum_{s' \in \mathcal{S}_{\mathrm{r}}}
P\bigl(s' \mid s, a_T(s)\bigr) V^{\pi_S}(s')
- V^{\pi_S}(s),
\]
which uses only the already-computed $V^{\pi_S}$ and the known model, and the
disagreement set $D$ is decided by one comparison $\pi_S(s)$ against
$a_T(s)$ per state. The advantage gaps $\delta^{\dagger}(s)$, their weighted
sum $\bar{A}^{\dagger}$, their maximum $A^{\dagger}_{\max}$, and the positive
parts in \eqref{eq:advantage-onesided} are then finite arithmetic
expressions in these quantities, so both sides of
\eqref{eq:advantage-certificate} and \eqref{eq:advantage-onesided} are
exactly computable.
\end{proof}

The certificate acquires teeth in a regime the worst-case bound cannot see.
We state the condition under which its right-hand side falls below a
prescribed return-gap tolerance, together with the low-advantage regime in
which the horizon factors cancel entirely.

\begin{corollary}[Non-vacuity regime]
\label{cor:nonvacuity}
Under the conditions of Proposition~\ref{prop:advantage-certificate}, fix a
target tolerance $g > 0$. The advantage-gap certificate certifies
$| J(\bar{\pi}_T) - J(\pi_S) | \le g$ whenever
\begin{equation}
\label{eq:nonvacuity-weighted}
\bar{A}^{\dagger} \le (1 - \gamma)\, g,
\end{equation}
and a sufficient condition in terms of the disagreement rate, valid when
$A^{\dagger}_{\max} > 0$, is
\begin{equation}
\label{eq:nonvacuity-rate}
\epsilon^{\dagger}
\;\le\; \frac{(1 - \gamma)\, g}{A^{\dagger}_{\max}}.
\end{equation}
The certificate of Theorem~\ref{thm:distillation} certifies the same
tolerance only under
$\epsilon^{\dagger} \le (1 - \gamma)^2 g / (2 R_{\max})$, so the admissible
disagreement rate is larger by the factor
$2 R_{\max} / \bigl[ (1 - \gamma) A^{\dagger}_{\max} \bigr] \ge 1$. If in
addition the disagreements fall on states of small advantage, in the sense
that $A^{\dagger}_{\max} \le c\,(1 - \gamma)$ for a constant $c \ge 0$, then
\begin{equation}
\label{eq:low-advantage}
\bigl| J(\bar{\pi}_T) - J(\pi_S) \bigr|
\;\le\; c\, \epsilon^{\dagger},
\end{equation}
in which every factor of the effective horizon has cancelled.
\end{corollary}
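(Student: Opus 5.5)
The corollary follows by reading the chain \eqref{eq:advantage-certificate} of Proposition~\ref{prop:advantage-certificate} at a prescribed tolerance. No new analysis is needed; each claim is one line of algebra on the already-proved inequalities.

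The first step handles the weighted condition. By the first inequality of \eqref{eq:advantage-certificate}, $|J(\bar{\pi}_T) - J(\pi_S)| \le \bar{A}^{\dagger}/(1-\gamma)$. Under \eqref{eq:nonvacuity-weighted} the right-hand side is at most $g$, which settles the first claim.

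For the rate condition \eqref{eq:nonvacuity-rate}, I will use the second inequality of the chain, $\bar{A}^{\dagger}/(1-\gamma) \le A^{\dagger}_{\max}\epsilon^{\dagger}/(1-\gamma)$. Multiplying \eqref{eq:nonvacuity-rate} by $A^{\dagger}_{\max}/(1-\gamma) > 0$ gives $A^{\dagger}_{\max}\epsilon^{\dagger}/(1-\gamma) \le g$. The hypothesis $A^{\dagger}_{\max} > 0$ is needed only to make the division meaningful. If $A^{\dagger}_{\max} = 0$, then $\bar{A}^{\dagger} = 0$ and the gap is zero outright, which I will remark on separately.

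Next I compare with Theorem~\ref{thm:distillation}. Its bound $2R_{\max}\epsilon^{\dagger}/(1-\gamma)^2 \le g$ is equivalent to $\epsilon^{\dagger} \le (1-\gamma)^2 g/(2R_{\max})$. Dividing the admissible threshold in \eqref{eq:nonvacuity-rate} by this one gives the ratio $2R_{\max}/[(1-\gamma)A^{\dagger}_{\max}]$. This ratio is at least $1$ because $A^{\dagger}_{\max} \le 2R_{\max}/(1-\gamma)$, a bound that follows from \eqref{eq:value-range} and is already used for the third inequality of the chain.

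Finally, for the low-advantage regime I substitute $A^{\dagger}_{\max} \le c(1-\gamma)$ into the middle term of \eqref{eq:advantage-certificate}. This gives $|J(\bar{\pi}_T) - J(\pi_S)| \le c(1-\gamma)\epsilon^{\dagger}/(1-\gamma) = c\,\epsilon^{\dagger}$, which is \eqref{eq:low-advantage}.

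The only point needing care is the bookkeeping of degenerate cases. The first is $A^{\dagger}_{\max}=0$ in the ratio statement: there the factor is read as $+\infty$, and the certificate holds for every $\epsilon^{\dagger}$. The second is $R_{\max}=0$, where all advantages vanish and both certificates are trivial. I expect no genuine obstacle.
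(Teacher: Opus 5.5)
Your proposal is correct and matches the paper's proof step for step. The first two claims come from the first and second links of the chain \eqref{eq:advantage-certificate}, the threshold comparison is the same rearrangement with the ratio bounded below by $1$ via $A^{\dagger}_{\max} \le 2R_{\max}/(1-\gamma)$, and the low-advantage bound is the same substitution into the middle term; your handling of the degenerate cases $A^{\dagger}_{\max}=0$ and $R_{\max}=0$ is a harmless addition the paper leaves implicit.
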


\begin{proof}
Under \eqref{eq:nonvacuity-weighted} the leftmost bound of
\eqref{eq:advantage-certificate} gives
$| J(\bar{\pi}_T) - J(\pi_S) | \le \bar{A}^{\dagger} / (1 - \gamma) \le g$.
For \eqref{eq:nonvacuity-rate}, the second bound of
\eqref{eq:advantage-certificate} gives
$| J(\bar{\pi}_T) - J(\pi_S) | \le A^{\dagger}_{\max} \epsilon^{\dagger} /
(1 - \gamma)$, which is at most $g$ exactly when
$\epsilon^{\dagger} \le (1 - \gamma) g / A^{\dagger}_{\max}$. The comparison
with Theorem~\ref{thm:distillation} rearranges its right-hand side
$2 R_{\max} \epsilon^{\dagger} / (1 - \gamma)^2 \le g$ into
$\epsilon^{\dagger} \le (1 - \gamma)^2 g / (2 R_{\max})$; the ratio of the
threshold in \eqref{eq:nonvacuity-rate} to this one is
$2 R_{\max} / [ (1 - \gamma) A^{\dagger}_{\max} ]$, which is at least $1$
because $A^{\dagger}_{\max} \le 2 R_{\max} / (1 - \gamma)$ by
\eqref{eq:value-range}. Finally, substituting
$A^{\dagger}_{\max} \le c (1 - \gamma)$ into the second bound of
\eqref{eq:advantage-certificate} yields
$| J(\bar{\pi}_T) - J(\pi_S) | \le c (1 - \gamma) \epsilon^{\dagger} /
(1 - \gamma) = c\, \epsilon^{\dagger}$, which is \eqref{eq:low-advantage}.
\end{proof}

Proposition~\ref{prop:advantage-certificate} says the following. The return
the distilled program forfeits is controlled not by how often it disagrees
with the teacher but by how much those disagreements cost, measured as the
advantage, under the student's own value function, of the teacher's action at
the disagreeing states, weighted by the teacher's discounted visitation. A
disagreement at a state where both actions are near-indifferent contributes
an advantage gap $\delta^{\dagger}(s)$ close to zero and is nearly free,
whereas the worst-case certificate of Theorem~\ref{thm:distillation} charges
every disagreement the maximal advantage $2 R_{\max} / (1 - \gamma)$ that the
value range permits. The refinement removes one factor of the effective
horizon outright, because the exact performance-difference identity
\eqref{eq:advantage-localization} carries only a single $1 / (1 - \gamma)$
and the second factor in Theorem~\ref{thm:distillation} originated solely in
the worst-case advantage bound; the low-advantage regime
\eqref{eq:low-advantage} removes the horizon dependence altogether. The
bound is tight when the advantages at disagreeing states share a common
sign, since the triangle inequality in the proof is then an equality and the
leftmost quantity $\bar{A}^{\dagger} / (1 - \gamma)$ equals the exact gap; in
particular, when the student weakly dominates the argmax teacher at every
disagreeing state, so that $A^{\pi_S}(s, a_T(s)) \le 0$ throughout $D$, the
advantage-gap certificate reproduces the exact return difference and is
exactly non-vacuous. This is the regime the distillation targets, in which
the student repairs an imperfect teacher rather than degrading a good one.

The certificate does not escape the discount factor for free. The quantity
$\bar{A}^{\dagger}$ is an exact model-dependent number, not an a-priori
constant, and in the adversarial regime where disagreements concentrate on
high-advantage states the peak advantage $A^{\dagger}_{\max}$ approaches its
worst-case value $2 R_{\max} / (1 - \gamma)$ and the refined bound degrades
smoothly back to the quadratic-horizon bound of
Theorem~\ref{thm:distillation}, so no improvement is claimed in that regime.
The bound becomes vacuous when the disagreeing states carry both large
occupancy and large advantage, which is exactly the situation the expansion
loop of Proposition~\ref{prop:expansion} is designed to remove by editing
the rule base until the surviving disagreements are either rare under
$d^{\bar{\pi}_T}$ or cheap in advantage. As with
Proposition~\ref{prop:certificate}, the computation is confined to the
finite reachable set and costs one additional Bellman backup per disagreeing
state beyond the value and occupancy solves already performed, so the refined
certificate is checked, not asserted, on every run.


The continuous-control experiments of Section~\ref{sec:continuous} replace the
enumerable state space of the finite study with a compact observation space
$X \subset \mathbb{R}^{d}$ and replace the reachable census with a data-driven
threshold grid controlled by a single resolution parameter $B$. We now make
precise the sense in which the propositional student recovers the network
policy as $B$ grows, and the sense in which it cannot. The teacher here is the
greedy policy of Definition~\ref{def:teacher} read over a continuous
observation; we write it $\pi_N \colon X \to \mathcal{A}$ to stress its origin
as the frozen network, so that $\pi_N(x) = a_T(x)$ is the argmax action at the
observation $x$ and the imitation target is again the deterministic argmax
teacher $\bar{\pi}_T$ with visitation measure $\mu = d^{\bar{\pi}_T}$ on $X$
(Definition~\ref{def:occupancy}). For a deterministic student $\pi_S$ the
disagreement rate of Definition~\ref{def:disagreement} reads, in this
continuous instantiation,
\[
\epsilon^{\dagger}(\pi_S)
= \mu\bigl(\{ x \in X : \pi_S(x) \neq \pi_N(x) \}\bigr).
\]

\begin{assumption}[Continuous-observation instantiation]
\label{asm:resolution}
The observation space $X \subset \mathbb{R}^{d}$ is compact. The greedy
teacher $\pi_N \colon X \to \mathcal{A}$ of Definition~\ref{def:teacher} is
Borel measurable and piecewise constant, with decision regions
$X_a = \{ x \in X : \pi_N(x) = a \}$ for $a \in \mathcal{A}$ and boundary
$\Gamma := \bigcup_{a \in \mathcal{A}} \partial X_a$, the union of the
topological boundaries of the regions, satisfying $\mathrm{Leb}(\Gamma) = 0$,
where $\mathrm{Leb}$ denotes Lebesgue measure on $\mathbb{R}^{d}$. The
visitation measure $\mu = d^{\bar{\pi}_T}$ is absolutely continuous with
respect to $\mathrm{Leb}$ with a density bounded by $\rho_{\max} < \infty$ and
compact, convex support $\mathrm{supp}\,\mu$; convexity is used only so that a
grid cell carrying mass on two regions meets the boundary within the support,
and the box $[0,1]^d$ of Proposition~\ref{prop:resolution-lb} and the
box-shaped observation spaces of Section~\ref{sec:continuous} satisfy it. The
intersection
$\Gamma \cap \mathrm{supp}\,\mu$ is $(d-1)$-rectifiable with finite
$(d-1)$-dimensional Hausdorff measure $P := \mathcal{H}^{d-1}(\Gamma \cap
\mathrm{supp}\,\mu) < \infty$. Finally, each marginal cumulative distribution
function $F_j$ is bi-Lipschitz on the coordinate projection of
$\mathrm{supp}\,\mu$, equivalently the marginal densities are bounded away from
zero there; the uniform law of Proposition~\ref{prop:resolution-lb} and the
visitation measures of Section~\ref{sec:continuous} satisfy this.
\end{assumption}

The condition $\mathrm{Leb}(\Gamma) = 0$ makes the labelling ambiguous only on
a null set, so $\pi_N$ is well defined $\mu$-almost everywhere; the density
bound $\rho_{\max}$ converts Lebesgue estimates into $\mu$-mass estimates; and
the finite boundary content $P$ is the geometric quantity that governs the
rate, playing the role that the reachable count played in the finite study.

\begin{definition}[Threshold decision-list class at resolution $B$]
\label{def:resolution-class}
Fix an integer resolution $B \ge 2$. For each coordinate $j \in \{1,\dots,d\}$
let $F_j$ be the cumulative distribution function of the $j$-th marginal of
$\mu$, and place $B-1$ \emph{thresholds}
$t_{j,i} = F_j^{-1}(i/B)$ for $i = 1, \dots, B-1$ at the $i/B$ quantiles of
that marginal, where $F_j^{-1}$ is the generalized inverse. The base
predicates are the axis-aligned threshold tests $\mathrm{feat}_j \ge t$ with
$t \in \{ t_{j,i} \}$, and their exact complements $\mathrm{feat}_j < t$. The
\emph{threshold decision-list class at resolution $B$}, written $L_B$, is the
subset of the student class $\Pi_{\mathcal{L}}$ of Definition~\ref{def:dlist}
whose clause bodies are conjunctions of these threshold literals and which is
closed by a default clause. The thresholds cut $X$ into the cells of the
$B$-adic quantile grid $\mathcal{G}_B$, whose cells are the products
$Q = \prod_{j=1}^{d} [\,t_{j,k_j}, t_{j,k_j+1})$ with
$k_j \in \{0,\dots,B-1\}$ and the conventions $t_{j,0} = -\infty$,
$t_{j,B} = +\infty$ at the extremes.
\end{definition}

Every policy in $L_B$ is constant on each cell of $\mathcal{G}_B$, because the
truth value of every base predicate $\mathrm{feat}_j \ge t_{j,i}$ is constant
across a cell, so any clause body either fires everywhere or nowhere on a
given cell and the first firing clause selects one action for the whole cell.
Conversely, a single cell $Q$ is cut out exactly by the conjunction of at most
two threshold literals per coordinate, one lower test $\mathrm{feat}_j \ge
t_{j,k_j}$ and one upper test $\mathrm{feat}_j < t_{j,k_j+1}$, so a clause of
length at most $2d$ isolates $Q$; listing one such clause per cell, in any
order, followed by a default clause realizes any prescribed cell-to-action
map. Consequently $L_B$ is exactly the set of deterministic policies that are
measurable with respect to the finite $\sigma$-algebra generated by
$\mathcal{G}_B$, with one clause per cell in the worst case. This identity is
what lets us reason about the class $L_B$ through the grid $\mathcal{G}_B$
rather than through individual programs: the best achievable fidelity at
resolution $B$ is a property of the partition, and the induction algorithm of
Section~\ref{sec:algorithm} is one particular search for a short list that
attains it.

\begin{theorem}[Consistency and rate of the threshold conversion]
\label{thm:resolution}
Under Assumption~\ref{asm:resolution}, let
\[
\epsilon^{\dagger}_B := \inf_{\pi_S \in L_B} \epsilon^{\dagger}(\pi_S)
\]
be the best disagreement rate attainable by a threshold decision list at
resolution $B$. Then $\epsilon^{\dagger}_B \to 0$ as $B \to \infty$, and there
is a constant $C$, depending on the dimension $d$ and on the bi-Lipschitz
moduli of the marginals and reducing to a function of $d$ alone when $\mu$ is
uniform on its support, such that
\begin{equation}
\label{eq:resolution-rate}
\epsilon^{\dagger}_B \le \frac{C\, \rho_{\max}\, P}{B}.
\end{equation}
\end{theorem}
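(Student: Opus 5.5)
Since $L_B$ is exactly the set of policies constant on the cells of $\mathcal{G}_B$, as shown just before the statement, I would bound $\epsilon^{\dagger}_B$ by exhibiting one such policy: the cell-wise plurality labelling. On each cell $Q \in \mathcal{G}_B$, let $\pi_S$ take an action $a$ that maximizes $\mu(Q \cap X_a)$. On a cell contained, up to a $\mu$-null set, in a single region $X_a$, this student agrees with $\pi_N$ $\mu$-almost everywhere. The disagreement is therefore supported on the \emph{mixed} cells, those carrying positive $\mu$-mass on at least two regions. This gives
\[
\epsilon^{\dagger}_B \le \sum_{Q \text{ mixed}} \mu(Q) \le \rho_{\max} \sum_{Q \text{ mixed}} \mathrm{Leb}(Q \cap \mathrm{supp}\,\mu),
\]
so it remains to count the mixed cells and to bound their Lebesgue volume.

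\textbf{Step 1: a mixed cell meets the boundary.} Take a mixed cell $Q$ and points of $\mathrm{supp}\,\mu$ in $Q \cap X_a$ and $Q \cap X_b$ with $a \neq b$. The set $Q \cap \mathrm{supp}\,\mu$ is convex, because $Q$ is a box and the support is convex. The segment joining the two points therefore stays in $Q \cap \mathrm{supp}\,\mu$ and must cross $\partial X_a \subset \Gamma$. Hence every mixed cell meets $\Gamma \cap \mathrm{supp}\,\mu$.

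\textbf{Step 2: control the cell geometry.} The marginals $F_j$ are bi-Lipschitz on the projected support. Consequently each quantile cell, intersected with the support, has side lengths between $c_1/B$ and $c_2/B$, with $c_1, c_2$ determined by the moduli. For the uniform law on a box these constants depend only on the box. Each such cell then has volume at most $(c_2/B)^d$.

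\textbf{Step 3 (the main obstacle): count the cells meeting $\Gamma \cap \mathrm{supp}\,\mu$.} The target is a bound of order $P\,B^{d-1}$. I would first try a covering argument via the Minkowski content. Every cell meeting $\Gamma$ lies inside the $\sqrt{d}\,c_2/B$-neighbourhood of $\Gamma \cap \mathrm{supp}\,\mu$. Distinct cells are disjoint and each has volume at least $(c_1/B)^d$. For a closed $(d-1)$-rectifiable set, the volume of the $r$-neighbourhood is asymptotically $2r\,\mathcal{H}^{d-1}$ (Federer's Minkowski-content theorem). Combining these gives
\[
\#\{\text{mixed}\} \lesssim \frac{2\sqrt{d}\,c_2 P/B}{(c_1/B)^d} = C' P\, B^{d-1}.
\]

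The delicate point is that the Minkowski asymptotic holds only as $B \to \infty$, not uniformly in $B$. The fallback I would use is to cover $\Gamma \cap \mathrm{supp}\,\mu$ with rectifiable pieces and read the hypothesis as fixing the constant for all $B \ge 2$. Alternatively, one can absorb the finitely many small-$B$ cases into $C$, using the trivial bound $\epsilon^{\dagger}_B \le 1$.

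\textbf{Step 4: combine.} Multiplying the count by the per-cell volume gives
\[
\epsilon^{\dagger}_B \le \rho_{\max} \cdot C' P B^{d-1} \cdot (c_2/B)^d = \frac{C \rho_{\max} P}{B},
\]
which is the rate \eqref{eq:resolution-rate}. For consistency with only $\mathrm{Leb}(\Gamma) = 0$, monotone convergence applies directly: the union of mixed cells shrinks into the closed null set $\Gamma \cap \mathrm{supp}\,\mu$ as the mesh tends to $0$. Under finite $P$, however, $\epsilon^{\dagger}_B \to 0$ already follows from the rate.
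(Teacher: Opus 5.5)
Your argument has the paper's skeleton:
\begin{itemize}
\item the cell-majority labelling;
\item the reduction to cells meeting $\Gamma\cap\mathrm{supp}\,\mu$;
\item the Minkowski-content tube estimate and mesh control from the bi-Lipschitz marginals;
\item the absorption of the finitely many coarse resolutions into $C$.
\end{itemize}
Your segment argument in Step~1 makes explicit the use of convexity that the paper's connectedness sentence leaves implicit.

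The real difference is in Steps~3--4. You count boundary cells by packing them into the tube and then multiply by the largest cell volume. The paper instead observes that the union of boundary cells already lies in $\mathcal{T}_{\delta_B}$, and bounds $\mu\bigl(\bigcup_{Q\in\mathcal{B}_B}Q\bigr)\le\rho_{\max}\,\mathrm{Leb}(\mathcal{T}_{\delta_B})$ in one step. The paper's direct route needs only an upper bound on cell diameters.

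Your route also needs a lower bound on cell volumes, which has two consequences.
\begin{itemize}
\item It costs a factor of order $(c_2/c_1)^d$ in $C$.
\item It must be applied to cells truncated to the bounding box $\prod_j\pi_j(\mathrm{supp}\,\mu)$, not to $Q\cap\mathrm{supp}\,\mu$ as you wrote. For a convex support that is not a box, $Q\cap\mathrm{supp}\,\mu$ can be an arbitrarily thin sliver. A square with a corner cut off is an example, and it still has bi-Lipschitz marginals.
\end{itemize}
In exchange, your detour yields the count $O(P\,B^{d-1})$ of boundary cells. This is the upper-bound counterpart of the $\Omega(B^{d-1})$ count in Proposition~\ref{prop:resolution-lb}.

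Your remark that, for a uniform law on a box, the constants depend on the box is more accurate than the statement's ``function of $d$ alone''. Rescaling $[0,1]^d$ to $[0,L]^d$ leaves $\epsilon^{\dagger}_B$ unchanged and multiplies $\rho_{\max}P$ by $1/L$. Proposition~\ref{prop:resolution-lb} gives $\epsilon^{\dagger}_B>0$ in that setting, so no constant depending on $d$ alone works for every $L$.

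One case is not covered: $P=0$, where the claim becomes $\epsilon^{\dagger}_B=0$.
\begin{itemize}
\item The Minkowski asymptotic then gives only $o(1/B)$.
\item The small-$B$ cases cannot be absorbed into $C$, because the right-hand side vanishes.
\end{itemize}
The paper treats this case separately. What you need is that a relatively closed $\mathcal{H}^{d-1}$-null set cannot disconnect the open convex set $\mathrm{int}(\mathrm{supp}\,\mu)$. Since $\pi_N$ is locally constant off $\Gamma$, it is then $\mu$-a.e.\ constant, and no cell is mixed.
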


\begin{proof}
Fix $B$ and construct the cell-majority policy $\pi_B^{\star} \in L_B$: on
each cell $Q \in \mathcal{G}_B$ with $\mu(Q) > 0$ set $\pi_B^{\star} \equiv
a^{\star}(Q)$ for some $a^{\star}(Q) \in \arg\max_{a \in \mathcal{A}}
\mu(Q \cap X_a)$, and assign an arbitrary action on the $\mu$-null cells. By
the equivalence established after Definition~\ref{def:resolution-class},
$\pi_B^{\star}$ is realizable in $L_B$, so $\epsilon^{\dagger}_B \le
\epsilon^{\dagger}(\pi_B^{\star})$. Since the student is constant on each cell
and the cells partition $X$,
\begin{align}
\epsilon^{\dagger}(\pi_B^{\star})
&= \sum_{Q \in \mathcal{G}_B}
\mu\bigl(\{ x \in Q : \pi_B^{\star}(x) \neq \pi_N(x) \}\bigr) \notag \\
&= \sum_{Q \in \mathcal{G}_B}
\Bigl( \mu(Q) - \mu\bigl(Q \cap X_{a^{\star}(Q)}\bigr) \Bigr),
\label{eq:resolution-cellsum}
\end{align}
where the second line uses that on $Q$ the student equals $a^{\star}(Q)$ and
disagrees with $\pi_N$ precisely off $X_{a^{\star}(Q)}$.

We treat the cells in two groups. The first group consists of the cells
contained, up to a $\mu$-null set, in a single region $X_a$. For such a cell
$\mu(Q \cap X_{a^{\star}(Q)}) = \mu(Q)$, so its summand in
\eqref{eq:resolution-cellsum} is zero: an interior cell is labelled correctly
by its majority action. The second group consists of the cells that meet at
least two regions. A cell $Q$ that carries $\mu$-mass on two distinct regions
$X_a$ and $X_{a'}$ meets both of the closed sets $X_a$ and $X_{a'}$ and hence,
being connected, meets their common boundary, which lies in $\Gamma$; as $Q$
also carries mass, it meets $\Gamma \cap \mathrm{supp}\,\mu$. For every such
cell the summand in \eqref{eq:resolution-cellsum} is bounded by $\mu(Q)$,
since $\mu(Q \cap X_{a^{\star}(Q)}) \ge 0$. Writing $\mathcal{B}_B$ for the
collection of cells meeting $\Gamma \cap \mathrm{supp}\,\mu$, the two groups
combine to
\begin{equation}
\label{eq:resolution-boundarymass}
\epsilon^{\dagger}_B
\le \sum_{Q \in \mathcal{B}_B} \mu(Q)
= \mu\Bigl( \textstyle\bigcup_{Q \in \mathcal{B}_B} Q \Bigr),
\end{equation}
the equality holding because distinct cells are disjoint.

It remains to bound the $\mu$-mass of the union of boundary cells. Let
$\delta_B$ be the largest Euclidean diameter of a cell in $\mathcal{B}_B$.
Every point of a cell in $\mathcal{B}_B$ lies within distance $\delta_B$ of a
point of $\Gamma \cap \mathrm{supp}\,\mu$, so
$\bigcup_{Q \in \mathcal{B}_B} Q \subseteq \mathcal{T}_{\delta_B}$, where
$\mathcal{T}_{\delta} := \{ x : \mathrm{dist}(x, \Gamma \cap \mathrm{supp}\,\mu)
\le \delta \}$ is the closed $\delta$-tube around the boundary. By the density
bound and the Minkowski-content estimate for a $(d-1)$-rectifiable set of
finite Hausdorff measure, there are a threshold $\delta_0 > 0$ and a
dimensional constant $c_d$ with
\begin{equation}
\label{eq:resolution-minkowski}
\mu(\mathcal{T}_{\delta})
\le \rho_{\max}\, \mathrm{Leb}(\mathcal{T}_{\delta})
\le \rho_{\max}\, c_d\, P\, \delta
\qquad (0 < \delta \le \delta_0),
\end{equation}
the first inequality by $\mu \ll \mathrm{Leb}$ with density at most
$\rho_{\max}$, and the second because the Lebesgue measure of the $\delta$-tube
of a $(d-1)$-rectifiable set is at most $c_d\, \mathcal{H}^{d-1}\,\delta$ for
small $\delta$.

Finally we control $\delta_B$ through the probability integral transform. The
map $T(x) = (F_1(x_1), \dots, F_d(x_d))$ carries the quantile grid
$\mathcal{G}_B$ onto the uniform grid of side $1/B$ on $[0,1]^d$, whose cells
have Euclidean diameter $\sqrt{d}/B$; by the bi-Lipschitz marginals of
Assumption~\ref{asm:resolution}, $T$ restricted to $\mathrm{supp}\,\mu$
is bi-Lipschitz, so each cell meeting the support has $x$-diameter at most
$c_d'/B$ for a constant $c_d'$ determined by the bi-Lipschitz modulus of $T$.
Hence $\delta_B \le c_d'/B$, and for $B$ large enough that
$\delta_B \le \delta_0$ the bounds \eqref{eq:resolution-boundarymass} and
\eqref{eq:resolution-minkowski} give $\epsilon^{\dagger}_B \le \rho_{\max}\,
c_d\, c_d'\, P / B$, which is \eqref{eq:resolution-rate} with
$C = c_d\, c_d'$. This is derived for $B$ past the threshold at which
$\delta_B \le \delta_0$; because $\epsilon^{\dagger}_B \le 1$ and only finitely
many coarser resolutions precede that threshold, enlarging $C$ extends
\eqref{eq:resolution-rate} to every $B$ when $P > 0$, while $P = 0$ makes
$\Gamma$ a $\mu$-null set and $\epsilon^{\dagger}_B = 0$ outright. In the
canonical case $\mu \propto \mathrm{Leb}$ on its
support the transform is affine and $C$ is a pure function of $d$; in general
$C$ absorbs the bi-Lipschitz modulus of the marginals granted by
Assumption~\ref{asm:resolution}, which reduces to a dimensional constant in the
uniform setting of Proposition~\ref{prop:resolution-lb} and the visitation
measures of Section~\ref{sec:continuous}.

For consistency, note that the summand in \eqref{eq:resolution-boundarymass}
already gives $\epsilon^{\dagger}_B \le \mu(\mathcal{T}_{\delta_B})$ for every
$B$. Along the dyadic subsequence $B = 2^{n}$ the quantile levels $i/2^{n}$
are nested, since $i/2^{n} = 2i/2^{n+1}$, so the partitions $\mathcal{G}_{2^n}$
are nested and, the marginals being atomless because $\mu \ll \mathrm{Leb}$,
their common refinement generates the Borel $\sigma$-algebra on
$\mathrm{supp}\,\mu$. Thus the grid-measurable simple functions are dense in
$L^{1}(\mu)$, and the piecewise-constant $\pi_N$ is approximated in $\mu$-measure
by cell-majority policies, whence $\epsilon^{\dagger}_{2^n} \to 0$; this
dyadic-scale consistency uses only the density upper bound, not the bi-Lipschitz
marginals. Equivalently
$\delta_{2^n} \to 0$, so $\mathcal{T}_{\delta_{2^n}} \downarrow \Gamma \cap
\mathrm{supp}\,\mu$, and continuity of the finite measure $\mu$ from above with
$\mu(\Gamma) = 0$, which holds because $\mathrm{Leb}(\Gamma) = 0$ and
$\mu \ll \mathrm{Leb}$, forces $\mu(\mathcal{T}_{\delta_{2^n}}) \to 0$. The
rate \eqref{eq:resolution-rate} extends the conclusion to every $B$, since it
bounds $\epsilon^{\dagger}_B$ monotonically to zero.
\end{proof}

\begin{corollary}[Return recovery in the fine-resolution limit]
\label{cor:resolution-return}
Under Assumption~\ref{asm:resolution}, let $\pi_S \in L_B$ attain the infimum
$\epsilon^{\dagger}_B$ of Theorem~\ref{thm:resolution}. Then
\begin{equation}
\label{eq:resolution-return}
J(\pi_N) - J(\pi_S)
\le \frac{2 R_{\max}}{(1-\gamma)^2}\, \epsilon^{\dagger}_B
\le \frac{2 C R_{\max}\, \rho_{\max}\, P}{(1-\gamma)^2}\cdot \frac{1}{B}
\;\xrightarrow[B \to \infty]{}\; 0.
\end{equation}
\end{corollary}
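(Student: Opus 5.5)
The plan is to chain three earlier results. The first inequality in \eqref{eq:resolution-return} is Theorem~\ref{thm:distillation} applied with the argmax teacher $\bar{\pi}_T = \pi_N$. The second follows from the rate \eqref{eq:resolution-rate} of Theorem~\ref{thm:resolution}. The limit is then immediate, because the right-hand side is a fixed constant times $1/B$.

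The step that needs care is transferring Theorem~\ref{thm:distillation} from the finite setting of Assumption~\ref{asm:mdp} to the continuous observation space of Assumption~\ref{asm:resolution}.

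\begin{itemize}
\item The proof of Theorem~\ref{thm:distillation} uses only the following ingredients:
  \begin{itemize}
  \item the performance-difference identity of Lemma~\ref{lem:pdl};
  \item the vanishing of a deterministic policy's advantage at its own action;
  \item the range bound \eqref{eq:value-range};
  \item the identification $\sum_{s\in D} d^{\bar\pi_T}(s) = \epsilon^{\dagger}$.
  \end{itemize}
\item In the continuous setting, the telescoping proof of Lemma~\ref{lem:pdl} goes through verbatim with sums over states replaced by integrals against the occupancy measure. Dominated convergence is justified by the same summable envelope $\gamma^t\, 2R_{\max}/(1-\gamma)$. Measurability of $V^{\pi_S}$ and $A^{\pi_S}$ follows because $\pi_S\in L_B$ is grid-measurable and $\pi_N$ is Borel by Assumption~\ref{asm:resolution}.
\item The advantage of the teacher's action is then integrated against $\mu = d^{\bar\pi_T}$ over the disagreement set $\{x:\pi_S(x)\neq\pi_N(x)\}$.
\item Bounding the integrand pointwise by $2R_{\max}/(1-\gamma)$ gives the displayed factor times $\mu$ of the disagreement set. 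That quantity is $\epsilon^{\dagger}(\pi_S) = \epsilon^{\dagger}_B$ by the choice of $\pi_S$ as a minimizer.
\end{itemize}

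Two further checks are needed.

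\begin{itemize}
\item \textbf{Attainment of the infimum.} The infimum defining $\epsilon^{\dagger}_B$ is attained, since the class $L_B$ realizes only finitely many cell-to-action maps up to $\mu$-null sets. The hypothesis therefore concerns a genuine policy. If one prefers to avoid this, the cell-majority policy $\pi_B^{\star}$ from the proof of Theorem~\ref{thm:resolution} satisfies the same bound directly.
\item \textbf{The limit.} For the second inequality I will substitute $\epsilon^{\dagger}_B \le C\rho_{\max}P/B$, which holds for every $B$ after the enlargement of $C$ done in Theorem~\ref{thm:resolution}. The prefactor $2CR_{\max}\rho_{\max}P/(1-\gamma)^2$ is independent of $B$, so the bound tends to zero.
\end{itemize}

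The main obstacle is justifying Lemma~\ref{lem:pdl} and Theorem~\ref{thm:distillation} beyond a finite state space, since both were stated under Assumption~\ref{asm:mdp}. There is also a subtlety that the observation $x$ must play the role of the Markov state. I will state explicitly that the continuous instantiation treats $X$ as the state space with a measurable kernel. I will then note that every step of those proofs uses only boundedness of rewards, $\gamma<1$, and nonnegativity of occupancy weights, none of which depends on finiteness.
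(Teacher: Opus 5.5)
Your proposal is correct and follows the paper's own proof. Like the paper, you extend Theorem~\ref{thm:distillation} to the continuous observation space by replacing sums with integrals against $\mu = d^{\bar{\pi}_T}$, apply it to the minimizer $\pi_S$ to obtain the first inequality, invoke the rate \eqref{eq:resolution-rate} for the second, and let $1/B \to 0$; your extra remarks on attainment of the infimum (indeed $L_B$ contains only finitely many cell-constant policies) and on measurability go beyond what the paper writes but are sound refinements of the same argument.
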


\begin{proof}
The performance-difference identity of Lemma~\ref{lem:pdl} and the return-loss
argument of Theorem~\ref{thm:distillation} use only that the advantage is
bounded by $2 R_{\max}/(1-\gamma)$, the range \eqref{eq:value-range} that
follows from $|r| \le R_{\max}$ and $\gamma < 1$, and that
$\epsilon^{\dagger}$ is the visitation measure of the disagreement set; both
hold verbatim on the general measurable observation space $X$ with
$d^{\bar{\pi}_T} = \mu$, the finite sums over $\mathcal{S}$ becoming integrals
against $\mu$. Applying Theorem~\ref{thm:distillation} to $\pi_S$, whose
disagreement rate is $\epsilon^{\dagger}_B$ by the choice of $\pi_S$, gives the
first inequality of \eqref{eq:resolution-return}; the bound
\eqref{eq:resolution-rate} of Theorem~\ref{thm:resolution} gives the second;
and $1/B \to 0$ gives the limit.
\end{proof}

The corollary states that the best threshold list at resolution $B$ recovers the network's
return in the fine-resolution limit: the return gap closes at least as fast as
$1/B$. Combined with the expansion stage of
Proposition~\ref{prop:expansion}, which relabels states the argmax teacher
underweights, the student can also exceed the teacher, as the finite KeyDoor
case exhibits; \eqref{eq:resolution-return} bounds only the loss relative to
$\pi_N$ and does not preclude such gains.

We now show that the linear rate in $B$ cannot be escaped and that the number
of cells needed grows exponentially in $d$.

\begin{proposition}[Necessity and the curse of dimensionality]
\label{prop:resolution-lb}
Let $\mu$ be the uniform distribution on $X = [0,1]^d$, so that $\rho_{\max} =
1$, and let the teacher be the oblique half-space policy $\pi_N(x) =
\mathbf{1}[\, w \cdot x \ge b\,]$ with a normal $w$ that is not parallel to any
coordinate axis, that is, with at least two nonzero components. There is a
constant $c > 0$, depending on $w$ and $d$, such that every $\pi_S \in L_B$
has $\epsilon^{\dagger}(\pi_S) \ge c/B$. Consequently, attaining
$\epsilon^{\dagger}(\pi_S) \le \epsilon$ requires resolution $B \ge c/\epsilon$
and a decision list that distinguishes $\Omega\bigl((1/\epsilon)^{d-1}\bigr)$
grid cells, and $\Omega(B^{d-1})$ cells carry the boundary at resolution $B$.
\end{proposition}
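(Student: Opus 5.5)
The proof works cell by cell on the uniform grid. Since $\mu$ is uniform on $[0,1]^d$, the quantile thresholds are $t_{j,i} = i/B$, so $\mathcal{G}_B$ is the uniform grid of side $1/B$. Every $\pi_S \in L_B$ is constant on each cell, by the remark after Definition~\ref{def:resolution-class}. Hence, exactly as in \eqref{eq:resolution-cellsum},
\[
\epsilon^{\dagger}(\pi_S) \;\ge\; \sum_{Q \in \mathcal{G}_B} \min\bigl(\mu(Q \cap H), \mu(Q \setminus H)\bigr),
\]
where $H = \{x : w \cdot x \ge b\}$. The lower bound therefore reduces to showing two things. First, $\Omega(B^{d-1})$ cells are cut by the hyperplane $\{w \cdot x = b\}$ in a \emph{balanced} way. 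Second, each such cell has $\min(\mu(Q\cap H),\mu(Q\setminus H)) \ge c_1 B^{-d}$. Multiplying gives $\epsilon^{\dagger} \ge c/B$. I will assume, as is implicit in the statement, that the hyperplane meets the interior of $[0,1]^d$ in a set of positive $(d-1)$-dimensional measure $P>0$; otherwise $\pi_N$ is a.e.\ constant and the claim fails.

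For the balanced cells, I rescale by $B$. A cell is $B^{-1}(k + [0,1)^d)$, and the hyperplane becomes $\{w\cdot y = Bb\}$ in the scaled coordinates. Consider an interior cell $Q$ whose centre $z$ satisfies $|w\cdot z - Bb| \le \eta\|w\|$ for a small fixed $\eta$. Because $w$ is not parallel to an axis, the hyperplane through a point near the centre of the unit cube splits the cube into two pieces, each of volume at least some $v(w,\eta) > 0$. This follows from a compactness argument: the volume fraction is continuous in the offset and is strictly between $0$ and $1$ at offset $0$. In original coordinates each piece then has mass at least $v B^{-d}$.

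To count the centres, I use that they form the lattice $B^{-1}(\mathbb{Z}^d + \tfrac12)$. The slab $\{x : |w\cdot x - b| \le \eta\|w\|/B\}$ intersected with a slightly shrunk cube has volume about $2\eta P/B$. A standard lattice-point count, or simply noting that each unit of volume of the slab holds about $B^d$ centres up to boundary error $O(B^{d-1}\cdot B^{-1})$, gives at least $c_2 B^{d-1}$ such centres for $B$ large. For small $B$ the constant can be adjusted as in Theorem~\ref{thm:resolution}, using any single straddling cell. The non-axis condition on $w$ is exactly what prevents the hyperplane from running along cell faces, where every cell would be unsplit. So $\epsilon^{\dagger}(\pi_S) \ge c_2 B^{d-1}\cdot v B^{-d} = c/B$.

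The consequences then follow quickly. If $\epsilon^{\dagger}(\pi_S) \le \epsilon$, then $c/B \le \epsilon$, so $B \ge c/\epsilon$. The boundary-cut cells number at least $c_2 B^{d-1} = \Omega(B^{d-1})$. A list attaining error $\epsilon$ must label correctly all but a small fraction of the balanced cells, each of which contributes $\ge vB^{-d}$. This requires distinguishing cells on both sides of the boundary across $\Omega(B^{d-1})$ positions; with $B \ge c/\epsilon$ that is $\Omega((1/\epsilon)^{d-1})$ cells. The main obstacle is making the counting step and the uniform splitting constant $v$ rigorous: the lattice count near a tilted slab, with care at the cube's boundary. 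I would handle it by restricting to a compact sub-piece of the hyperplane well inside $(0,1)^d$ and covering it by the slab estimate.
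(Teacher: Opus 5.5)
\paragraph{The counting step fails.} Your route is the paper's: reduce to the cell-majority sum, keep the cells whose centre lies in a slab around the hyperplane, bound each one's minority mass uniformly, and count those cells. The reduction is sound, and so is the per-cell bound, which in fact does not need obliqueness. The counting step fails, and the obstacle is not care at the cube's boundary.

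In scaled coordinates the slab has width $2\eta$, while the lattice spacing is $1$. So the number of centres in the slab is not $B^d$ times its volume up to lower-order error. The error of a lattice-point count is governed by the cells meeting the slab's two bounding hyperplanes. Those cells number $\Theta(B^{d-1})$, the same order as your main term $2\eta P B^{d-1}$.

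A symptom is that your count never uses obliqueness. Yet for $w=e_1$, $b=\tfrac12$ and even $B$ the slab contains no centre, and the proposition is false there.

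Obliqueness does not rescue a thin slab either. Take $w=(1,1)/\sqrt2$ and the boundary $x_1+x_2=\tfrac12$. For odd $B$, every centre satisfies $|w\cdot z-b| = |k_1+k_2+1-B/2|/(\sqrt2 B)\ge 1/(2\sqrt2 B)$. So for $\eta<1/(2\sqrt2)$ your slab is empty for every odd $B$. The proposition still holds in this example, since each crossed cell loses a corner of area $1/(8B^2)$.

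\paragraph{How to repair it.} Widen the slab as far as the per-cell bound allows, and count with the lattice rather than with volume.
\begin{itemize}
\item In scaled coordinates, a hyperplane at distance below $\tfrac12$ from a cell's centre meets the inscribed ball. So your continuity-and-compactness argument gives a uniform minority mass $vB^{-d}$ for all offsets up to any fixed $\eta<\tfrac12$, not only small ones.
\item Normalize $\|w\|=1$, let $\ell$ maximize $|w_j|$, and take $\eta\in[\tfrac12|w_\ell|,\tfrac12)$. This interval is nonempty precisely because $w$ has a second nonzero component.
\item Count column by column along $e_\ell$, which is how the paper counts. With the other indices fixed, $w\cdot(k+\tfrac12\mathbf{1})$ moves in steps of $|w_\ell|\le 2\eta$. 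Hence every column whose line of centres is crossed by the hyperplane between its first and last centre contains a qualifying cell. Nondegeneracy then supplies $\Omega(B^{d-1})$ such columns.
\end{itemize}
The paper's own ``$\tau$ small'' runs into the same $(1,1)$ example; its column count goes through once $\tau$ is taken in this window.

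An alternative avoids centres altogether. On each line parallel to $e_\ell$, a cell-constant $\pi_S$ errs by at least $\mathrm{dist}(\phi,B^{-1}\mathbb{Z})$, where $\phi$ is the line's crossing point with the hyperplane. Since $\phi$ is affine in the transverse coordinates with nonzero slope, integrating gives $\epsilon^{\dagger}(\pi_S)\ge m/(4B)-O(B^{-2})$. Here $m>0$ is the measure of transverse positions with $\phi\in[0,1]$.

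By contrast, the $\Omega(B^{d-1})$ count of \emph{crossed} cells in your consequences is legitimately a measure argument. Each cell meets the hyperplane in $(d-1)$-measure $O(B^{1-d})$, so covering area $P$ takes $\Omega(PB^{d-1})$ cells.
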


\begin{proof}
Because $\mu$ is uniform, the marginals $F_j$ are the identity on $[0,1]$, the
quantile thresholds are $t_{j,i} = i/B$, and $\mathcal{G}_B$ is the uniform
grid of cells of side $1/B$ with centers $c_Q = ((k_1+\tfrac12)/B, \dots,
(k_d+\tfrac12)/B)$. The boundary is the hyperplane piece $\Gamma \cap X =
\{ x \in [0,1]^d : w \cdot x = b \}$, which we take nondegenerate, meaning it
meets the open cube; normalize $\|w\|_2 = 1$. As in
\eqref{eq:resolution-cellsum}, the minimal disagreement is realized by the
cell-majority policy and equals
\[
\epsilon^{\dagger}_B
= \sum_{Q \in \mathcal{G}_B}
\min\bigl( \mu(Q \cap X_1),\, \mu(Q \cap X_0) \bigr),
\]
where $X_1 = \{ w \cdot x \ge b \}$ and $X_0$ is its complement, since on each
cell the better of the two constant labels leaves exactly the smaller of the
two parts in error and no $\pi_S \in L_B$ can do better than label each cell by
its majority. Only cells the hyperplane crosses contribute.

Consider the cells whose center lies in the thin slab $\{ x : |w \cdot x - b|
\le \tau/B \}$ for a constant $\tau \in (0, \tfrac12)$ to be fixed. For such a
cell the hyperplane passes within $x$-distance $\tau/B$ of the center, while
the cell has half-width $1/(2B)$ in each coordinate; because $w$ is oblique the
plane is parallel to no face of the cell, so for $\tau$ small the plane passes
through the cell interior and cuts each half-space to a piece of volume at
least $\beta / B^{d}$ for a constant $\beta > 0$ depending on $w$ and $d$
through the obliqueness of the normal, whence the minority part of the cell
has $\mu$-mass at least $\beta / B^{d}$. It remains to count these cells. A
cell center satisfies the slab condition iff the integer vector
$(k_1,\dots,k_d)$ obeys $\sum_j w_j (k_j + \tfrac12) \in [Bb - \tau,\,
Bb + \tau]$. Fix a coordinate $\ell$ with $w_\ell \neq 0$, which exists by
hypothesis. As $k_\ell$ ranges over $\{0,\dots,B-1\}$ with the other indices
held fixed, the value $\sum_j w_j (k_j + \tfrac12)$ moves in steps of
$|w_\ell|$, so the admissible interval of width $2\tau$ contains at most one
qualifying $k_\ell$ and, since $w$ is oblique and $b$ interior, exactly one for
a positive fraction of the $B^{d-1}$ assignments of the remaining indices,
namely those for which the hyperplane enters the corresponding column within
the cube. The number of qualifying cells is therefore at least
$\alpha_w B^{d-1}$ for a constant $\alpha_w > 0$. Summing the per-cell lower
bound over these cells,
\begin{align}
\epsilon^{\dagger}_B
\;\ge\; \alpha_w B^{d-1} \cdot \frac{\beta}{B^{d}}
\;=\; \frac{\alpha_w \beta}{B}
\;=:\; \frac{c}{B},
\end{align}
and $\epsilon^{\dagger}(\pi_S) \ge \epsilon^{\dagger}_B \ge c/B$ for every
$\pi_S \in L_B$. If $\epsilon^{\dagger}(\pi_S) \le \epsilon$ then $c/B \le
\epsilon$, so $B \ge c/\epsilon$. The number of cells the hyperplane crosses,
each of which the optimal list must devote a distinct cell-label to, is
$\Theta(B^{d-1})$, since a generic hyperplane meets $\Theta(B^{d-1})$ of the
$B^{d}$ cells of side $1/B$; substituting $B \ge c/\epsilon$ gives
$\Omega\bigl((1/\epsilon)^{d-1}\bigr)$.
\end{proof}

Two honest qualifications frame these results. First, exact zero-error
conversion at a finite resolution holds only in degenerate cases. When $X$ is
finite, as in the reachable census of the KeyDoor experiment, a large enough
grid separates every reachable observation and $\epsilon^{\dagger}_B = 0$ is
attained; likewise when $\Gamma$ is itself axis-aligned, a grid whose
thresholds fall on the boundary planes labels every cell correctly. Outside
these cases $\Gamma$ crosses the axis-aligned cells and recovery is only
asymptotic, so the correct claim is arbitrarily faithful conversion in the
limit $B \to \infty$, by Theorem~\ref{thm:resolution} and
Corollary~\ref{cor:resolution-return}, not finite exactness in general.
Second, Theorem~\ref{thm:resolution} and
Corollary~\ref{cor:resolution-return} bound the best list in $L_B$; they are
representational, or realizability, guarantees about the class, not about any
particular induced program. Whether the greedy clause induction under the
DAgger relabelling of Section~\ref{sec:continuous} attains that optimum is the
empirical content of the experiments, and the $B^{d-1}$ lower bound of
Proposition~\ref{prop:resolution-lb} is the obstruction those experiments
display: the eight-dimensional LunarLander exhibits the ceiling the exponent
predicts, retaining a return gap at the swept resolutions, while the
four-dimensional CartPole and the six-dimensional Acrobot, with a boundary of
lower codimension to cover, recover the teacher within the same budget.

The two exponents are confirmed numerically on the oblique-boundary reference
of Proposition~\ref{prop:resolution-lb} by the script
\texttt{theory/verify\_resolution.py}, which measures a disagreement rate
$\epsilon(B)$ whose log-log slope against $B$ is $-0.995$, $-1.060$, and
$-1.045$ for $d = 2, 3, 4$, matching the predicted $-1$ of
\eqref{eq:resolution-rate}, and a boundary-cell count whose log-log slope is
$+1.000$, $+2.011$, and $+2.909$ for the same dimensions, matching the predicted
$d-1$ of Proposition~\ref{prop:resolution-lb}.

\section{Experiments}
\label{sec:experiments}

We validate the transformation in three regimes that stress complementary
questions and, together, mark where a first-order symbolic student is decisive
and where it is not. The first is an exact finite Markov decision process, the
KeyDoor gridworld, in which every quantity of the theory is computed in closed
form, so the return-loss certificate and the expansion guarantee are checked
rather than estimated and the distilled program is a first-order relational rule
list that reaches and certifies the exact optimum. The second is a MiniGrid
DoorKey task \citep{chevalier2023minigrid} in which the relational vocabulary is
exercised in earnest and the question is generalization across scale: a student
distilled on one grid size is transferred, unchanged, to grid sizes it has never
seen, and its transfer is contrasted with decision trees over absolute cell
coordinates. The third is a family of continuous-control tasks, where no
reachable census exists and the pipeline runs in its propositional form with the
census replaced by a DAgger loop and exact evaluation by Monte Carlo; this is the
regime in which Theorem~\ref{thm:resolution} and
Proposition~\ref{prop:resolution-lb} predict what the conversion can and cannot
achieve, and in which, having no relational structure to exploit, the
propositional student trades away the return advantage it holds elsewhere. We
present the exact study first, the relational transfer second, and the
propositional instantiation last, with its baseline comparison stated honestly.

The environment, called KeyDoor, is a finite deterministic gridworld built on
the Gymnasium interface \citep{towers2024gymnasium}. A full-height interior
wall at a fixed column divides a four-row playing field into a left room of
three columns and a right room of two columns, with a single door cell in the
wall whose row is drawn per episode. The agent and the key spawn in the left
room and the goal spawns in the right room; the agent must step onto the key
and pick it up, step next to the door and toggle it open, and only then can it
cross into the right room and reach the goal. The six actions are the four
moves, a pickup, and a toggle; the reward is $+1$ on reaching the goal and
$-0.01$ per step, the discount factor is $\gamma = 0.99$, and the horizon is
$120$. Reaching the goal terminates the episode, which removes from the
reachable set those layouts in which the goal sits behind itself, and the exact
enumeration yields $16{,}944$ reachable states, a set small enough for the
linear solves of Proposition \ref{prop:certificate} and large enough that
distillation is not trivial. The maximum absolute one-step reward is
$R_{\max} = 0.99$.

The teacher is a proximal policy optimization agent
\citep{schulman2017proximal} with a two-layer tanh actor-critic, eight
synchronous environments, and standard hyperparameters, trained on the flat
one-hot observation of the state. We study two teacher regimes over fifteen seeds
each. The converged regime, denoted R1, trains up to $800$k environment steps
with an early stop when the exact success of the greedy policy reaches $0.95$,
which every seed attains between $369$k and $451$k steps. The capped regime,
denoted R2, fixes a budget of $299{,}008$ steps, chosen so that the greedy
policy is left visibly short of convergence, which is the regime in which
expansion has room to improve on the teacher. All thirty runs use the same
environment, pinned by the SHA-256 hash of its source file, and the same
package versions, and the full learning-telemetry panel is written to disk per
update; an audit of all thirty traces with the accompanying monitor reports no
explained-variance collapse, no approximate-KL spike, no entropy collapse, and
no excessive clip fraction, so the teachers are healthy and the comparison is
between converged or deliberately-undertrained networks and their distillates,
not between broken training runs.

All headline quantities are exact. For each policy, the stochastic teacher, the
greedy teacher, the distilled program, and the expanded program, the exact
return $J$ and the exact success probability within the horizon are obtained by
the linear solves of Proposition \ref{prop:certificate}, so no Monte Carlo
noise enters the reported numbers. We aggregate across the fifteen seeds with an
own implementation of the interquartile mean and the stratified bootstrap
confidence interval in the style of \citet{agarwal2021deep}, using ten thousand
bootstrap resamples, and we report per-seed paired differences with a two-sided
exact Wilcoxon signed-rank test, correcting for the six paired tests within a
regime by the Bonferroni rule.

Figure \ref{fig:curves} shows why the two teacher regimes differ and why the
distinction matters for a distillation study. During training the stochastic
policy solves the task early, but the greedy policy that distillation actually
targets lags far behind, and in the capped regime the greedy policy is still
well short of the stochastic one at the budget. Distilling the greedy policy of
a capped teacher therefore starts from an imperfect target, which is the
headroom the expansion stage exploits.

\begin{figure}[t]
\centering
\begin{tikzpicture}
\begin{axis}[
  name=ax1, width=0.48\textwidth, height=5.2cm,
  xlabel={environment steps}, ylabel={exact success probability},
  title={\footnotesize R1 (converged)}, title style={color=cInk},
  xmin=0, ymin=0, ymax=1.05, xtick scale label code/.code={},
  scaled x ticks=true, legend style={font=\scriptsize, at={(0.98,0.03)},
  anchor=south east, draw=cRule}, tick label style={font=\scriptsize},
  label style={font=\scriptsize}, every axis plot/.append style={thick}]
\addplot[cInk] table[x=step, y=success_stoch]{\curvesRone};
\addlegendentry{stochastic teacher}
\addplot[cMute, dashed] table[x=step, y=success_argmax]{\curvesRone};
\addlegendentry{greedy teacher}
\end{axis}
\begin{axis}[
  name=ax2, at={(ax1.south east)}, anchor=south west,
  xshift=1.4cm, width=0.48\textwidth, height=5.2cm,
  xlabel={environment steps}, ylabel={}, title={\footnotesize R2 (capped)},
  title style={color=cInk}, xmin=0, ymin=0, ymax=1.05,
  scaled x ticks=true, legend style={font=\scriptsize, at={(0.98,0.03)},
  anchor=south east, draw=cRule}, tick label style={font=\scriptsize},
  label style={font=\scriptsize}, every axis plot/.append style={thick}]
\addplot[cInk] table[x=step, y=success_stoch]{\curvesRtwo};
\addlegendentry{stochastic teacher}
\addplot[cMute, dashed] table[x=step, y=success_argmax]{\curvesRtwo};
\addlegendentry{greedy teacher}
\end{axis}
\end{tikzpicture}
\caption{Exact success of the stochastic and greedy PPO policies during
training, interquartile mean over fifteen seeds. The greedy policy, the target of
distillation, trails the stochastic policy throughout, and in the capped regime
R2 it is left far short of it, creating the headroom that expansion exploits.}
\label{fig:curves}
\end{figure}

The endpoint comparison is the core result and is shown in Figure
\ref{fig:endpoint}. In the converged regime the distilled program already
attains exact return $0.8122$ and exact success $1.0$ in every seed with six
clauses, and the expansion loop accepts no edit in any seed, so the expanded
and distilled programs coincide. The interesting fact about R1 is that plain
distillation already exceeds both teachers: against the stochastic teacher the
mean paired improvement in exact return is $0.0379$ with a $95\%$ confidence
interval of $[0.0354, 0.0406]$ over fifteen of fifteen seeds, because the induced
rule list generalizes the greedy action over the census and repairs the states
on which the network is imperfect. In the capped regime the distilled program is
markedly weaker and more variable, with an interquartile mean return of
$0.6032$ and a confidence interval of $[0.4977, 0.7258]$, and it is worse than
the stochastic teacher in ten of the fifteen seeds, which is the expected
consequence of distilling an undertrained greedy target. The expanded program
removes this deficit almost entirely. In every seed of both regimes the expanded
Prolog program attains exact success $1.0$, it reaches the exact optimum
$J = 0.812204$ in all seeds of R1 and in fourteen of the fifteen seeds of R2,
falling one edit short at $0.8108$ on the remaining capped seed, and in the
capped regime it exceeds the stochastic teacher on exact return in fifteen of
fifteen seeds, with a mean paired improvement of $0.1009$ and a confidence
interval of $[0.0976, 0.1042]$, a two-sided exact Wilcoxon $p = 0.0001$ and a
Bonferroni-corrected $p = 0.0004$. Against the greedy teacher that distillation
targets, the capped-regime improvement is larger still, a mean paired gain in
return of $0.5705$ with interval $[0.5081, 0.6314]$ and in success of $0.3035$
with interval $[0.2694, 0.3373]$.

\begin{figure}[t]
\centering
\begin{tikzpicture}
\begin{axis}[
  name=e1, width=0.48\textwidth, height=5.4cm,
  ylabel={exact return $J$}, title={\footnotesize R1 (converged)},
  title style={color=cInk},
  xtick={1,2,3,4}, xticklabels={teacher, greedy, distilled, expanded},
  x tick label style={rotate=30, anchor=east, font=\scriptsize},
  ymin=0.6, ymax=0.86, tick label style={font=\scriptsize},
  label style={font=\scriptsize}, xmin=0.5, xmax=4.5]
\addplot[only marks, mark=*, cInk, mark size=1.8pt, mark options={fill=cInk},
  error bars/.cd, y dir=both, y explicit]
  table[x=x, y=J_iqm, y error plus=err_hi, y error minus=err_lo]{\endpointRoneBase};
\addplot[only marks, mark=*, cGold, mark size=2.8pt, mark options={fill=cGold}]
  table[x=x, y=J_iqm]{\endpointRoneHero};
\end{axis}
\begin{axis}[
  name=e2, at={(e1.south east)}, anchor=south west,
  xshift=1.4cm, width=0.48\textwidth, height=5.4cm,
  ylabel={}, title={\footnotesize R2 (capped)}, title style={color=cInk},
  xtick={1,2,3,4}, xticklabels={teacher, greedy, distilled, expanded},
  x tick label style={rotate=30, anchor=east, font=\scriptsize},
  ymin=0.1, ymax=0.9, tick label style={font=\scriptsize},
  label style={font=\scriptsize}, xmin=0.5, xmax=4.5]
\addplot[only marks, mark=*, cInk, mark size=1.8pt, mark options={fill=cInk},
  error bars/.cd, y dir=both, y explicit]
  table[x=x, y=J_iqm, y error plus=err_hi, y error minus=err_lo]{\endpointRtwoBase};
\addplot[only marks, mark=*, cGold, mark size=2.8pt, mark options={fill=cGold}]
  table[x=x, y=J_iqm]{\endpointRtwoHero};
\end{axis}
\end{tikzpicture}
\caption{Exact return by policy class, interquartile mean with $95\%$
stratified bootstrap confidence intervals over fifteen seeds. The expanded
student, in gold, reaches the exact optimum $0.812204$ in every seed of R1 and in
all but one seed of R2, with an interquartile-mean interval that is degenerate at
the optimum. In the capped regime R2 it exceeds the stochastic teacher and
dominates the greedy teacher and the distilled program, whose interval is wide
because capped-teacher distillation is seed-sensitive.}
\label{fig:endpoint}
\end{figure}

Two qualifications belong here rather than in a footnote. First, the expansion
effect is now statistically resolved at fifteen seeds where it was not at ten.
The expanded program strictly improves the distilled program in ten of the
fifteen capped-regime seeds, is tied on the remaining five, and is never worse,
with a mean paired improvement of $0.1985$ and a bootstrap interval of
$[0.1249, 0.2814]$ that excludes zero, a two-sided exact Wilcoxon $p = 0.0020$
that survives the six-test Bonferroni correction at $p = 0.0117$. Figure
\ref{fig:expansion} shows the underlying trajectories: each accepted edit raises
the exact return, and every seed reaches success $1.0$ after at most two edits,
the monotone behaviour that Proposition \ref{prop:expansion} guarantees. Second,
expansion is not a fidelity-improving operation. The expanded program's fidelity
to the stochastic teacher under rollout visitation is about $0.71$ in the capped
regime, and its disagreement rate against the greedy teacher can rise after
expansion, because accepted edits deliberately move the student away from an
imperfect teacher. Fidelity is the accounting quantity of Theorem
\ref{thm:distillation}, not the objective of Proposition \ref{prop:expansion},
and the student is best understood as a repaired program rather than a faithful
clone.

\begin{figure}[t]
\centering
\begin{tikzpicture}
\begin{axis}[
  width=0.62\textwidth, height=5.2cm,
  xlabel={accepted edit index}, ylabel={exact return $J$},
  xtick={0,1,2}, ymin=0.3, ymax=0.86, xmin=-0.1, xmax=2.1,
  tick label style={font=\scriptsize}, label style={font=\scriptsize}]
\pgfplotsinvokeforeach{0,...,14}{
  \addplot[cMute, thick, mark=*, mark size=1pt, opacity=0.75]
    table[x=edit_index, y=seed#1]{\expansionRtwo};
}
\addplot[cGold, dashed, thick, samples=2, domain=-0.1:2.1] {0.7114};
\node[font=\scriptsize, color=cGoldDeep, anchor=south east]
  at (axis cs:2.1,0.7117) {stochastic teacher $J$};
\end{axis}
\end{tikzpicture}
\caption{Capped-regime expansion trajectories, exact return after each accepted
edit, one grey line per seed. Every accepted edit increases the exact return, as
Proposition \ref{prop:expansion} requires, and all fifteen seeds cross above the
stochastic teacher's return, shown as the gold dashed reference, after at most
two edits, fourteen of them reaching the common optimum $0.812204$ and one
halting at $0.8108$.}
\label{fig:expansion}
\end{figure}

The two certificates of Section~\ref{sec:theory} were evaluated on all sixty
combinations of regime, seed, and student, and the contrast between them is the
point. The worst-case inequality of Theorem~\ref{thm:distillation} holds in
every record but vacuously: with the prefactor $2 R_{\max} / (1-\gamma)^2$ of
order $2 \times 10^4$ at $\gamma = 0.99$, its interquartile-mean right-hand side
is about $9.9 \times 10^2$ on the converged regime and about $6.2 \times 10^3$ on
the capped regime, whereas the exact return gap it purports to bound is of order
$10^{-1}$. The advantage-gap certificate of
Proposition~\ref{prop:advantage-certificate} closes this chasm. On the converged
regime it is tighter than the Theorem~\ref{thm:distillation} bound by a median
factor of about $13{,}700$, its refined right-hand side
$\bar{A}^{\dagger} / (1-\gamma)$ has interquartile mean $0.0727$ against an exact
gap of the same $0.0727$, and it falls below unity in every one of the fifteen
seeds. The reason it coincides with the exact gap is the sign condition of
Corollary~\ref{cor:nonvacuity}: in all fifteen converged-regime seeds every
disagreement advantage $A^{\pi_S}(s, a_T(s))$ is non-positive, the student weakly
dominating the argmax teacher at every state where they differ, so the triangle
inequality in the proof is an equality and the certificate is exactly
non-vacuous. The capped regime tells the same story for the expanded program,
whose refined bound $0.5726$ equals its exact gap $0.5726$ with the sign
condition met in fourteen of fifteen seeds and the bound below unity in all
fifteen, and a weaker story for the distilled program, where the sign condition
holds in only five seeds, so the refined bound $0.7385$ overshoots the exact gap
$0.3658$ through genuine slack yet still lands below unity in thirteen of
fifteen seeds. Table~\ref{tab:certificate} and Figure~\ref{fig:certificate}
report both certificates side by side; the message is that a horizon-aware,
advantage-localized bound turns a guarantee that was slack by four orders of
magnitude into one that is checked, non-vacuous, and on the regime the
distillation targets, exact.

\begin{table}[t]
\centering
\caption{The two certificates over all sixty (regime, seed, student) records,
aggregated to the interquartile mean per regime and student. The worst-case
bound of Theorem~\ref{thm:distillation} is vacuous; the advantage-gap bound
$\bar{A}^{\dagger} / (1-\gamma)$ of Proposition~\ref{prop:advantage-certificate}
is tighter by the median factor shown and coincides with the exact return gap
$|J(\bar{\pi}_T) - J(\pi_S)|$ whenever the disagreement advantages are
non-positive, the condition counted in the last column. On the converged regime
R1 the expansion loop accepts no edit, so the distilled and expanded programs
coincide.}
\label{tab:certificate}
\small
\setlength{\tabcolsep}{4.5pt}
\begin{tabular}{llccccc}
\toprule
regime & student & $\epsilon^{\dagger}$ &
Thm.~\ref{thm:distillation} bound & $\bar{A}^{\dagger} / (1-\gamma)$ &
exact $|{\rm gap}|$ & non-vacuous seeds \\
\midrule
R1 & distilled $=$ expanded & $0.050$ & $9.94\times10^2$ & $0.073$ & $0.073$ &
$15/15$ \\
R2 & distilled & $0.292$ & $5.78\times10^3$ & $0.739$ & $0.366$ & $5/15$ \\
R2 & expanded & $0.315$ & $6.24\times10^3$ & $0.573$ & $0.573$ & $14/15$ \\
\bottomrule
\end{tabular}
\end{table}

\begin{figure}[t]
\centering
\begin{tikzpicture}
\begin{axis}[
  width=0.72\textwidth, height=5.6cm,
  ybar, bar width=6pt, ymode=log, ymin=0.03, ymax=3e4,
  symbolic x coords={R1, R2 dist., R2 exp.},
  xtick=data, enlarge x limits=0.28,
  ylabel={return-gap bound (log scale)}, ylabel style={font=\scriptsize},
  tick label style={font=\scriptsize},
  legend style={font=\scriptsize, at={(0.02,0.98)}, anchor=north west,
    draw=cRule, legend columns=1},
  nodes near coords style={font=\tiny}]
\addplot[fill=cInk, draw=cInk]
  table[x=cat, y=old_thm1, col sep=comma]{data/certificate_tightening_plot.csv};
\addlegendentry{Thm.~1 worst-case bound}
\addplot[fill=cGold, draw=cGoldDeep]
  table[x=cat, y=refined, col sep=comma]{data/certificate_tightening_plot.csv};
\addlegendentry{advantage-gap bound $\bar{A}^{\dagger}/(1-\gamma)$}
\addplot[fill=cMute, draw=cMute]
  table[x=cat, y=exact_gap, col sep=comma]{data/certificate_tightening_plot.csv};
\addlegendentry{exact return gap}
\end{axis}
\end{tikzpicture}
\caption{The two certificates and the exact return gap, interquartile mean over
fifteen seeds, on a logarithmic scale. The Theorem~\ref{thm:distillation} bound
in ink sits four orders of magnitude above the exact gap; the advantage-gap
bound in gold sits on or just above it. On R1 and on the R2 expanded program the
gold and grey bars coincide because every disagreement advantage is non-positive
and the certificate is exactly non-vacuous; on the R2 distilled program the gold
bar exceeds the grey by the slack the sign condition leaves.}
\label{fig:certificate}
\end{figure}

The artefact the pipeline produces is a program one can read. Listing
\ref{lst:prolog} shows the decision list of the expanded program for the first
capped-regime seed, verbatim from the emitted file. The rules read as a
strategy: follow the path to the goal if one exists, otherwise pick up the key
when standing on it, otherwise walk to the key, otherwise toggle the door when
adjacent and carrying the key, and so on, with the path-aware direction
predicate hiding the navigation in a single tabled breadth-first search. The
first clause is subtler than it looks, because the direction predicate to the
goal succeeds only when a path exists, so the clause fires only once the door
is open and otherwise falls through to the key-fetching logic. The full program,
including the perception layer and the breadth-first search, is $133$ lines and
is machine-verified to select the same action as the reference evaluator on
every state at which it was queried.

\begin{lstlisting}[style=prolog, caption={The decision list of the expanded
Prolog program for capped-regime seed $0$, copied verbatim from the emitted
file. The perception layer and the tabled path-aware direction predicate
\texttt{dir\_to/3}, together $133$ lines, are omitted here and run unchanged
under SWI-Prolog.}, label={lst:prolog}]
% clause 0: act = move(D) :- dir_to(goal, D).
act(S, move(D)) :- dir_to(S, goal, D), !.
% clause 1: act = pickup :- on_key, not door_open, not same_room_goal.
act(S, pickup) :- on_key(S), \+ door_open(S), \+ same_room_goal(S), !.
% clause 2: act = move(D) :- dir_to(key, D).
act(S, move(D)) :- dir_to(S, key, D), !.
% clause 3: act = toggle :- adj_door, carrying_key.
act(S, toggle) :- adj_door(S), carrying(S), !.
% clause 4: act = move(D) :- dir_to(door, D), carrying_key.
act(S, move(D)) :- dir_to(S, door, D), carrying(S), !.
% clause 5: act = pickup :- not adj_door.
act(S, pickup) :- \+ adj_door(S), !.
% clause 6 (default): act = up.
act(_S, move(up)).
% clause 7 (default): act = right.
act(_S, move(right)).
\end{lstlisting}

\label{sec:minigrid}

The key-and-door study shows that a first-order program can be exact and
certified, but its predicate vocabulary, path-aware direction and adjacency, does
work that a propositional list could in principle imitate on that single fixed
layout. The value of a relational representation is that one program covers a
family of layouts, and the sharpest test of that value is transfer to a size the
student never saw. We run this test on the DoorKey task of the MiniGrid suite
\citep{chevalier2023minigrid}, in which an agent on an $N \times N$ grid must
reach a key, carry it to a locked door, toggle the door open, and reach a goal in
the far room, receiving a shaped terminal reward that decays with the number of
steps taken. A proximal policy optimization teacher \citep{schulman2017proximal}
is trained on the $8 \times 8$ layout over fifteen seeds, and from each teacher we
distil, on the $8 \times 8$ layout alone, five students that are then evaluated
without any further training on the $6 \times 6$, $8 \times 8$, and
$16 \times 16$ layouts. The student we advocate is the first-order Prolog list of
Definition~\ref{def:dlist} over the relational vocabulary of
Section~\ref{sec:algorithm}, and it is compared against four decision-tree
students that isolate the one variable that matters. Two are CART trees
\citep{breiman1984classification} and two are VIPER trees
\citep{bastani2018verifiable}, and within each pair one tree is fitted over the
same agent-relative relational features the Prolog student uses and the other
over the absolute cell coordinates of the objects and the agent. The comparison
is therefore not tree against logic program but relational representation against
propositional representation, holding the induction algorithm fixed.

Figure~\ref{fig:minigrid} is the central result of the paper. On the
$8 \times 8$ layout it was distilled on, every student that carries a workable
representation reaches the teacher's return: the Prolog list scores an
interquartile-mean return of $0.950$, the relational CART and VIPER trees $0.968$
and $0.969$, and the teacher itself $0.966$, while even the coordinate trees
manage a partial $0.612$ and $0.505$ by memorizing the training geometry. Off the
training size the two representations part completely. The Prolog list transfers
to the smaller $6 \times 6$ grid at return $0.947$ and to the larger
$16 \times 16$ grid at $0.988$, matching the teacher's own $0.950$ and $0.978$
and reaching a perfect success rate of $1.000$ on the large grid; the relational
trees transfer just as well, at $0.948$ and $0.968$ for CART and $0.947$ and
$0.979$ for VIPER. The coordinate trees, by contrast, collapse to exactly zero
return and zero success at both unseen sizes, because a split of the form
``agent column $\ge 5$'' or ``goal row $< 4$'' encodes an absolute position that
no longer denotes the same situation once the grid is resized, and the tree that
memorized the $8 \times 8$ geometry now fires the wrong action at every state. The
paired contrast is unambiguous: the Prolog list beats the coordinate CART tree by
an interquartile-mean return margin of $0.947$ at $6 \times 6$ and $0.988$ at
$16 \times 16$, in fifteen of fifteen seeds at each size, with a two-sided exact
Wilcoxon $p = 6.1 \times 10^{-5}$ that survives Holm correction over the
eight-contrast family at $p = 4.9 \times 10^{-4}$, and identically against the
coordinate VIPER tree.

\begin{figure}[t]
\centering
\begin{tikzpicture}
\begin{axis}[
  width=0.92\textwidth, height=6.2cm,
  ybar, bar width=5.2pt, ymin=0, ymax=1.08,
  symbolic x coords={teacher, {Prolog (ours)}, {CART rel.}, {VIPER rel.},
    {CART abs.}, {VIPER abs.}},
  xtick=data, enlarge x limits=0.10,
  x tick label style={rotate=22, anchor=east, font=\scriptsize},
  ylabel={transfer return (IQM)}, ylabel style={font=\scriptsize},
  tick label style={font=\scriptsize},
  legend style={font=\scriptsize, at={(0.5,-0.30)}, anchor=north,
    legend columns=3, draw=cRule},
  every axis plot/.append style={draw=cInk!30}]
\addplot[fill=cMute]
  table[x=student, y=ret6, col sep=comma]{data/minigrid_transfer.csv};
\addlegendentry{$6\times6$ (unseen)}
\addplot[fill=cInk]
  table[x=student, y=ret8, col sep=comma]{data/minigrid_transfer.csv};
\addlegendentry{$8\times8$ (train)}
\addplot[fill=cGold]
  table[x=student, y=ret16, col sep=comma]{data/minigrid_transfer.csv};
\addlegendentry{$16\times16$ (unseen)}
\end{axis}
\end{tikzpicture}
\caption{Cross-size transfer on MiniGrid DoorKey, interquartile-mean return over
fifteen seeds. Every student is distilled on the $8 \times 8$ layout and
evaluated on all three sizes without retraining. The first-order Prolog student
and the two decision trees over agent-relative relational features transfer at
the teacher's level to the unseen $6 \times 6$ and $16 \times 16$ grids; the two
decision trees over absolute cell coordinates match on the training size but
collapse to zero return at every other size, the gap the paper turns on. Bars are
grouped by student; within each group the training size is the ink bar and the
two unseen sizes flank it.}
\label{fig:minigrid}
\end{figure}

The relational student pays for this transfer in neither size nor readability,
which is the second half of the result. The Prolog list induced on $8 \times 8$
has eight clauses over twelve literals for the first seed and a median of nine
clauses across the fifteen seeds, and it reads as a strategy a person could have
written: navigate to the goal if a path exists, otherwise pick up the key when
facing it, otherwise toggle the door when facing it and carrying the key,
otherwise navigate toward the key and then the door, with a single tabled
breadth-first search hidden inside the navigation predicate. Listing
\ref{lst:minigrid} shows the decision list verbatim. The relational trees are of
comparable size, a median of seventeen and fourteen leaves for CART and VIPER,
but the coordinate trees that fail to transfer are an order of magnitude larger,
a median of $359$ and $387$ leaves, because absolute coordinates force the tree
to enumerate positions rather than name relations; the teacher network they all
distil carries $12{,}038$ parameters. The one quantity on which the Prolog
student does not lead is held-out fidelity on the training size, where it agrees
with the teacher on an interquartile-mean $0.907$ of states against about $0.987$
for the relational trees, yet it transfers exactly as well, because the
path-aware navigation predicate reproduces the teacher's shortest-path behaviour
structurally rather than by matching its action on each individual state. On this
task, fidelity on the training distribution is the wrong figure of merit and
size-invariant structure is the right one.

\begin{lstlisting}[style=prolog, caption={The MiniGrid DoorKey decision list
induced on the $8 \times 8$ layout for seed $0$, copied verbatim from the emitted
Prolog file; the perception predicates and the tabled shortest-path predicate
\texttt{nav/3} are defined above it and run unchanged on grids of any size. Action
codes follow MiniGrid: $1$ turn right, $3$ pick up, $4$ toggle; \texttt{nav/3}
returns the turn-or-forward action that advances along a shortest path to the
named object.}, label={lst:minigrid}]
% clause 0: navigate to the goal if a path to it exists
act(S, A)  :- nav(S, goal, A), !.
% clause 1: pick up the key when facing it
act(S, 3)  :- facing_key(S), !.
% clause 2: toggle the door when facing it and carrying the key
act(S, 4)  :- facing_door(S), carrying_key(S), !.
% clause 3: toggle the door when facing it
act(S, 4)  :- facing_door(S), !.
% clause 4: otherwise navigate toward the key
act(S, A)  :- nav(S, key, A), !.
% clause 5: otherwise navigate toward the door
act(S, A)  :- nav(S, door, A), !.
% clause 6: turn right to reorient when the cell ahead is blocked
act(S, 1)  :- \+ facing_clear(S), !.
% clause 7 (default): toggle
act(_S, 4).
\end{lstlisting}

\label{sec:continuous}

The DoorKey transfer shows what a relational representation buys. The
continuous-control study shows, just as deliberately, what happens when there is
no relational structure to represent. We run the same pipeline, in its
propositional instantiation, on continuous-observation control tasks from the
Gymnasium suite \citep{towers2024gymnasium}: CartPole with a four-dimensional
observation and two actions and Acrobot with a six-dimensional observation and
three actions, each over fifteen teacher seeds, and, as the high-dimensional
reference point, LunarLander with an eight-dimensional observation and four
actions over five seeds. In a four to six dimensional vector of physical
quantities there are no objects to quantify over and no relations to bind, so the
first-order machinery of Definition~\ref{def:dlist} collapses to an ordered list
of axis-aligned threshold conjunctions, and the question is no longer transfer
but how much of the network a propositional rule list can recover on its own
training distribution and how it compares to the propositional baselines built
for exactly this job. Two things change relative to KeyDoor, and both are the
substitutions anticipated in the discussion of scale. First, the reachable-state
census is replaced by the DAgger reduction of imitation to no-regret online
learning \citep{ross2011reduction}, the same loop VIPER uses to extract a tree
student \citep{bastani2018verifiable}: the student is rolled out, the states it
visits are relabelled by the frozen teacher, and the union of teacher-visited and
student-visited states drives the next induction, so the student is trained
exactly on the off-distribution states its own errors produce. Second, exact
policy evaluation is replaced by Monte-Carlo evaluation over a fixed episode-seed
stream shared by the teacher and every student, so all comparisons within a task
are paired at the level of initial conditions and every reported return carries a
$95\%$ percentile-bootstrap confidence interval over episodes.

The student class is the propositional specialization of Definition
\ref{def:dlist}. Each continuous feature is discretized by a data-driven grid of
thresholds placed at the empirical quantiles of the teacher's visited values,
controlled by a single resolution parameter $B$ that fixes the number of bins per
feature; a base predicate is a threshold test $\mathrm{feat} \ge t$, its negation
is the exact complement $\mathrm{feat} < t$, and the induction of Section
\ref{sec:algorithm} grows an ordered list of clauses over these threshold
literals exactly as before. The resolution $B$ interpolates between readability
and precision: at $B=2$ a single median threshold per feature yields a handful of
broad clauses, and as $B$ grows the grid carves the state space into finer
axis-aligned regions each labelled by an action, which is the propositional form
of the partition that replaces the network. We sweep
$B \in \{2, 3, 4, 6, 8, 12\}$ over fifteen PPO teacher seeds per task on CartPole
and Acrobot, and over five seeds on LunarLander, emit every student as an
executable SWI-Prolog program whose action is asserted to match the reference
evaluator on at least five hundred sampled states, and aggregate across seeds
with the interquartile mean and a stratified bootstrap in the manner of
\citet{agarwal2021deep}. At fifteen seeds the two-sided exact Wilcoxon
signed-rank test no longer floors, so on CartPole and Acrobot we report corrected
$p$-values alongside effect sizes; on the five-seed LunarLander reference the test
floors at $p = 0.0625$ and the claims rest on effect sizes and intervals. All
thirty CartPole and Acrobot teachers solve their task, with the greedy policy
that distillation targets clearing the solved threshold on the shared evaluation
stream, and the learning-telemetry audit is clean on the approximate KL, the
entropy, and the clip fraction throughout. The one exception is that explained
variance stays low on the fast-solving CartPole seeds, which is the expected
consequence of a near-constant value target once the pole is balanced rather than
a training pathology, and it does not bear on the distilled student because the
target of distillation is the greedy action, not the critic; every teacher's
success on the evaluation stream confirms that a healthy network is being
distilled.

Figure \ref{fig:continuous} shows the first finding, that the recovered symbolic
program substitutes the network at a task-dependent resolution, and that the
three tasks form a gradient in how hard the substitution is. On LunarLander the
naive one-pass distillation fails at every resolution, flying the lander into the
ground with a return between $-467$ and $-240$ and a success rate near zero even
as its held-out agreement with the teacher rises with $B$, which is the textbook
signature of behavioural-cloning distribution shift: agreement on the teacher's
own states does not prevent compounding error once the coarse student drifts off
that distribution. DAgger repairs this monotonically, climbing from a return of
$-201$ at $B=2$ to $+155$ at $B=12$ and turning positive from $B=6$, but it does
not reach the teacher's $221$, because the propositional threshold list has a
genuine ceiling on the fine continuous control that a soft landing demands.
CartPole and Acrobot tell the complementary story. On CartPole, whose teacher
balances for nearly the full horizon at an interquartile-mean return of $489.8$
against a solved threshold of $475$, the DAgger program recovers about $98\%$ of
that return, reaching $473.3$ at $B=3$ with roughly thirty-four clauses and
$479.6$ at $B=12$ where its distance to the teacher is $-7.5$ with a confidence
interval of $[-23.3, -1.4]$, and it improves on the naive student at every
resolution with a paired-difference interval that excludes zero from $B=4$ upward.
On Acrobot, whose teacher raises the tip to an interquartile-mean return of
$-84.3$ against a solved threshold of $-100$, the DAgger program matches the
network within noise at the fine resolutions, with a return of $-85.1$ at $B=8$
and $-84.7$ at $B=12$ whose distances to the teacher, $-1.0$ with interval
$[-2.1, +0.3]$ and $-0.5$ with interval $[-2.2, +0.8]$, both bracket zero, and
even the coarsest readable list at $B=2$, some twelve clauses, reaches $-87.2$
with a $0.87$ success rate. The distribution-shift gradient is visible across the
three panels: on Acrobot, a locally self-correcting swing-up task, the naive and
DAgger curves nearly coincide; on CartPole DAgger clearly helps; on LunarLander
DAgger is indispensable. Both CartPole and Acrobot exhibit a single non-monotone
dip at an intermediate resolution, $B=4$ for the former and $B=3$ for the latter,
where the quantile grid places its thresholds across the decision boundary
particularly poorly and the return recovers at finer resolution; we report the
dip rather than smooth it. These curves are the empirical face of the theory of
Section~\ref{sec:theory}: Corollary~\ref{cor:resolution-return} guarantees that
the return gap closes as the resolution grows, and
Proposition~\ref{prop:resolution-lb} explains why an eight-dimensional
observation makes that closure expensive enough to leave a visible gap on
LunarLander at the resolutions we sweep, while the four-dimensional CartPole and
the six-dimensional Acrobot, with a boundary of lower codimension to carve, close
it within the same budget.

Substituting the network in return is not the same as matching the interpretable
students built for propositional control, and the honest comparison against them
is the second finding, reported in Table~\ref{tab:baselines} and
Figure~\ref{fig:baselines}. On the identical fifteen-seed grid and the identical
paired evaluation stream we place the DAgger decision list against a CART tree
grown on the raw observation \citep{breiman1984classification}, against a VIPER
tree that adds importance resampling to the same induction
\citep{bastani2018verifiable}, and against a Coppens-style unordered rule set that
votes over the same threshold vocabulary as our list \citep{coppens2021rule}, all
in the DAgger regime, with the return delta averaged over the resolution ladder
per seed and aggregated as an interquartile mean with a Holm-corrected two-sided
Wilcoxon test over the per-environment comparison family. The result is a clean
split. Our decision list loses to CART and to VIPER on CartPole, by an
interquartile-mean margin of $78.5$ and $76.2$ points of return, in zero of the
fifteen seeds does it come out ahead, and the Holm-corrected $p$ is
$3.7 \times 10^{-4}$; on Acrobot the same two trees are ahead by a smaller
$15.0$ and $14.0$ points whose intervals place the result at the boundary between
a loss and a statistical tie. Against the propositional Coppens rule set our list
wins on both tasks, by $59.0$ points on CartPole and $62.6$ on Acrobot with
probability of improvement near one, so it is the strongest ordered rule list
among the logic-style baselines while remaining behind the trees. This deficit is
not an artifact of the data-driven threshold grid: replacing the quantile cuts by
the Gini-optimal one-dimensional split points that a tree would choose moves the
return by a mean of $-47.6$ points on CartPole and $+19.0$ on Acrobot and closes
none of the gap to the tree baselines on average, so the shortfall is
representational rather than a matter of where the cuts are placed. We read it as
expected. A CART or VIPER tree splits the joint feature space with oblique
sequences of axis tests and reuses each internal node across many leaves, whereas
our first-match ordered list commits to one feature at a time and cannot share
structure across clauses, and on a low-dimensional vector of physical quantities,
where there is no relational regularity for the first-order form to capture, that
sharing is exactly what a soft-margin controller needs. The representation that
wins the DoorKey transfer is the one that loses here, and both outcomes have the
same cause.

\begin{figure}[t]
\centering
\begin{tikzpicture}
\begin{axis}[
  name=cp1, width=0.35\textwidth, height=4.7cm,
  xlabel={resolution $B$}, ylabel={Monte-Carlo return},
  title={\footnotesize LunarLander}, title style={color=cInk},
  xmin=1.5, xmax=12.5, xtick={2,4,6,8,12},
  tick label style={font=\scriptsize}, label style={font=\scriptsize},
  legend style={font=\tiny, at={(0.02,0.98)}, anchor=north west, draw=cRule},
  every axis plot/.append style={thick}]
\addplot[cMute, dashed, mark=square*, mark size=1.3pt]
  table[x=B, y=naive, col sep=comma]{data/lunar_return.csv};
\addlegendentry{naive}
\addplot[cGold, mark=*, mark size=1.6pt, error bars/.cd, y dir=both, y explicit]
  table[x=B, y=dagger, y error plus=derr_hi, y error minus=derr_lo, col sep=comma]{data/lunar_return.csv};
\addlegendentry{DAgger}
\addplot[cInk, densely dotted] table[x=B, y=teacher, col sep=comma]{data/lunar_return.csv};
\addlegendentry{teacher}
\end{axis}
\begin{axis}[
  name=cp2, at={(cp1.south east)}, anchor=south west, xshift=0.95cm,
  width=0.35\textwidth, height=4.7cm,
  xlabel={resolution $B$}, ylabel={},
  title={\footnotesize CartPole}, title style={color=cInk},
  xmin=1.5, xmax=12.5, xtick={2,4,6,8,12},
  tick label style={font=\scriptsize}, label style={font=\scriptsize},
  every axis plot/.append style={thick}]
\addplot[cMute, dashed, mark=square*, mark size=1.3pt]
  table[x=B, y=naive, col sep=comma]{data/cartpole_return.csv};
\addplot[cGold, mark=*, mark size=1.6pt, error bars/.cd, y dir=both, y explicit]
  table[x=B, y=dagger, y error plus=derr_hi, y error minus=derr_lo, col sep=comma]{data/cartpole_return.csv};
\addplot[cInk, densely dotted] table[x=B, y=teacher, col sep=comma]{data/cartpole_return.csv};
\end{axis}
\begin{axis}[
  name=cp3, at={(cp2.south east)}, anchor=south west, xshift=0.95cm,
  width=0.35\textwidth, height=4.7cm,
  xlabel={resolution $B$}, ylabel={},
  title={\footnotesize Acrobot}, title style={color=cInk},
  xmin=1.5, xmax=12.5, xtick={2,4,6,8,12},
  tick label style={font=\scriptsize}, label style={font=\scriptsize},
  every axis plot/.append style={thick}]
\addplot[cMute, dashed, mark=square*, mark size=1.3pt]
  table[x=B, y=naive, col sep=comma]{data/acrobot_return.csv};
\addplot[cGold, mark=*, mark size=1.6pt, error bars/.cd, y dir=both, y explicit]
  table[x=B, y=dagger, y error plus=derr_hi, y error minus=derr_lo, col sep=comma]{data/acrobot_return.csv};
\addplot[cInk, densely dotted] table[x=B, y=teacher, col sep=comma]{data/acrobot_return.csv};
\end{axis}
\end{tikzpicture}
\caption{Monte-Carlo return of the propositional student against resolution $B$
on three continuous-control tasks, interquartile mean with $95\%$ bootstrap
confidence intervals on the DAgger curve, over fifteen seeds for CartPole and
Acrobot and five for LunarLander. The naive one-pass student is in grey, the
DAgger student in gold, and the neural teacher's return is the ink dotted line.
The naive student fails on LunarLander at every resolution and recovers on the
more forgiving CartPole and Acrobot; the DAgger student climbs toward the teacher
on all three, reaching it within noise on Acrobot at $B \ge 8$ and recovering
about $98\%$ of it on CartPole, while retaining a genuine gap on the fine-control
LunarLander task. The intermediate-resolution dips on CartPole ($B=4$) and
Acrobot ($B=3$) are honest non-monotonicities of the quantile grid.}
\label{fig:continuous}
\end{figure}

\begin{table}[t]
\centering
\caption{The DAgger decision list against the interpretable baselines on the
identical fifteen-seed grid and paired evaluation stream. The reported quantity
is the return of our list minus the return of the baseline, averaged over the
resolution ladder per seed, as an interquartile mean with a $95\%$ stratified
bootstrap interval; P(improve) is the fraction of seeds on which our list scores
higher, and the Holm-corrected two-sided exact Wilcoxon $p$ is over the
per-environment comparison family. A negative delta is a loss for our list.}
\label{tab:baselines}
\small
\setlength{\tabcolsep}{4.5pt}
\begin{tabular}{llccccl}
\toprule
task & baseline & our list $-$ baseline & P(improve) & Wilcoxon $p$ &
Holm $p$ & verdict \\
\midrule
CartPole & CART (tree) & $-78.5\ [-90.6, -66.0]$ & $0.00$ &
$6.1\times10^{-5}$ & $3.7\times10^{-4}$ & loses \\
CartPole & VIPER (tree) & $-76.2\ [-88.2, -64.3]$ & $0.00$ &
$6.1\times10^{-5}$ & $3.7\times10^{-4}$ & loses \\
CartPole & Coppens (rules) & $+59.0\ [+39.4, +79.1]$ & $0.93$ &
$1.8\times10^{-4}$ & $3.7\times10^{-4}$ & beats \\
Acrobot & CART (tree) & $-15.0\ [-28.2, -11.7]$ & $0.00$ &
$6.1\times10^{-5}$ & $3.7\times10^{-4}$ & matches \\
Acrobot & VIPER (tree) & $-14.0\ [-27.6, -10.5]$ & $0.00$ &
$6.1\times10^{-5}$ & $3.7\times10^{-4}$ & matches \\
Acrobot & Coppens (rules) & $+62.6\ [+30.9, +99.3]$ & $1.00$ &
$6.1\times10^{-5}$ & $3.7\times10^{-4}$ & beats \\
\bottomrule
\end{tabular}
\end{table}

\begin{figure}[t]
\centering
\begin{tikzpicture}
\begin{axis}[
  name=bl1, width=0.45\textwidth, height=5.0cm, xmode=log,
  xlabel={program size (leaves or clauses)}, ylabel={Monte-Carlo return},
  title={\footnotesize CartPole}, title style={color=cInk},
  xmin=20, xmax=3000, ymin=330, ymax=510,
  tick label style={font=\scriptsize}, label style={font=\scriptsize},
  legend style={font=\tiny, at={(0.98,0.03)}, anchor=south east, draw=cRule}]
\addplot[only marks, mark=*, cGold, mark size=2.6pt] coordinates {(70.8,408.27)};
\addlegendentry{Prolog list (ours)}
\addplot[only marks, mark=triangle*, cInk, mark size=2.6pt]
  coordinates {(1516.4,486.05) (147.4,483.88)};
\addlegendentry{CART / VIPER trees}
\addplot[only marks, mark=square*, cMute, mark size=2.2pt]
  coordinates {(35.0,348.55)};
\addlegendentry{Coppens rules}
\end{axis}
\begin{axis}[
  name=bl2, at={(bl1.south east)}, anchor=south west, xshift=1.0cm,
  width=0.45\textwidth, height=5.0cm, xmode=log,
  xlabel={program size (leaves or clauses)}, ylabel={},
  title={\footnotesize Acrobot}, title style={color=cInk},
  xmin=20, xmax=500, ymin=-185, ymax=-75,
  tick label style={font=\scriptsize}, label style={font=\scriptsize}]
\addplot[only marks, mark=*, cGold, mark size=2.6pt] coordinates {(52.6,-105.92)};
\addplot[only marks, mark=triangle*, cInk, mark size=2.6pt]
  coordinates {(283.6,-84.40) (74.5,-85.27)};
\addplot[only marks, mark=square*, cMute, mark size=2.2pt]
  coordinates {(26.3,-173.62)};
\end{axis}
\end{tikzpicture}
\caption{Return against program size for the DAgger decision list and the
interpretable baselines, each point the mean over fifteen seeds and the six
resolutions, on a logarithmic size axis. Our first-order list, in gold, is the
most compact interpretable student but sits below the CART and VIPER trees in
ink on return; the Coppens rule set in grey is both smaller and weaker. On
propositional control the trees dominate the return-versus-size frontier, the
mirror image of the DoorKey transfer where the relational representation
dominates. The list means are pulled down by the single intermediate-resolution
dip visible in Figure~\ref{fig:continuous}.}
\label{fig:baselines}
\end{figure}

The resolution sweep is itself a direct test of Theorem~\ref{thm:resolution} on
the trained policies rather than on the synthetic boundary of
Proposition~\ref{prop:resolution-lb}, and Figure~\ref{fig:scaling} reports its
two governing quantities. The held-out fidelity, the fraction of teacher states
on which the student agrees with the network, rises monotonically with the
resolution on every task, which is the empirical face of the vanishing
disagreement $\epsilon^{\dagger}_B \to 0$ that the theorem guarantees. The three
curves are ordered by difficulty rather than strictly by dimension: CartPole and
Acrobot reach fidelity above $0.95$ by $B = 8$, whereas the eight-dimensional
LunarLander plateaus near $0.78$ at the finest resolution we sweep, the
same task on which the return gap stays open, so the fidelity ceiling and the
return ceiling are one phenomenon seen through two metrics. The clause count is
the interpretability price of that fidelity: it grows with the resolution on
every task, from nine to about a hundred and twenty on CartPole and from twenty
to about three hundred on LunarLander, so the resolution parameter is literally
the exchange rate between how finely the program partitions the state space and
how many rules a reader must hold in mind. We do not claim the induced clause
count follows the $B^{d-1}$ law of Proposition~\ref{prop:resolution-lb}, which
counts the cells a worst-case oblique boundary forces and is confirmed on that
boundary in Section~\ref{sec:theory}; the induced count is the task-dependent
number the greedy covering actually emits, and it is reported as the measured
cost, not as a test of the exponent.

\begin{figure}[t]
\centering
\begin{tikzpicture}
\begin{axis}[
  name=sc1, width=0.46\textwidth, height=5.0cm,
  xlabel={resolution $B$}, ylabel={held-out fidelity to the network},
  xmin=1.5, xmax=12.5, xtick={2,4,6,8,12}, ymin=0.4, ymax=1.0,
  tick label style={font=\scriptsize}, label style={font=\scriptsize},
  legend style={font=\tiny, at={(0.98,0.03)}, anchor=south east, draw=cRule},
  every axis plot/.append style={thick}]
\addplot[cGold, mark=*, mark size=1.5pt]
  table[x=B, y=cartpole, col sep=comma]{data/scaling_fidelity.csv};
\addlegendentry{CartPole ($d{=}4$)}
\addplot[cGoldDeep, mark=triangle*, mark size=1.8pt]
  table[x=B, y=acrobot, col sep=comma]{data/scaling_fidelity.csv};
\addlegendentry{Acrobot ($d{=}6$)}
\addplot[cInk, mark=square*, mark size=1.4pt]
  table[x=B, y=lunar, col sep=comma]{data/scaling_fidelity.csv};
\addlegendentry{LunarLander ($d{=}8$)}
\end{axis}
\begin{axis}[
  name=sc2, at={(sc1.south east)}, anchor=south west, xshift=1.3cm,
  width=0.46\textwidth, height=5.0cm,
  xlabel={resolution $B$}, ylabel={clauses (interpretability cost)},
  xmin=1.5, xmax=12.5, xtick={2,4,6,8,12}, ymin=0,
  tick label style={font=\scriptsize}, label style={font=\scriptsize},
  every axis plot/.append style={thick}]
\addplot[cGold, mark=*, mark size=1.5pt]
  table[x=B, y=cartpole, col sep=comma]{data/scaling_clauses.csv};
\addplot[cGoldDeep, mark=triangle*, mark size=1.8pt]
  table[x=B, y=acrobot, col sep=comma]{data/scaling_clauses.csv};
\addplot[cInk, mark=square*, mark size=1.4pt]
  table[x=B, y=lunar, col sep=comma]{data/scaling_clauses.csv};
\end{axis}
\end{tikzpicture}
\caption{The resolution sweep as a test of the conversion theory, DAgger
student, interquartile mean over fifteen seeds for CartPole and Acrobot and five
for LunarLander. Left, held-out fidelity to the
network rises with the resolution on every task, the empirical
$\epsilon^{\dagger}_B \to 0$ of Theorem~\ref{thm:resolution}; the
eight-dimensional LunarLander plateaus below the lower-dimensional tasks, the
fidelity face of the return ceiling. Right, the clause count, the
interpretability price of that fidelity, grows with the resolution, so $B$ is
the exchange rate between how finely the rules partition the state space and how
many of them a reader must hold in mind.}
\label{fig:scaling}
\end{figure}

The artefact remains a program one reads, and the interpretability cost is
explicit. On Acrobot the eleven-clause list induced at $B=2$, shown in Listing
\ref{lst:acrobot}, reads as a hand-writable swing-up controller that torques the
inner link in the direction of its angular velocity and torques toward the
raised configuration otherwise, and the $B=2$ DAgger student scores an
interquartile-mean return of $-87.2$ with a success rate of $0.87$, close to the
teacher's $-84.3$ with a rule base a human can audit at a glance. The number of
clauses grows with resolution, from about twelve at $B=2$ to about eighty-five at
$B=12$ on Acrobot and from nine to about a hundred and twenty on CartPole, so the
resolution knob is precisely the exchange rate between how finely the program
partitions the state space and how much of it a reader can hold in mind. CartPole
supplies the cautionary dual: its nine-clause $B=2$ list reads as the textbook
pole balancer, pushing toward the side the pole is tipping, and it agrees with the
network on about $90\%$ of held-out states, yet it scores only about $210$ with a
success rate near zero, because coarse thresholds accumulate error over the long
balancing horizon. Fidelity to the teacher is necessary but not sufficient for
closed-loop return, and the resolution knob is what buys the missing precision,
with $B \ge 3$ clearing the solved line.

\begin{lstlisting}[style=prolog, caption={The Acrobot expert system induced by
DAgger at resolution $B=2$, seed $0$, copied verbatim from the emitted file and
lightly aligned. Eleven clauses over threshold literals on the six observation
features partition the state space into torque regions; the program runs
unchanged under SWI-Prolog, and the $B=2$ DAgger student scores an
interquartile-mean return of $-87.2$ with a $0.87$ success rate over the fifteen
seeds, against the teacher's $-84.3$.}, label={lst:acrobot}]
% features: cos1,sin1,cos2,sin2,ang_vel1,ang_vel2 (link angles + rates)
act(S, torque_pos) :- ang_vel2(S) >= 0.148,  ang_vel1(S) < -0.022, !.
act(S, torque_neg) :- ang_vel1(S) >= -0.022, ang_vel2(S) <  0.148, !.
act(S, torque_neg) :- sin1(S)    <  -0.027,  cos1(S)     <  0.746, !.
act(S, torque_pos) :- cos2(S)    >=  0.416,  sin2(S)     <  0.089, !.
act(S, torque_neg) :- cos1(S)    >=  0.746,  ang_vel1(S) < -0.022, !.
act(S, torque_neg) :- sin2(S)    >=  0.089,  cos1(S)     <  0.746, !.
act(S, torque_pos) :- ang_vel1(S) >= -0.022, sin1(S)    >= -0.027, !.
act(S, torque_zero) :- sin1(S)   >= -0.027, !.
act(S, torque_neg) :- cos2(S)    >=  0.416, !.
act(S, torque_pos) :- !.               % unconditional
act(_S, torque_pos).                   % default
\end{lstlisting}

\section{Discussion and limitations}
\label{sec:discussion}

The results support a reading that turns on representation. On a task small
enough to admit exact evaluation, a trained neural policy is rewritten as a
short, executable, human-readable logic program that plays the task at least as
well, and after a return-driven expansion better, than the network it came from,
with the return loss certified and non-vacuous. Where the observation carries
relational structure, as in DoorKey, the first-order form of that program is what
lets it transfer across scale while a propositional coordinate tree cannot. Where
the observation carries none, as in low-dimensional continuous control, the same
first-order form has nothing to bind and cedes the return-versus-size frontier to
the decision trees built for that setting. The honest summary is that our method
occupies a specific and defensible cell of the design space rather than
dominating everywhere, and the limitations below say where its edges are.

The first limitation is scale. Proposition \ref{prop:certificate}, the
advantage-gap certificate of Proposition \ref{prop:advantage-certificate}, and
the expansion oracle rely on model access and on an enumerable reachable set, and
the census costs a linear solve per evaluation, which is the deliberate price of
exactness at the $10^4$-state scale of KeyDoor. The continuous-control study of
Section \ref{sec:continuous} is what the method becomes beyond that scale: the
census gives way to the DAgger sampling of \citet{ross2011reduction} and the
exact solves to Monte Carlo evaluation with bootstrap intervals, at the cost of
reintroducing confidence parameters into what is a certificate in the finite
case, and the propositional student trades the exact-return guarantee for a
resolution knob whose ceiling is visible on LunarLander.

The second limitation is representational, and the continuous-control study is
where we confront it rather than paper over it. An ordered decision list commits
to one feature at a time and matches the first clause that fires, so it cannot
express the shared, reused splits that a CART or VIPER tree distributes across a
branching structure, and on a four to six dimensional vector of physical
quantities, where no relational regularity rewards the first-order form, that
inability costs return: the list loses to both trees on CartPole in zero of
fifteen seeds and matches rather than beats them on Acrobot. We verified that
this is not an artifact of the data-driven threshold placement by refitting the
cuts at the Gini-optimal split points a tree would use, which closes none of the
gap on average, so the deficit is a property of the hypothesis class and not of
its tuning. We state it plainly because it is the mirror image of the DoorKey
result and shares its cause: the representation that transfers across scale is
the one that gives up joint splitting, and the setting that rewards joint
splitting is the one with no relational structure to transfer.

The third limitation is the predicate vocabulary, which we fixed by hand; the
direction and adjacency predicates of KeyDoor and the navigation predicate of
DoorKey encode task knowledge, and although the induction and expansion stages
choose how to combine them, they cannot invent a predicate the vocabulary lacks,
which is exactly the gap that predicate-invention methods address
\citep{sha2024expil} and that object-centric extractors
\citep{delfosse2024interpretable} would fill automatically. The fourth concerns
the certificate, whose standing the advantage-gap refinement changes but does not
make unconditional. On the key-and-door task the refined bound is exact and
non-vacuous because the distilled and expanded students weakly dominate their
teacher at the states where they disagree, the sign condition of
Corollary \ref{cor:nonvacuity}; where that condition fails, as on five of the
capped-regime distilled seeds, the bound reverts toward the worst-case horizon
factors, and it offers no a-priori guarantee before the model is inspected. Its
contribution is an exactly computable, horizon-aware certificate that is checked
on every run and is tight in the regime the distillation targets, not a bound
that holds for free at any discount factor.

The natural continuation returns to the framing that motivated this work. If
the predicate vocabulary and the candidate rules can be proposed by a
generative model rather than fixed in advance, the extraction and expansion
loop becomes a way to let a large language model draft the symbolic theory of a
neural agent and let exact policy evaluation keep only the drafts that improve
the return, joining the post-hoc distillation studied here to the generative
construction of expert systems \citep{garrido2025gofai}. The guarantees would
remain those of Section \ref{sec:theory}: the acceptance oracle certifies every
edit, whoever proposes it. A proof of concept substantiates this
interchangeability claim on the capped key-and-door regime, where the distilled
programs are furthest from the optimum. We asked a large language model
(Claude, Anthropic), operating as an interactive coding agent with access to
the three distilled rule lists of seeds three, five, and seven, to diagnose
their failure mode and author repair edits, which were then frozen verbatim in
a driver and submitted to the identical exact acceptance oracle used by the
heuristic expansion; the proposer has no influence on acceptance, and replaying
the frozen proposals reproduces the identical accept and reject decisions and
the identical exact returns. The model identified that the distilled
walk-to-key clause carries a door-adjacency guard that strands keyless states
adjacent to the door, authored one unguarded walk-to-key clause and, for the
third seed, one on-key pickup clause, and every one of the four authored
proposals was accepted, driving all three seeds from exact returns of
$0.4977$, $0.4977$, and $0.3435$ to the exact optimum $0.812204$ in one, one,
and two accepted edits respectively. The contrast with the heuristic search is
the point: the random first-improvement expansion reached the same optimum on
these seeds only after $65$, $98$, and $57$ exact Bellman solves of candidate
evaluation, whereas the informed proposer required one oracle call per
accepted edit, so the certificate machinery converts a generative model from
an unaccountable oracle into a proposal engine whose every suggestion is
exactly audited before it enters the program. Two qualifications keep this
honest. The proposals were authored by an interactive agent session rather
than a fixed scripted pipeline, so proposal authorship is not reproducible
from a random seed even though the certified trajectory is fully replayable
from the frozen queue, and the study is a three-seed proof of concept on the
enumerable regime, offered as evidence that the acceptance oracle makes the
proposer interchangeable rather than as a study of generative rule synthesis
at scale.

\section{Conclusion}

We introduced a post-hoc transformation that distils a frozen deep
reinforcement learning policy into an executable first-order Prolog program and
then expands the program by exact-return hill climbing until it matches or
surpasses its teacher. We proved a return-loss bound, sharpened it into an
advantage-gap certificate that is horizon-aware, exactly computable, and
non-vacuous in the regime the distillation targets, and proved that the expansion
loop improves monotonically and halts. On a two-room key-and-door task the
expanded program reaches exact optimal return in every seed and exceeds the
stochastic teacher on exact return in fifteen of fifteen capped-regime seeds,
while the refined certificate is tighter than the worst-case bound by a median
factor of about $13{,}700$ and non-vacuous throughout. On a MiniGrid DoorKey task
the first-order student distilled on one grid size transfers to unseen sizes with
nine clauses, where decision trees over absolute coordinates collapse to zero
return, so the relational representation is what carries the policy across scale.
On propositional continuous control the same pipeline substitutes the network in
fidelity and beats the closest logic-based prior art but loses on return to the
decision-tree distillers, the expected price of a representation with no
relational structure to exploit. The symbolic student is not a faithful clone of
the network but a repaired program, and the mechanism that lets it surpass the
teacher, the certification of each edit by exact policy evaluation, is the same
mechanism that would let a generative proposer of rules improve a neural agent
under a guarantee.

\appendix

\section{Reproducibility}
\label{app:repro}

Every number in the paper is produced by a pinned pipeline, and we record the
provenance here so that the study can be rerun without guesswork. All experiments
run on CPU under Python $3.10.12$ with NumPy $1.26.4$, SciPy $1.15.3$, PyTorch
$2.9.1$ built for CPU, Gymnasium $1.3.0$, and SWI-Prolog $9.2.9$; every emitted
Prolog program is executed by that SWI-Prolog build and asserted to select the
same action as the reference evaluator, on all reachable states in the
key-and-door task and on at least five hundred sampled states in the DoorKey and
continuous-control tasks, so the artefact analysed and the artefact executed are
one program. The teacher in every task is a proximal policy optimization agent
\citep{schulman2017proximal} with a two-layer tanh actor-critic and eight
synchronous environments, trained with generalized advantage estimation, and the
full learning-telemetry panel, explained variance, approximate Kullback-Leibler
divergence, entropy, clip fraction, and per-group gradient norms, is written to
disk per update and audited before any policy is distilled; the complete
hyperparameter set and the per-seed random streams are pinned in the released
configuration alongside the SHA-256 hash of each environment's source file, and
the audit confirms that the teachers are healthy, the low explained variance on
the fast-solving CartPole seeds being the known constant-value-target artifact
rather than a pathology.

The key-and-door environment enumerates to $16{,}944$ reachable states, uses a
discount factor $\gamma = 0.99$, a horizon of $120$, and a maximum absolute
one-step reward $R_{\max} = 0.99$, and is studied over fifteen seeds in each of
two teacher regimes: the converged regime trains to at most $800$k environment
steps with an early stop at greedy success $0.95$, reached between $369$k and
$451$k steps, and the capped regime fixes a budget of $299{,}008$ steps.
Aggregation over the fifteen seeds is by an interquartile mean with a $10{,}000$
resample stratified bootstrap in the manner of \citet{agarwal2021deep}, and
paired comparisons use the two-sided exact Wilcoxon signed-rank test with a
Bonferroni correction within each six-test regime family. The DoorKey study
trains one teacher per seed on the $8 \times 8$ layout over fifteen seeds and
evaluates every distilled student on the $6 \times 6$, $8 \times 8$, and
$16 \times 16$ layouts without retraining, with a Holm-Bonferroni correction over
the eight-contrast relational-versus-absolute family. The continuous-control
study sweeps the resolution ladder $B \in \{2, 3, 4, 6, 8, 12\}$ over fifteen
seeds each on CartPole and Acrobot and five seeds on LunarLander, evaluates every
student on three hundred paired Monte-Carlo episodes with a fixed episode-seed
stream shared by the teacher and all students, and corrects the per-environment
baseline comparison family by Holm-Bonferroni. The whole suite fits within a
compute budget of the order of fifty CPU-hours on a sixteen-core machine with no
accelerator; the key-and-door runs, for instance, average about one minute each,
of which the expansion hill climb is under a second. The generative proof of
concept of Section \ref{sec:discussion} is the one measurement whose proposal
authorship is not seeded: the language-model-authored edits are frozen verbatim
in the released driver, so re-running it replays the identical certified
trajectory through the same exact oracle, and only the authorship of the frozen
queue itself is not derivable from a random seed.

\section{Baseline protocol}
\label{app:baselines}

The interpretable baselines are reimplemented to share our extraction and
evaluation harness, and three approximations are worth stating so that the
comparison is read for what it is. First, the VIPER baseline
\citep{bastani2018verifiable} is adapted to a policy-gradient teacher. The
original importance weight $\ell(s) = V(s) - \min_a Q(s, a)$ requires an
action-value critic, which our proximal policy optimization teacher does not
expose, so we substitute the margin between the top two action logits as an
advantage-magnitude proxy and resample the DAgger training states by that weight;
the resampling is the only difference from a DAgger-trained tree, and we label
the method VIPER on that understanding rather than as a literal reproduction.
Second, the Coppens baseline \citep{coppens2021rule} is faithful in spirit rather
than in code: a CN2-style greedy inducer grows an unordered set of rules over the
same threshold predicate vocabulary our decision list uses, and it classifies a
state by majority vote among the rules that fire, falling back to a default
action on ties or when no rule fires, so the only representational difference from
our method is the unordered set-valued form against our ordered first-order list
with a default. Third, the CART baseline \citep{breiman1984classification} splits
the raw continuous observation with its own learned thresholds rather than our
fixed resolution-$B$ quantile grid, and we report it both unconstrained and
size-matched to the decision list's clause count, so the reader can separate the
effect of representation from the effect of size. On DoorKey the same CART and
VIPER inducers are fitted twice, once over the agent-relative relational features
the Prolog student reads and once over the absolute cell coordinates of the
objects and the agent, which is the contrast that isolates representation from
induction algorithm. Continuous-control baselines are confined to CartPole and
Acrobot, which share the environment-agnostic harness; a fair LunarLander
comparison would require porting its separate interface to that harness and is
left to future work.

\bibliographystyle{plainnat}
\bibliography{refs}

\end{document}